\documentclass{article}

\usepackage[final]{staix_2026}         

\usepackage[utf8]{inputenc}
\usepackage[T1]{fontenc}
\usepackage[
  colorlinks=true,
  linkcolor=linkblue,
  citecolor=linkblue,
  urlcolor=linkblue
]{hyperref}
\usepackage{url}
\usepackage{booktabs}
\usepackage{amsfonts}
\usepackage{amsmath,amssymb,amsthm}
\usepackage{nicefrac}
\usepackage{microtype}
\usepackage{tabularx}
\usepackage{xcolor}
\definecolor{linkblue}{RGB}{0,0,150}\usepackage{graphicx}
\usepackage{multirow}
\usepackage{algorithm}
\usepackage{algorithmicx}
\usepackage{algpseudocode}
\usepackage[capitalize,nameinlink]{cleveref}
\newtheorem{proposition}{Proposition}
\newtheorem{corollary}{Corollary}
\newtheorem{assumption}{Assumption}

\newcommand{\E}{\mathbb{E}}
\newcommand{\Var}{\mathrm{Var}}
\newcommand{\R}{\mathbb{R}}
\newcommand{\ST}{\mathrm{ST}}
\newcommand{\IGSD}{\textsc{IGSD}}
\newcommand{\swap}{\mathrm{swap}}
\newcommand{\zero}{\mathrm{zero}}
\newcommand{\logit}{\mathrm{logit}}

\title{Beyond Importance: Interchange-Sobol Sensitivity Reveals Task-Specific Content Channels in Transformer Components}

\author{%
  Yifeng Guo \\
  St. Jude Children's Research Hospital \\
  Memphis, TN \\
  \texttt{yifeng.guo@stjude.org}
  \And
  Jin-Hong Du \\
  The University of Hong Kong \\
  Hong Kong, CN \\
  \texttt{jinhongd@hku.hk}
  \And
  Xiang Chen \\
  St. Jude Children's Research Hospital \\
  Memphis, TN \\
  \texttt{xiang.chen@stjude.org}
}

\begin{document}

\maketitle

\begin{abstract}
Mechanistic interpretability methods summarize a transformer component by a single importance score, conflating two distinct roles: a component may matter because it transports task-relevant content, or because the forward computation degrades when its contribution is removed. We introduce \emph{Interchange-Group Sobol Decomposition} (IGSD), a paired-intervention framework that compares matched activation replacement with zero ablation on the same component, estimates two Sobol-style variance indices, and uses their signed difference to separate the two roles, with intervention validity monitored by a symmetric off-manifold diagnostic $\widehat{\mathrm{ST}}>1$. In factual recall, IGSD identifies an early-layer content channel in both GPT-2 small and Qwen2.5-1.5B that standard importance methods underestimate. A controlled subject and relation donor design shows that the early channel transports relation-frame content while late attention transports subject-retrieval content, refining at head granularity to the known $\mathrm{Attn}_{L9H8}$ head. Late-layer clamping confirms that the early signal is expressed through downstream transformations rather than residual pass-through. These results show that replacement and deletion are not interchangeable controls and their divergence provides a practical statistical diagnostic for content transport in transformer components.

\end{abstract}


\section{Introduction}
\label{sec:intro}

Mechanistic interpretability is central to trustworthy AI because it asks not only whether a transformer component matters, but what computation it performs and how that computation supports model behavior \citep{bereska2024mechanistic, somvanshi2026bridging}. Standard tools, including direct logit attribution (DLA)~\citep{elhage2021mathematical,wang2023interpretability}, ablation-based component scoring~\citep{wang2023interpretability,conmy2023towards,zhang2024towards}, and gradient-based attribution such as AtP*~\citep{kramar2024atp}, typically summarize each component by a single scalar importance score. This scalar is useful for localization, but it mixes two different mechanisms. A component may matter because it transports task-relevant \emph{content}, so replacing its activation with another prompt's activation changes the model's answer. It may also matter because it provides a computational \emph{substrate}, so removing it damages the forward computation. These two roles are not statistically equivalent, and in retrieval tasks they can lead to sharply different component rankings.

Figure~\ref{fig:teaser} illustrates this distinction on factual recall using CounterFact~\citep{meng2022locating}. For the prompt ``Yahoo!\ Tech is owned by\,$\_\_$'', GPT-2 small assigns high probability to the correct answer ``Yahoo''. Standard importance scores do not place the early MLP among the leading components. However, replacing the output of $\mathrm{MLP}_{\mathrm{L0}}$ with a matched donor activation from the prompt ``Cologne Carnival can be found in\,$\_\_$'' changes the prediction pattern and demotes ``Yahoo'' from rank~\#1 to rank~\#95. The intervention does not simply remove computation. It injects alternative content, and the resulting failure indicates that the early component is content-sensitive in a way that deletion-style importance scores understate. This example suggests that replacement and deletion should not be treated as interchangeable controls.

\begin{figure}[t]
  \centering
  \includegraphics[width=\linewidth]{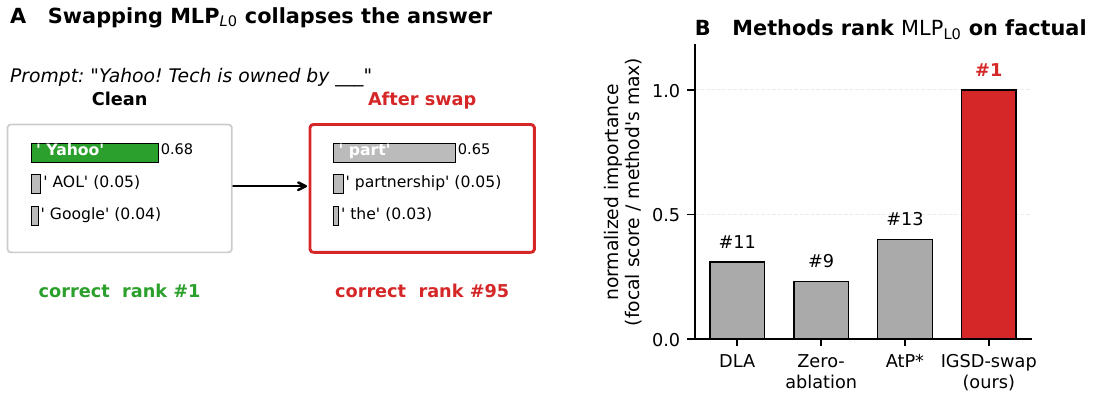}
  \caption{\textbf{Motivating phenomenon.} \textbf{(A)} Swapping $\mathrm{MLP}_{\mathrm{L0}}$ at the answer position with a matched donor activation changes the predicted token distribution and demotes the correct answer. \textbf{(B)} On factual recall, standard importance methods rank $\mathrm{MLP}_{\mathrm{L0}}$ between \#9 and \#13 of 24 layer-local groups, whereas \IGSD{} ranks it \#1.}
  \label{fig:teaser}
\end{figure}

We introduce \emph{Interchange-Group Sobol Decomposition} (\IGSD{}), a paired-intervention framework that compares two perturbations of the same component under a common variance scale. For a layer-local group, IGSD estimates one Sobol-style index from matched-donor activation replacement and another from zeroing the same activation. Their signed difference is positive when incorrect content is more disruptive than absence and negative when absence is more disruptive than alternative content. The contrast therefore separates content-sensitive channels from deletion-sensitive substrates as a property of the component, task, and intervention distribution. A controlled donor design over subject and relation then identifies which content factor is transported by the swap.

This paper makes three contributions. (1) We formulate IGSD as a paired intervention framework for transformer components, together with matched-pair inference and a factorial donor design. The accompanying theory gives a mis-specification bound for matched interchange (Proposition~\ref{prop:main}), paired-bootstrap inference for 
$\widehat\delta(g)$ (Proposition~\ref{prop:delta-clt}), and two identification results for content-factor and latent-role interpretation (Propositions~\ref{prop:identify} and~\ref{prop:role-identify}).
 (2) Across different tasks, IGSD finds localized task-specific content-transport profiles at the answer-aligned end-token position, complementary to subject-token causal tracing~\citep{meng2022locating,geva2023dissecting} rather than a substitute for it. In factual recall, it identifies an early-layer channel in both GPT-2 small~\citep{radford2019language} and Qwen2.5-1.5B~\citep{hui2024qwen2} that is underestimated by standard importance scores. At finer granularity, the same procedure recovers the known factual retrieval head $\mathrm{Attn}_{\mathrm{L9H8}}$~\citep{meng2022locating,geva2023dissecting}. (3) We provide downstream validation via late-layer clamping and group masking, confirming that the identified factual components are transformed by later layers and can support pruning-based fast inference.

\section{Related Work and Setup}
\label{sec:background}

\subsection{Related work}

\paragraph{Intervention-based interpretability.}
Activation patching replaces an internal activation with one from another forward pass and is now a standard intervention in mechanistic interpretability~\citep{vig2020investigating,meng2022locating,wang2023interpretability}. Causal abstraction work formalizes such interventions, together with path patching, mediation, scrubbing, and alignment search through interchange interventions~\citep{geiger2025causal}. Recent critiques show that accuracy-based interchange certificates can be too permissive without additional restrictions on the alignment map~\citep{sutter2026the}. \IGSD{} uses the same interchange primitive, but targets variance-based component sensitivity and pairs replacement with deletion on the same component.

\paragraph{Factual recall mechanisms.}
Prior work has localized factual associations to transformer internals. \citet{meng2022locating} identify mid-layer MLPs at subject-token positions, \citet{geva2023dissecting} describe a three-stage recall circuit involving subject enrichment, relation propagation, and late attribute extraction, and \citet{hernandez2024linearity} show that relations are linearly decodable from subject representations. \IGSD{} studies a complementary end-token estimand under matched interchange, allowing us to ask whether a component transports subject content, relation content, or both.

\paragraph{Content-aware attribution.}
Several recent studies argue that resample or interchange ablations can reveal information content missed by zero or mean ablation~\citep{heimersheim2024use,zhang2024towards}. \IGSD{} turns this qualitative asymmetry into a signed variance-based diagnostic. We compare against AtP*~\citep{kramar2024atp} as a strong gradient-based importance baseline, but our target is different. We contrast two intervention distributions to separate content sensitivity from deletion sensitivity. This is complementary to circuit-level and feature-dictionary approaches such as automated circuit discovery, sparse autoencoders, and transcoder-based circuit analysis~\citep{conmy2023towards,bricken2023towards,templeton2024scaling,golimblevskaia2026circuit}. Methodologically, \IGSD{} adapts variance-based sensitivity analysis~\citep{sobol2001global,jansen1999analysis,saltelli2010variance} to transformer components through matched-pair interchange.

\subsection{Setup}
\label{sec:setup}

We use the standard pre-norm transformer block, instantiated by GPT-2 small~\citep{radford2019language} and Qwen2.5-1.5B~\citep{hui2024qwen2} up to the normalization choice; architecture details are summarized in Appendix~\ref{app:architectures}. For an $L$-layer model, the residual stream $r_\ell \in \mathbb{R}^{d}$ evolves as
\begin{align}
\tilde{r}_\ell &= r_\ell + \mathrm{Attn}_\ell(\mathrm{LN}_1(r_\ell)),\\
r_{\ell+1}    &= \tilde{r}_\ell + \mathrm{MLP}_\ell(\mathrm{LN}_2(\tilde{r}_\ell)),
\label{eq:transformer-block}
\end{align}
where $\mathrm{LN}$ is LayerNorm in GPT-2 small and RMSNorm in Qwen2.5-1.5B. We partition the model into $K=2L$ layer-local groups, one attention group and one MLP group per layer. Their written residual-stream vectors at token position $t$ are
\begin{equation}
h_{\mathrm{Attn}_\ell}(x,t)
:= \mathrm{Attn}_\ell(\mathrm{LN}_1(r_\ell(x)))[t],
\qquad
h_{\mathrm{MLP}_\ell}(x,t)
:= \mathrm{MLP}_\ell(\mathrm{LN}_2(\tilde r_\ell(x)))[t],
\label{eq:hg-def}
\end{equation}
both in $\mathbb{R}^{d}$. \IGSD{} intervenes on $h_g(x,t)$ for $g\in\{\mathrm{Attn}_\ell,\mathrm{MLP}_\ell\}$ at the answer-aligned target token, as shown in Figure~\ref{fig:method}. Because the block update is additive, replacing one written vector changes only that component's contribution while leaving the residual-stream carry-through intact. This distinction is important when interpreting late-layer clamping in Section~\ref{sec:experiments-clamp}.

For each prompt $x$, we use the scalar logit-margin output
\begin{equation}
Y(x)=\logit(\mathrm{correct}\mid x)-\logit(\mathrm{foil}\mid x).
\end{equation}
For each base prompt $x_A$, we choose a swap partner $x_B$ from the ten nearest neighbors of $x_A$ in a standardized feature space combining $Y$ and per-layer total DLA magnitudes. This matching keeps donor-recipient pairs closer to the empirical activation manifold while preserving variation in the target component's content.

The matched donor rule should be understood as defining the intervention distribution for IGSD, not as claiming exact propensity-score balance over all prompt-side covariates. Its primary purpose is to keep the donor activation close to the empirical activation manifold by matching output difficulty and broad attribution scale, while still allowing task-relevant content to vary. Because prompt syntax, entity type, or factual domain may also affect the swap response, we diagnose residual imbalance and report reweighted robustness analyses in Appendix~\ref{app:donor-fidelity}.

\begin{figure}[t]
  \centering
  \includegraphics[width=\linewidth]{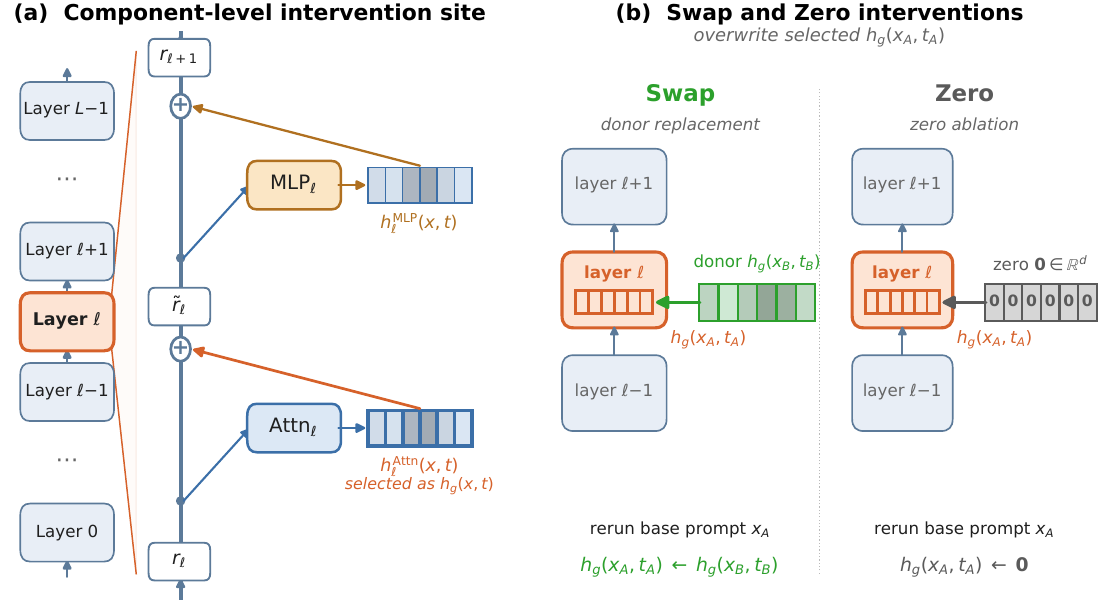}
  \caption{\textbf{\IGSD{} method.} \textbf{(a)} At layer $\ell$, attention and MLP modules write additive vectors into the residual stream; \IGSD{} targets one written vector $h_g(x,t)$. \textbf{(b)} For the same base prompt $x_A$, the swap intervention replaces $h_g(x_A,t_A)$ by a matched donor activation $h_g(x_B,t_B)$, while the zero intervention replaces it by the zero vector.}
  \label{fig:method}
\end{figure}

\section{IGSD: Decomposing Component Roles by Paired Interventions}
\label{sec:method}

\subsection{Paired swap and zero interventions}
\label{sec:swap-zero}

For each layer-local group $g$, \IGSD{} defines two intervention responses on the same written activation and the same output functional. Let $h_g(x,t)$ denote the vector written by group $g$ into the residual stream at the answer-aligned target token, and let $Y_A=Y(x_A)$ be the clean logit-margin score for a base prompt $x_A$. For a matched pair $(x_A,x_B)$, the swap intervention replaces
\[
h_g(x_A,t_A) \leftarrow h_g(x_B,t_B),
\]
and continues the forward pass for the base prompt $x_A$ with all other computations unchanged. This produces $Y_{\swap,g}(x_A)$. The zero intervention replaces the same written activation by the zero vector,
$
h_g(x_A,t_A) \leftarrow 0,
$
and produces $Y_{\zero,g}(x_A)$.

At the population level, the corresponding replacement and deletion responses are
\begin{equation}
\ST_{\swap}(g)
=
\frac{
\E\!\left[
\left(Y_A-Y_{\swap,g}\right)^2
\right]
}{
2\,\Var(Y_A)
},
\qquad
\ST_{\zero}(g)
=
\frac{
\E\!\left[
\left(Y_A-Y_{\zero,g}\right)^2
\right]
}{
2\,\Var(Y_A)
}.
\label{eq:population-st}
\end{equation}
The Sobol-style normalization places the two intervention responses on a common output-variance scale~\citep{jansen1999analysis,sobol2001global,saltelli2010variance}. This is analogous in spirit to conditional perturbation importance, where the perturbation distribution is chosen to respect dependence structure and can be linked to Sobol-type targets~\citep{reyero2025conditional}. In our setting, however, the perturbed variable is an internal transformer activation rather than an observed input feature, and the matched donor distribution acts as an approximate conditional perturbation distribution.

Given $M$ matched pairs, we estimate~\eqref{eq:population-st} by
\begin{equation}
\widehat{\ST}_{\swap}(g)
=
\frac{M^{-1}\sum_{m=1}^M
\left(Y_A^{(m)}-Y_{\swap,g}^{(m)}\right)^2}
{2\,\widehat{\Var}(Y_A)},
\qquad
\widehat{\ST}_{\zero}(g)
=
\frac{M^{-1}\sum_{m=1}^M
\left(Y_A^{(m)}-Y_{\zero,g}^{(m)}\right)^2}
{2\,\widehat{\Var}(Y_A)}.
\label{eq:st-swap-zero}
\end{equation}
The shared denominator makes replacement and deletion directly comparable across groups and intervention types. 

The role decomposition is given by the signed contrast
\begin{equation}
\delta(g)
=
\ST_{\swap}(g)-\ST_{\zero}(g),
\qquad
\widehat{\delta}(g)
=
\widehat{\ST}_{\swap}(g)-\widehat{\ST}_{\zero}(g).
\label{eq:delta}
\end{equation}
A positive $\delta(g)$ means that incorrect matched content is more disruptive than absence, so $g$ behaves as a content-transport channel under the current task; a negative $\delta(g)$ means absence is more disruptive than alternative content, so $g$ behaves as a deletion-sensitive substrate. The sign of $\delta(g)$ is an intervention-level role under $(\mathcal{T},\mathcal{D},Y)$ rather than a fixed anatomical label.

The role decomposition lives in the signed contrast $\delta(g)$, not in the absolute magnitude of $\widehat{\ST}_{\zero}(g)$. Zero ablation is one specific deletion operator. The index $\widehat{\ST}_{\zero}$ measures sensitivity to deletion, and the claim made by $\delta(g)$ is a between-intervention comparison of content transport against a deletion-sensitive substrate, not a universal causal-necessity claim. Appendix~\ref{app:mean-ablation} reports a mean-ablation cross-check.

\subsection{Donor design and inference}
\label{sec:method-factorial}
All swap interventions use a matched donor prompt. For each source prompt $x_A$, we select a donor $x_B$ from the $k=10$ nearest non-self neighbors in a standardized feature space combining the clean margin $Y(x)$ and broad attribution information. The attribution features are total layer-level DLA magnitudes, computed in each layer as the sum of attention-head DLA over all heads plus the MLP DLA in that layer. Each feature is standardized column-wise across prompts before nearest-neighbor search, so that the distance calculation is not dominated by features with larger numerical scales. The matched donor $x_B$ provides only the replacement activation $h_g(x_B,t_B)$; the forward pass, output functional, and clean reference remain those of the source prompt $x_A$. A donor-fidelity sweep over $k\in\{1,5,10,25,50,100,\mathrm{random}\}$ is reported in Appendix~\ref{app:donor-fidelity}. The focal rankings are stable and $\widehat{\ST}>1$ never fires on factual recall.

We quantify uncertainty by resampling the matched intervention triples
$
\left(Y_A^{(m)},Y_{\swap,g}^{(m)},Y_{\zero,g}^{(m)}\right),
$
which preserves the dependence between replacement, deletion, and the shared variance denominator. We report 95\% paired-bootstrap confidence intervals using $n_{b}=1000$ resamples, and compute pairwise probabilities such as $\Pr(\widehat{\ST}_a>\widehat{\ST}_b)$ from the same bootstrap draws.
Algorithm~\ref{alg:igsd} in Appendix~\ref{app:algorithm} gives the full procedure.

\subsection{Theoretical guarantees}
\label{sec:theory-summary}

Appendix~\ref{app:theory} develops four results that make the preceding estimands precise. First, a mis-specification bound relates the population matched-pair index $\overline{\ST}_g$ to an idealized Sobol index $\ST^\star_g$ defined under a product-measure intervention with the same marginals. Assume outputs are bounded by $B$ and the ideal variance is at least $v_0$, and define two error parameters: the interchange-fidelity error $\varepsilon_{\mathrm{int}}$, measuring how far the matched swap is from an exact resample of the component, and the dependence error $\varepsilon_{\mathrm{dep}}$, measuring how far the joint activation law is from factorizing. Proposition~\ref{prop:main} gives
\[
\bigl|\overline{\ST}_g-\ST^\star_g\bigr|\ \le\ \frac{16B\varepsilon_{\mathrm{int}}+3\varepsilon_{\mathrm{dep}}}{v_0},
\qquad
\max_{g}\,\bigl|\widehat{\ST}_g-\overline{\ST}_g\bigr|\ =\ O_{P}\bigl(\sqrt{\log K/M}\bigr),
\]
so the estimator concentrates uniformly over all $K$ groups. Because the ideal index lies in $[0,1]$, an estimate exceeding one falls outside the range consistent with the target. This justifies the symmetric off-manifold flag $\widehat{\ST}>1$ used throughout. Second, $\sqrt{M}\,(\widehat\delta(g)-\delta(g))$ is asymptotically normal and the paired bootstrap is consistent for its limiting law (Proposition~\ref{prop:delta-clt}), justifying the confidence intervals and pairwise ranking probabilities we report. Third, under a separable content model, the sign of the factorial bucket contrast identifies which of the subject and relation factors carries the larger read-out-projected transported variance at $g$ (Proposition~\ref{prop:identify}). Finally, under a latent content-substrate decomposition of the written activation, the matched swap isolates the read-out-projected content mass $\mathcal{C}_g$, zero ablation responds to the content-substrate mixture $\mathcal{C}_g+\mathcal{N}_g$, and the contrast satisfies $\bigl|\delta(g)-(\mathcal{C}_g-\mathcal{N}_g)/2\Var(Y)\bigr|\le B_g$ for an explicit residual budget $B_g$; under a margin condition the sign of $\delta(g)$ identifies which latent mass dominates (Proposition~\ref{prop:role-identify}). The role labels of Section~\ref{sec:swap-zero} are thus grounded in a generative model of the activation rather than in the statistic itself, and the labels are claimed only where the margin condition plausibly holds.

\section{Experiments}
\label{sec:experiments}

We evaluate \IGSD{} on three single-token prediction tasks: factual recall on CounterFact~\citep{meng2022locating}, indirect object identification (IOI)~\citep{wang2023interpretability}, and synthetic induction~\citep{olsson2022context}. Each task uses 500 matched source-donor pairs constructed by the general donor-matching procedure in Section~\ref{sec:method-factorial}. The subject--relation analysis additionally uses CounterFact annotations, drawing on 47 subjects that appear in at least three relations and yielding 150 prompts with 141 matched pairs per bucket~\citep{meng2022locating}. We test two models, GPT-2 small~\citep{radford2019language} across three seeds and Qwen2.5-1.5B~\citep{hui2024qwen2} across two seeds. Both models are run through TransformerLens~\citep{nanda2022transformerlens}, with DLA, zero ablation, and AtP*~\citep{elhage2021mathematical,wang2023interpretability,kramar2024atp} as ranking baselines, each normalized by its maximum score over the layer-local groups.

Our claims accordingly concern transported content at the answer-aligned end-token position under matched interchange, complementary to subject-token causal tracing~\citep{meng2022locating,geva2023dissecting} rather than a substitute for it.

\subsection{Factual recall: early relation-frame and late retrieval}
\label{sec:results-dissociation}

Figure~\ref{fig:overview} shows the factorial-bucket result on GPT-2 small. At $\mathrm{MLP}_{\mathrm{L0}}$, holding the relation fixed reduces $\widehat{\ST}_{\swap}$ by $2.5\times$, from $0.214$ in \texttt{diff\_both} to $0.081$ in $\texttt{same\_rel}$, whereas holding the subject fixed leaves it relatively high at $0.172$. This pattern indicates that $\mathrm{MLP}_{\mathrm{L0}}$ functions as a relation-dominant channel under the donor-bucket contrast. At $\mathrm{Attn}_{\mathrm{L9}}$, the pattern reverses. Holding the subject fixed reduces $\widehat{\ST}_{\swap}$ by $5\times$, from $0.067$ to $0.014$, whereas holding the relation fixed has little effect. Thus, $\mathrm{Attn}_{\mathrm{L9}}$ is subject-dominant, consistent with a retrieval role. The crossover survives bucket-normalized diagnostics (Appendix~\ref{app:bucket-diagnostic}). Same-subject pairs are in fact looser-matched than same-relation pairs, with median $|Y_A-Y_B|$ ratio $1.63$, so the early relation effect is conservative rather than a matching artifact.

\begin{figure}[t]
  \centering
  \includegraphics[width=\linewidth]{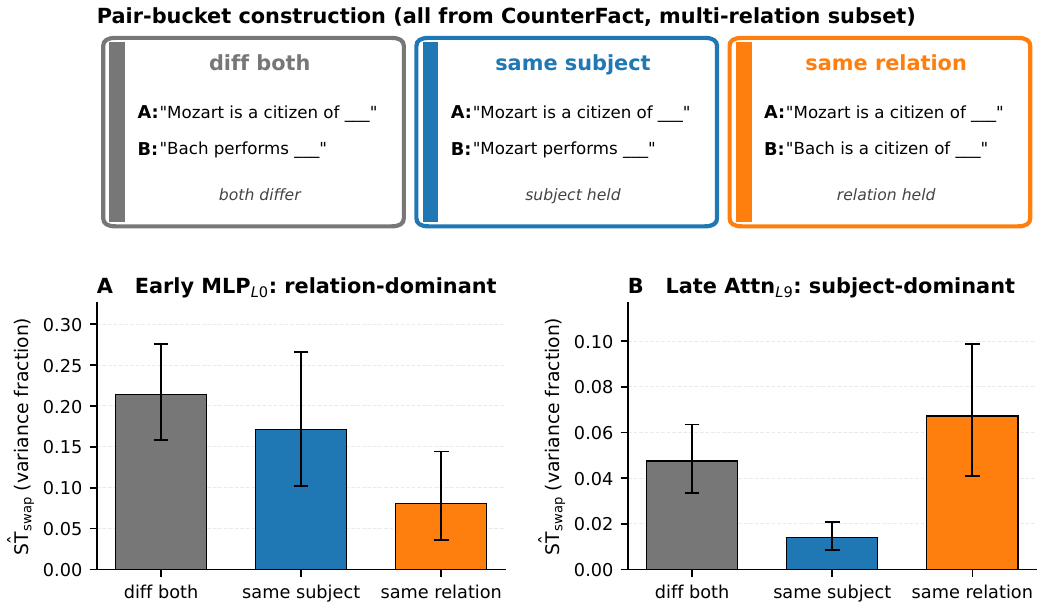}
  \caption{\textbf{Layer-stratified content dissociation in factual recall.} \textbf{(A)} At $\mathrm{MLP}_{\mathrm{L0}}$, holding the relation fixed sharply reduces the swap response, while holding the subject fixed does not. \textbf{(B)} At $\mathrm{Attn}_{\mathrm{L9}}$, the pattern reverses: holding the subject fixed sharply reduces the swap response. Bars show 95\% paired-bootstrap CIs, with $n=141$ pairs per bucket.}
  \label{fig:overview}
\end{figure}

The bucket contrast therefore recovers and extends the existing factual-recall picture. Late attention exhibits the expected subject-retrieval behavior, consistent with prior subject-token evidence and the three-stage view of factual recall~\citep{meng2022locating,geva2023dissecting}. At the same time, the end-token contrast highlights an early relation-frame channel in $\mathrm{MLP}_{\mathrm{L0}}$: the component is sensitive to which relation or template is being instantiated, for example ``X is a citizen of \_\_\_'' versus ``X performs \_\_\_''. Thus, \IGSD{} does not contradict the known circuit view, but reveals an additional content-bearing role that is weakly ranked by deletion-style importance scores.

This extension also explains why matched replacement must be paired with deletion. On factual recall, DLA, zero ablation, and AtP* rank $\mathrm{MLP}_{\mathrm{L0}}$ only \#11, \#9, and \#13 among the 24 layer-local groups, whereas \IGSD{} ranks it \#1. The discrepancy is not generic method noise. On IOI, DLA and AtP* also rank $\mathrm{Attn}_{\mathrm{L9}}$ near the top and agree with \IGSD{}. Rather, the mismatch appears when a component is content-bearing rather than merely deletion-important. Quantitatively, factual $\mathrm{MLP}_{\mathrm{L0}}$ has $\widehat{\delta}=0.103$ and $\widehat{\ST}_{\swap}/\widehat{\ST}_{\zero}=2.31$, while late factual MLPs have near-zero or negative contrast. Thus, the swap-zero gap turns the qualitative ``resample $>$ zero'' observation~\citep{heimersheim2024use,zhang2024towards} into a component-level diagnostic.

\subsection{Downstream expression through late-layer clamping}
\label{sec:experiments-clamp}

In a residual transformer, an early perturbation can reach the output simply through residual carry-through, even if later modules do not process it in a content-specific way. We therefore ask whether the swap-induced effect is taken up and transformed by later computations. We test this by clamping late attention, late MLP, or both to their clean outputs at the swapped position, and measuring the recovered fraction of swap-induced variance,
\[
1 - \frac{\mathrm{MSE}(Y_{\mathrm{clean}},Y_{\mathrm{clamp}})}{\mathrm{MSE}(Y_{\mathrm{clean}},Y_{\swap})},
\]
where $Y_{\mathrm{clamp}}$ is the output under the swap-plus-clamp intervention. Clamping both late attention and late MLP recovers $0.82$ of the swap-induced output variance (95\% paired-bootstrap CI $[0.77,0.86]$, $n=500$ pairs), which rules out a pure residual-carry explanation. The unique contributions decompose into $0.17$ for late attention and $0.57$ for late MLP, with an implied shared component of $0.08$. These results validate the early-channel finding by showing that the relation-frame signal is functionally routed through downstream late-layer transformations, especially late MLPs. Appendix~\ref{app:dag} (Figure~\ref{fig:dag}) gives the formal DAG, severed-edge construction, and per-block ablations.

\subsection{Cross-task profiles and robustness}
\label{sec:experiments-robustness}

The factual early-layer finding is stable across seeds and architectures. Across three GPT-2 small seeds the top-3 \IGSD{} ranking on factual recall is identical, with $\mathrm{MLP}_{\mathrm{L0}}$ dominating the rank-2 score by a $3$--$4\times$ margin on every seed. In Qwen2.5-1.5B the dominant factual component shifts to $\mathrm{Attn}_{\mathrm{L0}}$, preserving the early-layer location while changing the module type. Appendix~\ref{app:multiseed} reports the full seed and architecture summaries.

Figure~\ref{fig:layerprofile} shows that the swap-versus-zero contrast profile is task-specific. On factual recall, $\widehat{\delta}$ peaks at $\mathrm{MLP}_{\mathrm{L0}}$ and is weakly positive through several early and middle groups. On IOI, the dominant positive contrast moves to late attention, especially $\mathrm{Attn}_{\mathrm{L9}}$ and $\mathrm{Attn}_{\mathrm{L10}}$, while $\mathrm{MLP}_{\mathrm{L0}}$ becomes negative. Induction shows yet another profile. The sign of $\widehat{\delta}$ is therefore not a fixed anatomical property of a component. It depends on the task and intervention distribution. The full Qwen layer profile is in Appendix~\ref{app:layer-profiles}.

\begin{figure}[t]
  \centering
  \includegraphics[width=\linewidth]{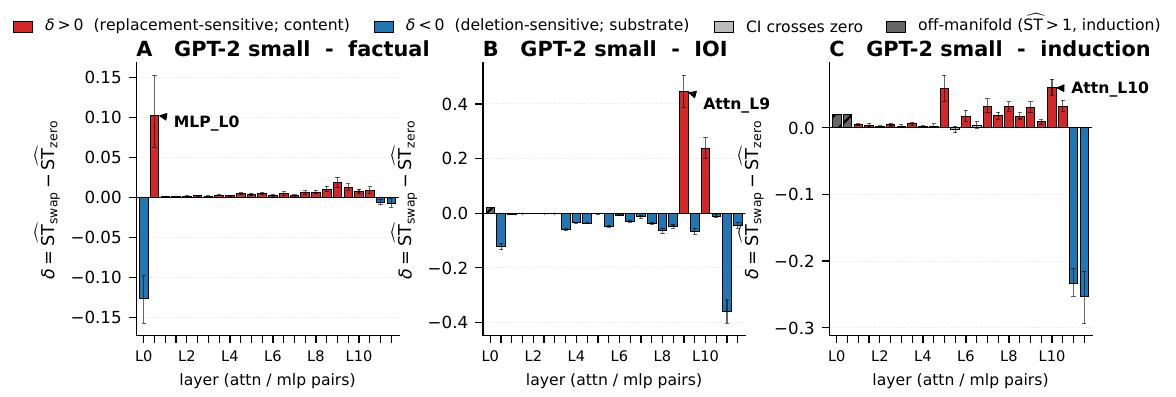}
  \caption{\textbf{Task-specific swap-versus-zero profiles.}
  Per-group $\widehat{\delta}=\widehat{\ST}_{\swap}-\widehat{\ST}_{\zero}$ for GPT-2 small. Red indicates $\widehat{\delta}>0$, blue indicates $\widehat{\delta}<0$, and grey indicates CIs crossing zero. Hatched groups are flagged off-manifold by $\widehat{\ST}>1$ and excluded from mechanistic interpretation.}
  \label{fig:layerprofile}
\end{figure}

The same difference is visible at the level of individual predictions. Figure~\ref{fig:crosstask} contrasts factual recall and IOI under clean, swap-$\mathrm{MLP}_{\mathrm{L0}}$, and swap-$\mathrm{Attn}_{\mathrm{L9}}$ interventions. On factual recall, swapping $\mathrm{MLP}_{\mathrm{L0}}$ collapses the correct answer while swapping $\mathrm{Attn}_{\mathrm{L9}}$ leaves it intact. On IOI the roles flip. $\mathrm{Attn}_{\mathrm{L9}}$ now injects donor-name interference, while $\mathrm{MLP}_{\mathrm{L0}}$ leaves the answer largely intact. This prompt-level reversal mirrors the task-specific layer profiles in Figure~\ref{fig:layerprofile}.

\begin{figure}[t]
  \centering
  \includegraphics[width=\linewidth]{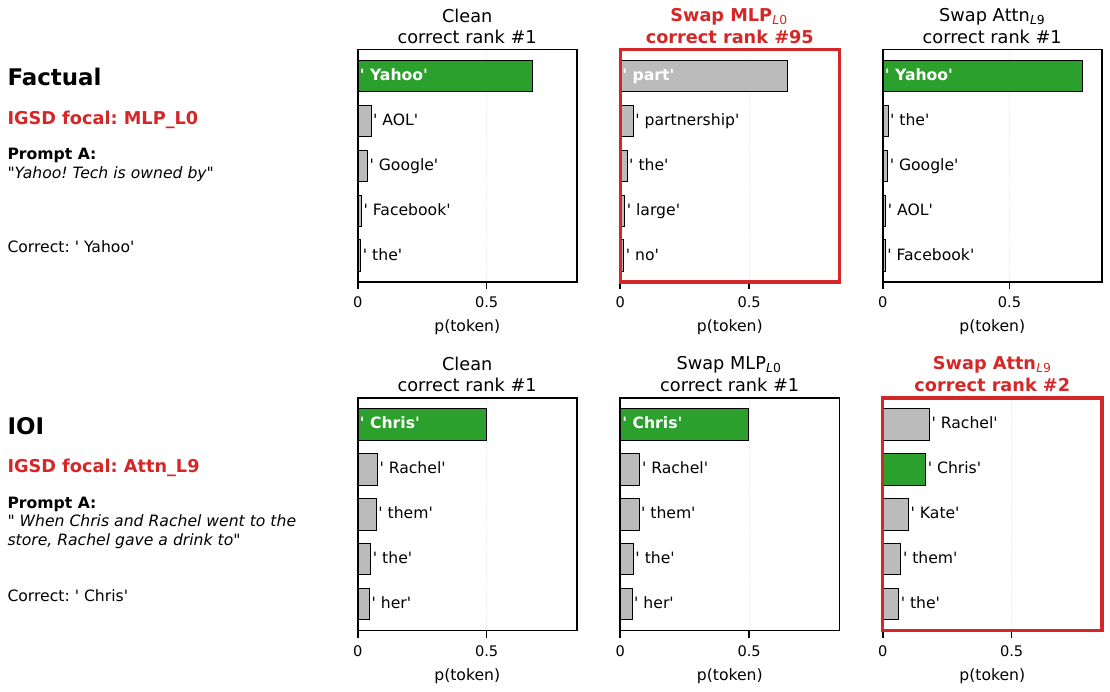}
  \caption{\textbf{Prompt-level focal components.}
  Top-5 predictions under clean, swap-$\mathrm{MLP}_{\mathrm{L0}}$, and swap-$\mathrm{Attn}_{\mathrm{L9}}$ interventions. Factual recall and IOI reverse which component is focal.}
  \label{fig:crosstask}
\end{figure}

As a final check against prior circuit work, refining the late factual layers $\{\mathrm{L9},\mathrm{L10},\mathrm{L11}\}$ to head granularity recovers the canonical $\mathrm{Attn}_{\mathrm{L9H8}}$ retrieval head~\citep{meng2022locating,geva2023dissecting} as the top-ranked head under \IGSD{}. Cross-task donor swaps and full head-level numbers are reported in Appendices~\ref{app:crosstask} and~\ref{app:headlevel}.

\section{Discussion and Limitations}

\IGSD{} shows that replacement and deletion are not interchangeable controls. On factual recall, deletion-based scoring registers $\mathrm{MLP}_{\mathrm{L0}}$ only weakly while the swap-to-zero ratio of $2.31$ shows incorrect content is substantially more disruptive than absence; the same component switches role on IOI. The finding is consistent across deletion operators and gradient baselines. The gradient baseline AtP* likewise ranks $\mathrm{MLP}_{\mathrm{L0}}$ only \#13 of the 24 layer-local groups, and a mean-ablation cross-check yields $\widehat{\delta}_{\mathrm{mean}}=+0.067 > 0$, matching the sign of the zero-ablation contrast on the same matched pairs (Appendix~\ref{app:mean-ablation}). Content transport is thus an intervention-level role determined by the task, donor distribution, and output functional, not a fixed anatomical label.

The main limitation is intervention validity. Matched donors reduce but do not eliminate off-manifold effects, and Proposition~\ref{prop:main} makes the resulting interchange-fidelity and dependence errors explicit. We therefore use $\widehat{\ST}>1$ as a practical off-manifold diagnostic (Section~\ref{sec:theory-summary}). The matching rule is designed primarily for donor fidelity, through output difficulty and broad attribution scale, rather than for full propensity-style balance over prompt-side covariates, so we report balance diagnostics and use IPW reweighting, per-relation stratification, and donor-fidelity sweeps to test whether the focal $\mathrm{MLP}_{\mathrm{L0}}$ result is driven by residual prompt-side imbalance. A related limitation is that zero ablation is only one deletion operator in a residual network. Accordingly, the role interpretation is attached to the signed contrast $\widehat\delta$, which Proposition~\ref{prop:role-identify} links to the relative dominance of latent content and substrate terms, rather than to zero ablation alone. Our experiments are further limited to single-token tasks and answer-aligned end-token interventions, so the results characterize transported content at the measured position rather than complete multi-token circuits. Finally, late-layer clamping localizes downstream expression only at the block level and does not resolve the full head-level causal path.

None of these limitations is structural to the framework. Each maps to an ingredient that can be upgraded without changing the estimand or the inference machinery. Off-manifold risk is controlled by the tightness of donor matching. The $\widehat{\ST}>1$ flag detects violations, the donor-fidelity sweep shows the focal results already operate in the safe regime, and tighter activation-aware matching is available for edge-case groups (Appendix~\ref{app:donor-fidelity}). Operator dependence is resolved by attaching the role claim to the signed contrast rather than to any single deletion operator. Proposition~\ref{prop:role-identify} grounds the contrast in latent content and substrate masses, and the mean-ablation cross-check confirms the role classification empirically (Appendix~\ref{app:mean-ablation}). The single-token restriction lifts directly. Replacing the token-level margin with the teacher-forced sequence log-likelihood ratio $Y^{\mathrm{multi}}(x)=\sum_{t<T}\bigl[\logit(c_t\mid x,c_{<t})-\logit(f_t\mid x,c_{<t})\bigr]$ leaves the intervention at the answer-aligned position unchanged, so the mis-specification bound carries over with the same constants and the paired bootstrap applies to $\widehat\delta(g)$ defined from the sum (Appendix~\ref{app:multitoken}), and grouping outputs into semantic-equivalence classes, as in semantic uncertainty and semantic entropy~\citep{kuhn2023semantic,farquhar2024detecting}, carries the same estimand to open-ended generation. Block-level clamping likewise admits the head-level refinement of Section~\ref{sec:experiments-robustness}.

\IGSD{} therefore extends from the single-token settings studied here to a general task-conditioned framework for content attribution, in which the analyst specifies, for each task, the prompt-side factors to vary, the donor distribution that varies them, and the output functional that measures their effect. Our factual-recall buckets instantiate this with subject and relation factors; IOI and induction would use entity roles or copy targets~\citep{meng2022locating,geva2023dissecting,hernandez2024linearity,wang2023interpretability,olsson2022context}. In this view, \IGSD{} is one link in a longer causal chain~\citep{geiger2025causal}: prompt factors are varied by design, internal swap-to-zero contrasts identify mediating content channels, and output groups determine whether the transported factor changes the model's meaning-level answer.

The content-substrate decomposition is also directly useful downstream. Content-transport channels are natural targets for knowledge editing~\citep{meng2022locating,meng2023massediting} and for inference-time steering of model behavior~\citep{li2023inference,zou2023representation}, while components that are neither content-bearing nor deletion-sensitive are natural candidates for structured pruning and fast inference~\citep{frantar2023sparsegpt,sun2024simple}. Appendix~\ref{app:masking-demo} quantifies the pruning direction on GPT-2 small. Keeping only the top two \IGSD{}-ranked groups on factual recall, $\mathrm{MLP}_{\mathrm{L0}}$ and $\mathrm{Attn}_{\mathrm{L9}}$, and zero-masking the remaining $22$ layer-local groups retains $70.0\%$ top-1 accuracy and $78.5\%$ of the clean logit difference, against $57.4\%$ and $26.4\%$ under the zero-ablation ranking. The paired-bootstrap accuracy gap is $+12.6$ points (95\% CI $[+6.6,+18.6]$). Because zero-masking a layer-local block is arithmetically equivalent to pruning it, these masking curves certify the pruned models' inference-time behavior without retraining. Combined with semantic output functionals, the same paired contrasts offer a route to localizing the components that mediate hallucination and factual error~\citep{farquhar2024detecting}.

\newpage
\bibliographystyle{plainnat}
\bibliography{references}

\newpage

\appendix

The supplementary material below extends the main paper with theoretical details, implementation details, intervention diagnostics, and additional experimental results. The next paragraph summarizes the organization of the supplementary sections, and the section after that collects notation used throughout.

\paragraph{Organization.}
The content of the appendix is organized as follows.

\begin{table}[!ht]
\centering
\small
\begin{tabularx}{0.95\textwidth}{l l X}
    \toprule
    \multicolumn{2}{c}{\textbf{Appendix}} & \textbf{Content} \\
    \midrule
    \addlinespace[0.5ex]

    \multirow{1}{*}{\Cref{app:notation}}
    & \Cref{app:notation}
    & Intervention vocabulary used throughout the paper. \\

    \addlinespace[0.5ex]
    \cmidrule(l){1-3}
    \addlinespace[0.5ex]

    \multirow{5}{*}{\Cref{app:theory}}
    & \Cref{app:proof}
    & Proof of \Cref{prop:main}. \\
    & \Cref{app:proof-delta}
    & Proof of \Cref{prop:delta-clt}. \\
    & \Cref{app:identify}
    & Proof of  \Cref{prop:identify}. \\
    & \Cref{app:role-identify-section}
    & Statement and proof of \Cref{prop:role-identify}. \\
    & \Cref{app:multitoken}
    & Multi-token output functional (theoretical extension). \\

    \addlinespace[0.5ex]
    \cmidrule(l){1-3}
    \addlinespace[0.5ex]

    \multirow{3}{*}{\Cref{app:validation}}
    & \Cref{app:simulation}
    & Numerical verification of \Cref{prop:main}. \\

    & \Cref{app:role-identify-sim}
    & Numerical verification of \Cref{prop:role-identify}. \\

    & \Cref{app:induction-example}
    & Off-manifold checks and illustrative induction examples. \\

    \addlinespace[0.5ex]
    \cmidrule(l){1-3}
    \addlinespace[0.5ex]

    \multirow{6}{*}{\Cref{app:robustness-cross}}
    & \Cref{app:balance}
    & Covariate balance diagnostic for matched pairs. \\
    & \Cref{app:donor-fidelity}
    & Donor-fidelity curve on real data. \\
    & \Cref{app:ipw-balance}
    & IPW-adjusted and per-relation stratified rankings. \\
    & \Cref{app:bucket-diagnostic}
    & Bucket diagnostic for factual recall. \\
    & \Cref{app:multiseed}
    & Multi-seed stability of \IGSD{} rankings. \\
    & \Cref{app:crosstask}
    & Cross-task donor-swap experiments. \\

    \addlinespace[0.5ex]
    \cmidrule(l){1-3}
    \addlinespace[0.5ex]

    \multirow{1}{*}{\Cref{app:operator-robustness}}
    & \Cref{app:mean-ablation}
    & Mean-ablation cross-check. \\

    \addlinespace[0.5ex]
    \cmidrule(l){1-3}
    \addlinespace[0.5ex]

    \multirow{3}{*}{\Cref{app:implementation}}
    & \Cref{app:architectures}
    & Architecture details for GPT-2 small and Qwen2.5-1.5B. \\
    & \Cref{app:algorithm}
    & Full \IGSD{} algorithm.  \\
    & \Cref{app:atpstar}
    & Baseline implementation details to AtP*. \\

    \addlinespace[0.5ex]
    \cmidrule(l){1-3}
    \addlinespace[0.5ex]

    \multirow{1}{*}{\Cref{app:layer-profiles}}
    & \Cref{app:layer-profiles}
    & Full layer-profile results for GPT-2 small and Qwen2.5-1.5B. \\

    \addlinespace[0.5ex]
    \cmidrule(l){1-3}
    \addlinespace[0.5ex]

    \multirow{2}{*}{\Cref{app:mechanistic}}
    & \Cref{app:headlevel}
    & Head-level refinement of late factual layers and recovery of $\mathrm{Attn}_{\mathrm{L9H8}}$. \\
    & \Cref{app:dag}
    & Formal clamp semantics and residual-stream DAG. \\

    \addlinespace[0.5ex]
    \cmidrule(l){1-3}
    \addlinespace[0.5ex]

    \multirow{1}{*}{\Cref{app:masking-demo}}
    & \Cref{app:masking-demo}
    & Downstream group-masking evaluation. \\

    \bottomrule
\end{tabularx}
\end{table}

\section{Notation}\label{app:notation}

Throughout the paper, the matched-pair Sobol estimators of Eqs.~\eqref{eq:population-st}--\eqref{eq:st-swap-zero} and the operational role decomposition of Section~\ref{sec:swap-zero} are stated in terms of four standard intervention primitives. Table~\ref{tab:vocabulary} collects their meanings.

\begin{table}[h]
\centering\small
\caption{Intervention vocabulary used throughout the paper.}
\label{tab:vocabulary}
\begin{tabularx}{\linewidth}{l X}
\toprule
\textbf{Term} & \textbf{Definition} \\
\midrule
Zero ablation & Replaces the written output of a component with the zero vector at the target token. \\
Matched donor activation & The written output of the same component on a different prompt, selected by $k$-nearest-neighbour search in a standardized feature space. \\
Late-layer clamping & A partial intervention that holds late-layer outputs at their clean values while leaving earlier interventions in place. \\
Off-manifold & Refers to interchange interventions whose resulting activation distribution lies outside the support spanned by clean prompts; $\widehat{\mathrm{ST}}>1$ is the operational diagnostic for this regime. \\
\bottomrule
\end{tabularx}
\end{table}

\section{Theoretical Foundations}
\label{app:theory}

\subsection{Mis-specification Bound: Statement and Proof}
\label{app:proof}

The matched-pair estimator approximates an idealized Sobol target. Let $G=(G_1,\ldots,G_K)$ denote the random vector of group activations under the data distribution and $h(G)$ the ideal response. Let $Z\sim Q=\bigotimes_j P_{G_j}$ be the product distribution with the same marginals, and $Z^{(k)}$ replace $Z_k$ by an independent copy. Define
\[
\overline{\ST}_k = \frac{\E[(Y-Y^{(k)})^2]}{2\Var(Y)},\qquad
\ST_k^\star = \frac{\E_Q[(h(Z)-h(Z^{(k)}))^2]}{2\Var_Q(h(Z))}.
\]

\begin{assumption}[Uniform boundedness]
\label{ass:bounded}
There exists a constant $B<\infty$ such that, almost surely,
\[
\max\!\Bigl\{|Y|,\ |Y^{(k)}|,\ |h(G)|,\ |h(G^{(k)})|,\ |h(Z)|,\ |h(Z^{(k)})|\Bigr\}\le B
\qquad \text{for every } k\in\{1,\ldots,K\}.
\]
\end{assumption}

\begin{assumption}[Interchange fidelity]
\label{ass:int}
There exists $\varepsilon_{\mathrm{int}}\ge 0$ such that
\[
\bigl\|Y-h(G)\bigr\|_{L^2}\ \le\ \varepsilon_{\mathrm{int}}
\quad\text{and}\quad
\bigl\|Y^{(k)}-h(G^{(k)})\bigr\|_{L^2}\ \le\ \varepsilon_{\mathrm{int}}
\qquad \text{for every } k\in\{1,\ldots,K\}.
\]
\end{assumption}

\begin{assumption}[Approximate factorization]
\label{ass:dep}
There exists $\varepsilon_{\mathrm{dep}}\ge 0$ such that
\[
\bigl|\Var(h(G))-\Var_Q(h(Z))\bigr|\ \le\ \varepsilon_{\mathrm{dep}},
\]
and, for every $k\in\{1,\ldots,K\}$,
\[
\bigl|\E[(h(G)-h(G^{(k)}))^2]-\E_Q[(h(Z)-h(Z^{(k)}))^2]\bigr|\ \le\ \varepsilon_{\mathrm{dep}}.
\]
\end{assumption}

\begin{assumption}[Nondegenerate ideal variance]
\label{ass:v0}
$\Var_Q(h(Z)) \geq v_0 > 0$ and $4B\varepsilon_{\mathrm{int}}+\varepsilon_{\mathrm{dep}} \leq v_0/2$.
\end{assumption}

\begin{proposition}[Matched-pair interchange approximates an ideal Sobol index]
\label{prop:main}
Let Assumptions~\ref{ass:bounded}--\ref{ass:v0} hold, and set
\[
\Delta_N\ :=\ 8B\varepsilon_{\mathrm{int}}+\varepsilon_{\mathrm{dep}},
\qquad
\Delta_V\ :=\ 4B\varepsilon_{\mathrm{int}}+\varepsilon_{\mathrm{dep}}.
\]
Then:

\textnormal{(i)}\,(\emph{Mis-specification bound}) For every $k\in\{1,\ldots,K\}$,
\[
\bigl|\overline{\ST}_k - \ST_k^\star\bigr|
\ \le\ \frac{\Delta_N + 2\Delta_V}{v_0}
\ =\ \frac{16B\varepsilon_{\mathrm{int}}+3\varepsilon_{\mathrm{dep}}}{v_0}.
\]

\textnormal{(ii)}\,(\emph{Uniform sample concentration}) Given $M$ independent matched pairs,
\[
\max_{1\le k\le K}\ \bigl|\widehat{\ST}_k - \overline{\ST}_k\bigr|\ =\ O_{\!P}\!\bigl(\sqrt{\log K\,/\,M}\bigr)
\qquad\text{as } M\to\infty.
\]
\end{proposition}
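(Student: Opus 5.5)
For part (i) I will write each index as a ratio $N/(2V)$ and control the numerator and denominator perturbations separately. Set
\[
N:=\E[(Y-Y^{(k)})^2],\quad N^\star:=\E_Q[(h(Z)-h(Z^{(k)}))^2],\quad V:=\Var(Y),\quad V^\star:=\Var_Q(h(Z)).
\]
The argument passes through the intermediate quantities $\tilde N:=\E[(h(G)-h(G^{(k)}))^2]$ and $\tilde V:=\Var(h(G))$. Assumption~\ref{ass:dep} gives $|\tilde N-N^\star|\le\varepsilon_{\mathrm{dep}}$ and $|\tilde V-V^\star|\le\varepsilon_{\mathrm{dep}}$ directly. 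What remains is to control the interchange-fidelity gaps $|N-\tilde N|$ and $|V-\tilde V|$.

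For the numerator I will use $a^2-b^2=(a-b)(a+b)$ with $a=Y-Y^{(k)}$ and $b=h(G)-h(G^{(k)})$. By Cauchy--Schwarz,
\[
|N-\tilde N|\le \|a-b\|_{L^2}\,\|a+b\|_{L^2}.
\]
Assumption~\ref{ass:int} with the triangle inequality gives $\|a-b\|_{L^2}\le 2\varepsilon_{\mathrm{int}}$. Assumption~\ref{ass:bounded} gives $\|a+b\|_{L^2}\le 4B$. Hence $|N-\tilde N|\le 8B\varepsilon_{\mathrm{int}}$, and so $|N-N^\star|\le\Delta_N$.

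For the variance I will write $\Var(Y)-\Var(h(G))=\Var(Y)-\Var(h)$. Then I bound it by
\[
|\Var(Y)-\Var(h)|=\bigl|\mathrm{Cov}(Y-h,\,Y+h)\bigr|\le \|Y-h\|_{L^2}\,\|Y+h\|_{L^2}\le 2B\varepsilon_{\mathrm{int}}.
\]
Centering gives a factor of at most $2$ slack, which would push this to $4B\varepsilon_{\mathrm{int}}$; I will just use the stated $\Delta_V$. This yields $|V-V^\star|\le\Delta_V$. Assumption~\ref{ass:v0} then gives $V\ge v_0-\Delta_V\ge v_0/2$.

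To combine the two perturbations I write
\[
\frac{N}{2V}-\frac{N^\star}{2V^\star}=\frac{N-N^\star}{2V}+\frac{N^\star(V^\star-V)}{2VV^\star}.
\]
The first term is at most $\Delta_N/v_0$, using $V\ge v_0/2$. For the second term, $N^\star/(2V^\star)=\ST^\star_k\le 1$ by the standard Sobol total-index bound under the product measure $Q$. This is where independence of the coordinates is used: the Jansen identity $N^\star=2\E_Q[\Var(h\mid Z_{-k})]\le 2V^\star$. The second term is therefore at most $|V^\star-V|/V\le 2\Delta_V/v_0$. Summing the two terms gives (i).

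For part (ii), each $\widehat{\ST}_k$ is a ratio $\widehat N_k/(2\widehat V)$. The numerator $\widehat N_k$ is an average of i.i.d.\ variables bounded by $4B^2$. The denominator $\widehat V$ is shared across $k$ and is a smooth function of the bounded means of $Y$ and $Y^2$. Hoeffding's inequality with a union bound over the $K$ groups gives
\[
\max_k|\widehat N_k-N_k|=O_P\bigl(\sqrt{\log K/M}\bigr),\qquad |\widehat V-V|=O_P\bigl(M^{-1/2}\bigr).
\]
Since $V\ge v_0/2$, with probability tending to one we have $\widehat V\ge v_0/4$. The same ratio decomposition as in (i), now with $N_k/(2V)\le 2B^2/(v_0/2)$ bounded, transfers these rates to the ratio uniformly in $k$.

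I expect the main obstacle to be the second term in the ratio step. It requires bounding a ratio of the form $N/(2V)$, and $\ST^\star\le1$ holds only for the ideal product-measure index. The fix is to decompose around $V$ in the denominator of the first term and around $V^\star$ in the second, as done above. Getting exactly the stated constants $16B$ and $3$ depends on this ordering and on the variance constant.
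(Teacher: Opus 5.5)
Your proposal is correct and follows the paper's proof essentially step for step: the same $\Delta_N$ bound via $a^2-b^2=(a-b)(a+b)$ and Cauchy--Schwarz, the same split of $N/(2V)-N^\star/(2V^\star)$ using the Jansen identity $N^\star\le 2V^\star$ under $Q$, and the same Hoeffding, union-bound and ratio-transfer argument for (ii). The only deviation is your variance step, where the identity $\Var(Y)-\Var(h(G))=\mathrm{Cov}(Y-h(G),\,Y+h(G))$ gives the sharper bound $2B\varepsilon_{\mathrm{int}}$; no centering slack is needed, since $|\mathrm{Cov}(U,W)|\le\|U\|_{L^2}\|W\|_{L^2}$ holds directly, whereas the paper bounds $\E[Y^2]$ and $(\E Y)^2$ separately to get $4B\varepsilon_{\mathrm{int}}$, and either suffices for the stated $\Delta_V$.
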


The bound separates two error sources: $\varepsilon_{\mathrm{int}}$ measures how faithfully an actual swap implements the intended group intervention, and $\varepsilon_{\mathrm{dep}}$ measures the gap between the joint activation distribution and the product-structured Sobol ideal. Matched pairing reduces $\varepsilon_{\mathrm{int}}$; it does not remove $\varepsilon_{\mathrm{dep}}$. The $1/v_0$ scaling follows from Step~3 below using the Sobol identity $N^\star = 2V^\star\,\ST_k^\star\le 2V^\star$.

\paragraph{Diagnostic from the bound.}
In the ideal Sobol setting, $\ST_k^\star \in [0,1]$. Thus, any Sobol-style estimate exceeding one, whether $\widehat{\ST}_{\swap}$ or $\widehat{\ST}_{\zero}$, falls outside the formal range implied by Assumption~\ref{ass:v0}. We therefore treat $\widehat{\ST}>1$ as a diagnostic flag that intervention mis-specification, finite-sample noise, or an unstable normalization may have overwhelmed the signal and exclude the corresponding group from interpretation. This off-manifold convention is applied symmetrically in Figure~\ref{fig:layerprofile} like $\mathrm{Attn}_{\mathrm{L0}}$ on induction. Across the three GPT-2 small layer profiles in Figure~\ref{fig:layerprofile}, the flag fires on 3 of 72 layer-local groups (0/24 factual recall, 1/24 IOI on $\mathrm{Attn}_{\mathrm{L0}}$, 2/24 induction on $\mathrm{Attn}_{\mathrm{L0}}$ and $\mathrm{MLP}_{\mathrm{L0}}$); all flagged groups sit at the embedding-layer boundary, and none on the headline content-transport channels. We have further explain off-manifold mechanism in \cref{app:validation}.

\begin{proof}
Fix a group $k$ and write $N=\E[(Y-Y^{(k)})^2]$, $N^\star=\E_Q[(h(Z)-h(Z^{(k)}))^2]$, $V=\Var(Y)$, $V^\star=\Var_Q(h(Z))$.

Let $A=Y-Y^{(k)}$ and $A'=h(G)-h(G^{(k)})$. By Assumption~\ref{ass:bounded}, $|A|,|A'|\leq 2B$ a.s., so $|A^2-(A')^2|=|A-A'|\,|A+A'|\leq 4B|A-A'|$. By Cauchy--Schwarz and Assumption~\ref{ass:int},
\[
|N-\E[(A')^2]|\leq 4B(\|Y-h(G)\|_2+\|Y^{(k)}-h(G^{(k)})\|_2)\leq 8B\varepsilon_{\mathrm{int}}.
\]
Combined with Assumption~\ref{ass:dep}, $|N-N^\star|\leq 8B\varepsilon_{\mathrm{int}}+\varepsilon_{\mathrm{dep}}=:\Delta_N$.

Since $|\E[Y^2]-\E[h(G)^2]|\leq 2B\varepsilon_{\mathrm{int}}$ and $|(\E Y)^2-(\E h(G))^2|\leq 2B\varepsilon_{\mathrm{int}}$, $|V-\Var(h(G))|\leq 4B\varepsilon_{\mathrm{int}}$. Combined with Assumption~\ref{ass:dep}, $|V-V^\star|\leq 4B\varepsilon_{\mathrm{int}}+\varepsilon_{\mathrm{dep}}=:\Delta_V$. By Assumption~\ref{ass:v0}, $\Delta_V\leq v_0/2$ so $V\geq v_0/2$.

By the standard total-order Sobol identity under the product measure $Q$ (conditioning on $Z_{-k}$, not $Z_k$),
\[
N^\star
= \E_Q[(h(Z) - h(Z^{(k)}))^2]
= 2\,\E_Q\!\bigl[\Var(h(Z)\mid Z_{-k})\bigr]
= 2 V^\star\,\ST_k^\star
\le 2 V^\star,
\]
since $\ST_k^\star\in[0,1]$ for any $h\in L^2(Q)$ (Assumption~\ref{ass:bounded} implies $h\in L^2(Q)$). Combining this with $V\ge v_0/2$ from Step~2 and $V^\star\ge v_0$ from Assumption~\ref{ass:v0}:
\[
\left|\frac{N}{2V}-\frac{N^\star}{2V^\star}\right|
\leq \frac{|N-N^\star|}{2V}+\frac{N^\star|V-V^\star|}{2VV^\star}
\leq \frac{\Delta_N}{v_0}+\frac{2V^\star\Delta_V}{2VV^\star}
= \frac{\Delta_N}{v_0}+\frac{\Delta_V}{V}
\leq \frac{\Delta_N+2\Delta_V}{v_0}.
\]

Each numerator summand $(Y_A^{(m)}-Y_{\swap,k}^{(m)})^2\in[0,4B^2]$ by Assumption~\ref{ass:bounded}; Hoeffding's inequality plus a union bound over $K$ groups yields $\max_k|\widehat{N}_k - N_k|=O_p(\sqrt{\log K/M})$ for the per-group sample numerators. The sample variance denominator $\widehat{V}$ involves $Y$ and $Y^2$, both in $[-B,B]$ and $[0,B^2]$ respectively, so $\widehat{V}$ also concentrates around $V$ at rate $O_p(M^{-1/2})$, which gives $\Pr(\widehat{V}\ge v_0/4)\to 1$ since $V\ge v_0/2$ by Step~2. On this event the smooth map $f(n,v)=n/(2v)$ has partial derivatives $\partial_n f = 1/(2v)\le 2/v_0$ and $|\partial_v f| = |n|/(2v^2)\le 32 B^2/v_0^2$ uniformly in $k$ (using $|n|\le 4B^2$ and $v\ge v_0/4$), so a first-order Taylor expansion of $f$ around $(N_k, V)$ propagates the numerator and denominator concentration into the ratio at the same $O_p(\sqrt{\log K/M})$ rate, giving $\max_k|\widehat{\ST}_k-\overline{\ST}_k|=O_p(\sqrt{\log K/M})$. 
\end{proof}

\subsection{Proof of Proposition~\ref{prop:delta-clt} (Asymptotic Inference for $\widehat{\delta}$)}
\label{app:proof-delta}

\begin{proposition}[Asymptotic distribution of $\widehat{\delta}$ and paired-bootstrap consistency]
\label{prop:delta-clt}
Fix a layer-local group $g$. Suppose the matched triples $\{(Y_A^{(m)}, Y_{\swap,g}^{(m)}, Y_{\zero,g}^{(m)})\}_{m=1}^M$ are i.i.d., that each of $Y_A,\ Y_{\swap,g},\ Y_{\zero,g}$ has finite fourth moment, and that $\Var(Y_A) > 0$. Then there exists $\sigma^2(g)\in(0,\infty)$ such that
\[
\sqrt{M}\,\bigl(\widehat{\delta}(g) - \delta(g)\bigr)\ \xrightarrow{\,d\,}\ \mathcal{N}\!\bigl(0,\ \sigma^2(g)\bigr)\qquad\text{as } M\to\infty,
\]
and the conditional distribution of $\sqrt{M}\,(\widehat{\delta}^{\,*}(g)-\widehat{\delta}(g))$ given the observed sample, where $\widehat{\delta}^{\,*}(g)$ is the resample-the-pairs (paired) bootstrap statistic, converges weakly in probability to the same $\mathcal{N}\!\bigl(0,\ \sigma^2(g)\bigr)$ limit.
\end{proposition}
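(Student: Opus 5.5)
The plan is to write $\widehat{\delta}(g)$ as a smooth function of a vector of sample means and then apply the multivariate CLT, the delta method, and the standard consistency of the nonparametric bootstrap for smooth functions of means. For each matched triple define
\[
W^{(m)} := \Bigl( (Y_A^{(m)}-Y_{\swap,g}^{(m)})^2 - (Y_A^{(m)}-Y_{\zero,g}^{(m)})^2,\ Y_A^{(m)},\ (Y_A^{(m)})^2 \Bigr)\in\R^3 ,
\]
with sample mean $\overline{W}_M=(\bar D,\bar Y,\bar{Y^2})$ and population mean $\mu=(\mu_D,\mu_1,\mu_2)$. Up to the $M/(M-1)$ factor in $\widehat{\Var}(Y_A)$, which is $1+O(M^{-1})$ and is negligible at the $\sqrt M$ scale, we have $\widehat\delta(g)=\phi(\overline W_M)$ and $\delta(g)=\phi(\mu)$, where
\[
\phi(d,a,b)=\frac{d}{2(b-a^2)} .
\]
The map $\phi$ is $C^1$ on the open set $\{b>a^2\}$, and $\mu$ lies in that set because $\Var(Y_A)>0$.

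\textbf{Step 1: moments.} Each coordinate of $W$ is a polynomial of degree at most two in $(Y_A,Y_{\swap,g},Y_{\zero,g})$. By the finite fourth moments and Cauchy--Schwarz/Hölder, $\E\|W\|^2<\infty$, so $\Sigma=\mathrm{Cov}(W)$ is finite. The multivariate Lindeberg--Lévy CLT then gives $\sqrt M(\overline W_M-\mu)\xrightarrow{d}\mathcal N(0,\Sigma)$.

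\textbf{Step 2: delta method.} This yields $\sqrt M(\widehat\delta-\delta)\xrightarrow{d}\mathcal N(0,\nabla\phi(\mu)^\top\Sigma\nabla\phi(\mu))$. Set $\sigma^2(g)$ to be this quadratic form. It is finite.

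\textbf{Step 3: strict positivity, the main obstacle.} The statement asserts $\sigma^2(g)>0$, but this can fail. For example, if $Y_{\swap,g}=Y_{\zero,g}$ almost surely, then $D\equiv0$ and $\delta=0$, so $\sigma^2=0$. Positivity holds exactly when the linear functional $\nabla\phi(\mu)^\top(W-\mu)$ is not almost surely constant. Writing $V=\Var(Y_A)$, this functional is
\[
\frac{D-\mu_D}{2V}-\frac{\mu_D}{2V^2}\bigl((Y_A-\mu_1)^2-V\bigr)\cdot(\text{linearized})\ ,
\]
and nondegeneracy is a condition on this influence function. I would handle this honestly: either add the explicit nondegeneracy condition that the influence function $\psi_g$ has positive variance, or state the result with $\sigma^2(g)\in[0,\infty)$ and interpret the degenerate case as convergence to a point mass. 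I would try to phrase the fix as a mild genericity assumption and remark that it holds whenever $D$ is nondegenerate and not an exact affine function of $(Y_A-\mu_1)^2$.

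\textbf{Step 4: bootstrap.} The paired bootstrap resamples whole triples, so $\widehat\delta^{*}=\phi(\overline W^{*}_M)$ with $\overline W^{*}_M$ the mean of an i.i.d. resample from the empirical law of $\{W^{(m)}\}$. Under $\E\|W\|^2<\infty$, the bootstrap CLT for means (Bickel--Freedman, or Giné--Zinn) gives that $\sqrt M(\overline W^*_M-\overline W_M)$ converges conditionally, weakly in probability, to $\mathcal N(0,\Sigma)$.

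By the strong law, $\overline W_M\to\mu$ almost surely, so $\overline W_M$ eventually lies in a neighborhood where $\phi$ is $C^1$. The bootstrap delta method (e.g. van der Vaart, Thm.~23.5) then transfers the limit to $\sqrt M(\widehat\delta^*-\widehat\delta)$, with the same variance $\sigma^2(g)$.

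The pairing is essential: it preserves the cross-covariances in $\Sigma$ between $D$ and the denominator terms, which is why the limit laws coincide.
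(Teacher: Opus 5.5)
Your proposal is correct and follows essentially the same route as the paper: the same vector $(W, Y_A, Y_A^2)$, the same map $\phi(w,m_1,m_2)=w/(2(m_2-m_1^2))$, the multivariate CLT plus the delta method, and bootstrap consistency for smooth functions of means (van der Vaart). Your Step 3 is a genuine addition: the paper's proof only shows $\sigma^2(g)<\infty$ and never justifies the claimed lower bound $\sigma^2(g)>0$, which does fail (e.g.\ when $Y_{\swap,g}=Y_{\zero,g}$ almost surely), so the statement needs either a nondegeneracy condition $\Var\bigl(\tfrac{D}{2V}-\tfrac{\mu_D}{2V^2}(Y_A-\mu_1)^2\bigr)>0$ or should allow $\sigma^2(g)\in[0,\infty)$.
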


\begin{proof}
Write $V = \Var(Y_A)$ and $\widehat{V} = \widehat{\Var}(Y_A)$ (the empirical variance of $Y_A$ over the $M$ matched-triples). Define the per-pair difference
\[
W_m(g) \;=\; \bigl(Y_A^{(m)}-Y_{\swap,g}^{(m)}\bigr)^2 - \bigl(Y_A^{(m)}-Y_{\zero,g}^{(m)}\bigr)^2,
\]
so that $\widehat{\delta}(g) = \overline{W}(g) / (2\widehat{V})$ with $\overline{W}(g) = (1/M)\sum_m W_m(g)$.

Let $T_m := \bigl(W_m(g), Y_A^{(m)}, (Y_A^{(m)})^2\bigr)$. Under the proposition's assumption of finite fourth moments of the matched triple $(Y_A, Y_{\swap,g}, Y_{\zero,g})$ (and hence finite second moments of all entries of $T_m$, including $W_m$ which is a polynomial of degree two in the triple), the multivariate CLT yields
\[
\sqrt{M}\bigl(\overline{T} - \E[T]\bigr) \xrightarrow{d} \mathcal{N}(\mathbf{0}, \Sigma_g),
\]
with $\Sigma_g$ the covariance matrix of $T_m$. Define the smooth map $\phi: \R^3 \to \R$ by
\[
\phi(w, m_1, m_2) \;=\; \frac{w}{2(m_2 - m_1^2)},
\]
so that $\widehat{\delta}(g) = \phi(\overline{W}, \overline{Y_A}, \overline{Y_A^2})$ and $\delta(g) = \phi(\E[W], \E[Y_A], \E[Y_A^2])$. The map is continuously differentiable at the population point because $V = \E[Y_A^2] - (\E[Y_A])^2 > 0$. The multivariate delta method gives
\[
\sqrt{M}\bigl(\widehat{\delta}(g) - \delta(g)\bigr) \xrightarrow{d} \mathcal{N}\!\bigl(0, \nabla\phi(\theta_g)^\top \Sigma_g \nabla\phi(\theta_g)\bigr),
\]
where $\theta_g = \E[T]$ and $\sigma^2(g) := \nabla\phi(\theta_g)^\top \Sigma_g \nabla\phi(\theta_g) < \infty$ by the moment assumption.

Paired-bootstrap consistency for $\sqrt{M}(\widehat{\delta}^*(g) - \widehat{\delta}(g))$, where $\widehat{\delta}^*$ denotes the resample-the-pairs bootstrap statistic, follows from the standard empirical-bootstrap consistency theorem for continuously differentiable functions of i.i.d.\ sample averages~\citep{van2000asymptotic}: the bootstrap is applied to the empirical law of the i.i.d.\ triples $T_m$, $\widehat{\delta}$ is a continuously differentiable functional of the sample mean $\overline{T}$ via $\phi$, and the nondegeneracy condition $V > 0$ keeps $\phi$ smooth on a neighborhood of $\E[T]$.
\end{proof}

Proposition~\ref{prop:delta-clt} justifies the 95\% paired-bootstrap CIs and the matched-bootstrap probabilities $\Pr(\widehat{\ST}_a > \widehat{\ST}_b)$ which provides the inference tool for the proposed method.

\subsection{Proposition~\ref{prop:identify}: Identification under a Separable Content Model}
\label{app:identify}

We formalize what the bucket contrast (Section~\ref{sec:method-factorial}) identifies. Let $S(x), R(x)$ denote latent random variables encoding the subject and relation content of prompt $x$.

\begin{assumption}[Separable additive content transport]
\label{ass:separable}
$h_g(x) = \psi_g^{(s)}(S(x)) + \psi_g^{(r)}(R(x)) + \xi_g(x)$, with $\xi_g$ a mean-zero $\R^d$-valued residual independent of $(S, R)$.
\end{assumption}

\begin{assumption}[Bucket sampling design]
\label{ass:bucket-sampling}
Each pair $(x_A, x_B)$ is drawn under one of the bucket-conditional designs of Section~\ref{sec:method-factorial}, with the following independence structure:
\begin{itemize}
\item \textbf{\texttt{same\_subj}} Conditional on the chosen subject $S_A = S_B = s$, the relations $R_A, R_B$ are i.i.d.\ from a common law $P_R$ that does not depend on $s$, and the residuals $\xi_g(x_A), \xi_g(x_B)$ are conditionally i.i.d.\ with mean zero, independent of $(S, R_A, R_B)$.
\item \textbf{\texttt{same\_rel}} Conditional on the chosen relation $R_A = R_B = r$, the subjects $S_A, S_B$ are i.i.d.\ from a common law $P_S$ not depending on $r$; residuals are again conditionally i.i.d., mean zero, content-independent.
\item Each bucket is realized with positive probability under the sampling scheme, $P(\texttt{same\_subj}), P(\texttt{same\_rel}) \geq c > 0$.
\end{itemize}
\end{assumption}

\begin{assumption}[Local linearity with controlled remainder]
\label{ass:linear-readout}
$Y$ is differentiable in $h_g$ on the relevant support; the gradient $u_g(x) := \nabla_{h_g}\,Y\,\big|_{h_g(x)}$ satisfies $\|u_g(x)\| \leq M_u$ uniformly; the activation perturbation has bounded fourth moment, $\E\!\bigl[\|\Delta h_g\|^4 \mid \texttt{bucket}\bigr] \leq M_{\Delta}^4$ uniformly in the bucket; and the second-order remainder of the expansion $Y(h_g(x_A) + \Delta) = Y_A + \langle u_g(x_A), \Delta\rangle + r_g(x_A, \Delta)$ is mean-square controlled,
\[
\E\bigl[r_g(x_A, \Delta h_g)^2 \mid \texttt{bucket}\bigr] \leq \eta_g,
\]
uniformly in the bucket. We write $\varepsilon := \eta_g + \varepsilon_{\mathrm{int}} + \varepsilon_{\mathrm{dep}}$ for the combined approximation envelope, with $\varepsilon_{\mathrm{int}}, \varepsilon_{\mathrm{dep}}$ as in Proposition~\ref{prop:main}.
\end{assumption}

\begin{assumption}[Margin condition]
\label{ass:nondeg}
Let $v_g^{(r)}, v_g^{(s)}$ be as in Proposition~\ref{prop:identify} and $\varepsilon$ as in Assumption~\ref{ass:linear-readout}. There exists a constant $C_\star>0$, depending only on $(M_u, M_\Delta, B)$ as quantified in the proof of Proposition~\ref{prop:identify}, such that
\[
\bigl|v_g^{(r)} - v_g^{(s)}\bigr|\ >\ C_\star\,\sqrt{\varepsilon}.
\]
The $\sqrt{\varepsilon}$ scaling matches the leading remainder rate established in Step~3 of the proof.
\end{assumption}

\begin{proposition}[Factorial bucket contrast identifies the dominant content factor]
\label{prop:identify}
Let Assumptions~\ref{ass:separable}--\ref{ass:nondeg} hold together with the regularity conditions of Proposition~\ref{prop:main}, and define the read-out-projected content variances
\[
v_g^{(r)}\ :=\ \E\!\Bigl[u_g(X_A)^{\!\top}\,\mathrm{Cov}\!\bigl[\psi_g^{(r)}(R)\bigr]\,u_g(X_A)\Bigr]
\ \ge\ 0,
\qquad
v_g^{(s)}\ :=\ \E\!\Bigl[u_g(X_A)^{\!\top}\,\mathrm{Cov}\!\bigl[\psi_g^{(s)}(S)\bigr]\,u_g(X_A)\Bigr]
\ \ge\ 0,
\]
together with the read-out scale $\kappa_g := 1/\Var(Y) > 0$. Then:

\textnormal{(i)}\,(\emph{Population bucket-contrast identity}) The population bucket-stratified scores satisfy
\[
\overline{\ST}_g^{(\texttt{same\_subj})} - \overline{\ST}_g^{(\texttt{same\_rel})}
\ =\ \kappa_g\bigl(v_g^{(r)} - v_g^{(s)}\bigr)\ +\ O\!\bigl(\sqrt{\varepsilon}\bigr),
\]
with $\varepsilon := \eta_g + \varepsilon_{\mathrm{int}} + \varepsilon_{\mathrm{dep}}$ as in Assumption~\ref{ass:linear-readout}.

\textnormal{(ii)}\,(\emph{Sign identification}) Under the margin condition (Assumption~\ref{ass:nondeg}), the sign of $\overline{\ST}_g^{(\texttt{same\_subj})} - \overline{\ST}_g^{(\texttt{same\_rel})}$ coincides with the sign of $v_g^{(r)} - v_g^{(s)}$ and therefore identifies which of the relation or subject content factors carries the larger read-out-projected transported variance at $g$.

\textnormal{(iii)}\,(\emph{Empirical sign consistency}) For the empirical bucket contrast $\widehat{\ST}_g^{(\texttt{same\_subj})} - \widehat{\ST}_g^{(\texttt{same\_rel})}$,
\[
\Pr\!\Bigl\{\,\mathrm{sign}\!\bigl(\widehat{\ST}_g^{(\texttt{same\_subj})} - \widehat{\ST}_g^{(\texttt{same\_rel})}\bigr)\ =\ \mathrm{sign}\!\bigl(v_g^{(r)} - v_g^{(s)}\bigr)\Bigr\}\ \longrightarrow\ 1
\]
as $M\to\infty$, by Step~6 of the proof and Assumption~\ref{ass:nondeg}.
\end{proposition}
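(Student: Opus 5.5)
The argument runs the swap through the local linear expansion of Assumption~\ref{ass:linear-readout}. Separability (Assumption~\ref{ass:separable}) then shows which content factor each bucket leaves free, and the remainders are controlled by Cauchy--Schwarz. Within a bucket, write the swap perturbation as $\Delta h_g = h_g(x_B)-h_g(x_A)$.

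\textbf{Step 1: bucket perturbations.} In \texttt{same\_subj} the subject terms cancel:
\[
\Delta h_g=\psi_g^{(r)}(R_B)-\psi_g^{(r)}(R_A)+\xi_g(x_B)-\xi_g(x_A).
\]
In \texttt{same\_rel} the relation terms cancel, leaving the analogous expression in $S_A,S_B$.

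\textbf{Step 2: expand the swap response.} By the expansion,
\[
Y_A-Y_{\swap,g}=-\langle u_g(x_A),\Delta h_g\rangle-r_g .
\]
Squaring and taking bucket-conditional expectations gives the leading term $\E[\langle u_g,\Delta h_g\rangle^2]$. The cross term is bounded by Cauchy--Schwarz:
\[
\bigl|2\E[\langle u_g,\Delta\rangle r_g]\bigr|\le 2M_uM_\Delta\sqrt{\eta_g}.
\]
The squared remainder contributes at most $\eta_g$.

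\textbf{Step 3: compute the leading term.} Under Assumption~\ref{ass:bucket-sampling}, $R_A,R_B$ are i.i.d.\ given $s$ and independent of the residuals. Conditioning on $x_A$ (so on $u_g(x_A)$), the leading term for \texttt{same\_subj} becomes
\[
2\,\E\bigl[u^\top \mathrm{Cov}[\psi^{(r)}]u\bigr]+\E\bigl[u^\top\mathrm{Cov}(\xi_B-\xi_A)u\bigr].
\]
The residual term is identical across buckets because the residuals are i.i.d.\ and content-independent, so it cancels in the contrast.

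This is the step I expect to be the main obstacle. The gradient $u_g(x_A)$ depends on $R_A$, so it is not independent of $\psi^{(r)}(R_A)$, and the factorization $\E[u^\top(\cdot)u]=\E[u^\top\mathrm{Cov}\,u]$ is not exact. I would handle this by decoupling: replace $x_A$ in $u_g$ by an independent copy and charge the discrepancy to $\varepsilon_{\mathrm{dep}}$ through Proposition~\ref{prop:main}'s approximate factorization. That charge is an $O(\varepsilon_{\mathrm{dep}})$ error, hence also $O(\sqrt{\varepsilon})$ for small $\varepsilon$. The $\varepsilon_{\mathrm{int}}$ term enters similarly, through the gap between the observed $Y$ and the ideal $h$. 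The factor $2$ appearing against the $\kappa_g/2$ coming from the $2\Var(Y)$ normalization yields exactly $\kappa_g v_g^{(r)}$, and similarly $\kappa_g v_g^{(s)}$ for \texttt{same\_rel}.

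\textbf{Step 4: normalization.} Divide by $2\Var(Y)$, using the variance control $V\ge v_0/2$ from Proposition~\ref{prop:main}. Subtracting the two buckets proves (i), with an explicit constant of the form
\[
C_\star = c\,(M_uM_\Delta + 1 + B^2)/v_0 .
\]

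\textbf{Step 5: sign identification.} Part (ii) follows immediately: with this $C_\star$, Assumption~\ref{ass:nondeg} makes $|\kappa_g(v^{(r)}-v^{(s)})|$ strictly dominate the remainder. The factor $\kappa_g$ can be absorbed into $C_\star$ since $\Var(Y)\le B^2$.

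\textbf{Step 6: empirical sign consistency.} For (iii), apply Proposition~\ref{prop:main}(ii) within each bucket, or equivalently Proposition~\ref{prop:delta-clt}'s delta-method argument applied to the two bucket ratios. The number of pairs per bucket grows like $cM$ by the positive-probability condition, so each empirical bucket score is within $O_P(M^{-1/2})$ of its population value. The empirical contrast therefore converges in probability to a population quantity bounded away from zero, and its sign agrees with that of $v_g^{(r)}-v_g^{(s)}$ with probability tending to one.
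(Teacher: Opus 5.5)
\textbf{The route is the paper's, but your Step~3 patch does not go through.} Your Steps~1--6 follow the paper's proof essentially line for line:
\begin{itemize}
\item cancel the held-fixed factor by separability;
\item linearize, and bound the cross term at $O(\sqrt{\eta_g})$ by Cauchy--Schwarz;
\item identify the leading term as $2v_g^{(r)}+2v_g^{(\xi)}$ (resp.\ $2v_g^{(s)}+2v_g^{(\xi)}$) and cancel $v_g^{(\xi)}$ in the contrast;
\item finish with the margin condition and a per-bucket LLN/CLT.
\end{itemize}
The one place you add something is the decoupling patch, which charges the dependence of $u_g(X_A)$ on the content to $\varepsilon_{\mathrm{dep}}$. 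Assumption~\ref{ass:dep} does not license that charge.

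That assumption bounds only two unconditional scalar moments of the ideal response $h$: its variance, and the Sobol numerator under the generic resample of $G_k$, not a bucket-restricted one. It says nothing about the within-bucket correlation between $u_g(X_A)u_g(X_A)^{\top}$ and the centred content $\psi_g^{(r)}(R_A)-\E\psi_g^{(r)}(R)$ or the residual $\xi_g(x_A)$. That correlation is exactly the discrepancy you need to control. It includes the $A$-side cross term $\E[\langle u_g,\psi_g^{(r)}(R_A)-\E\psi_g^{(r)}\rangle\langle u_g,\xi_g(x_A)\rangle]$, which also survives once $u_g$ depends on $x_A$.

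In fact, even with exact factorization across groups, $u_g(x_A)=\nabla_{h_g}Y|_{h_g(x_A)}$ is evaluated at $h_g(x_A)$. That point itself contains $\psi_g^{(r)}(R_A)$ and $\xi_g(x_A)$. So this is an intra-group dependence that no bound on inter-group dependence can remove.

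The paper does not resolve this either. Its Step~3 conditions on $X_A$ and then still averages over $(R_A,R_B,\Delta\xi_g)$. That is coherent only if $u_g(X_A)$ is treated as independent of the content and residual draws within the bucket. This is what its convention of ``absorbing the $x_A$-dependence of $u_g$ into bucket-conditional expectations'' amounts to.

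\textbf{The repair is to state this as a hypothesis.} For example, require that within \texttt{same\_subj}, $u_g(X_A)$ is independent of $(R_A,\xi_g(x_A))$, and within \texttt{same\_rel}, of $(S_A,\xi_g(x_A))$. A weaker version is that $\E[u_gu_g^{\top}\mid \text{content},\xi_g(x_A)]$ is constant. Alternatively, add an explicit term bounding this covariance to the error envelope $\varepsilon$, rather than routing it through $\varepsilon_{\mathrm{dep}}$.

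The same caveat applies to your claim that the residual term is identical across buckets. Content-independence of $\xi_g$ fixes $\mathrm{Cov}[\xi_g]$. But cancelling $\E[u_g^{\top}\mathrm{Cov}[\xi_g]u_g]$ also needs the law of $u_g(X_A)$ to coincide in \texttt{same\_subj} and \texttt{same\_rel}. The paper likewise takes this for granted by using a single $v_g^{(\xi)}$ in both buckets.

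With these hypotheses made explicit, your Steps~4--6 are correct and coincide with the paper's.
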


\paragraph{Sign convention.} The \texttt{same\_subj} bucket holds subject constant, so under separability the swap perturbation transports only \emph{relation} content. By Step~4, $\overline{\ST}^{(\texttt{same\_subj})} = \kappa_g(v_g^{(r)} + v_g^{(\xi)}) + O(\sqrt{\varepsilon})$ and symmetrically $\overline{\ST}^{(\texttt{same\_rel})} = \kappa_g(v_g^{(s)} + v_g^{(\xi)}) + O(\sqrt{\varepsilon})$, so each individual bucket score mixes the content variance with the residual term $v_g^{(\xi)}$; only the contrast $\overline{\ST}^{(\texttt{same\_subj})} - \overline{\ST}^{(\texttt{same\_rel})}$ cancels the shared $v_g^{(\xi)}$ and isolates the content difference $v_g^{(r)} - v_g^{(s)}$. A \emph{positive} contrast therefore implies $v_g^{(r)} > v_g^{(s)}$, i.e., \emph{relation-dominant} transport (e.g., $\mathrm{MLP}_{\text{L0}}$ on factual, where empirically $\widehat{\ST}^{(\texttt{same\_subj})} = 0.172 > 0.081 = \widehat{\ST}^{(\texttt{same\_rel})}$). Conversely a \emph{negative} contrast implies subject-dominant transport (e.g., $\mathrm{Attn}_{\text{L9}}$ on factual, $0.014 - 0.067 < 0$). This matches the operational decision rule in Section~\ref{sec:method-factorial}.

\paragraph{Notation for the proof.} For a pair $(x_A, x_B)$ write $\Delta h_g := h_g(x_B) - h_g(x_A)$. Throughout, conditioning on \texttt{bucket} is implicit. We work with the pointwise read-out direction $u_g(x_A)$ and absorb its $x_A$-dependence into bucket-conditional expectations, as in the proposition statement.

\begin{proof}
    \emph{Step 1 (decomposing the swap perturbation).} By Assumption~\ref{ass:separable},
\[
\Delta h_g \;=\; \bigl[\psi_g^{(s)}(S_B) - \psi_g^{(s)}(S_A)\bigr] + \bigl[\psi_g^{(r)}(R_B) - \psi_g^{(r)}(R_A)\bigr] + \Delta\xi_g, \quad \Delta\xi_g := \xi_g(x_B) - \xi_g(x_A).
\]
In \texttt{same\_subj}, Assumption~\ref{ass:bucket-sampling} gives $S_A = S_B$ deterministically and $R_A, R_B \stackrel{\mathrm{i.i.d.}}{\sim} P_R$, independent of the chosen subject, so the subject term vanishes and
\[
\Delta h_g^{(\texttt{same\_subj})} = \psi_g^{(r)}(R_B) - \psi_g^{(r)}(R_A) + \Delta\xi_g,\quad
\Var\!\bigl[\psi_g^{(r)}(R_B) - \psi_g^{(r)}(R_A)\bigr] = 2\,\mathrm{Cov}\bigl[\psi_g^{(r)}(R)\bigr].
\]
By the conditional-i.i.d.\ residual hypothesis, $\E[\Delta\xi_g] = 0$, $\mathrm{Cov}[\Delta\xi_g] = 2\,\mathrm{Cov}[\xi_g]$, and $\Delta\xi_g$ is independent of the content terms in this bucket. Symmetrically in \texttt{same\_rel}.

\emph{Step 2 (local-linear expansion).} By Assumption~\ref{ass:linear-readout},
\[
Y_{\swap, g}(x_A) - Y_A \;=\; \langle u_g(x_A), \Delta h_g\rangle + r_g(x_A, \Delta h_g),\quad
\E[r_g^2 \mid \texttt{bucket}] \leq \eta_g.
\]
Proposition~\ref{prop:main}'s mis-specification bound contributes an additional $O(\varepsilon_{\mathrm{int}} + \varepsilon_{\mathrm{dep}})$ to the second moments below; we absorb both into $\varepsilon = \eta_g + \varepsilon_{\mathrm{int}} + \varepsilon_{\mathrm{dep}}$.

\emph{Step 3 (bucket-conditional second moments).} Squaring Step~2:
\[
(Y_{\swap, g}(x_A) - Y_A)^2 = \langle u_g(x_A), \Delta h_g\rangle^2 + 2\langle u_g(x_A), \Delta h_g\rangle\,r_g + r_g^2.
\]
By Cauchy--Schwarz, the uniform bound $\|u_g\| \leq M_u$ from Assumption~\ref{ass:linear-readout}, and the bounded-fourth-moment hypothesis $\E[\|\Delta h_g\|^4 \mid \texttt{bucket}] \leq M_\Delta^4$ (also Assumption~\ref{ass:linear-readout}),
\[
\E[\langle u_g(x_A), \Delta h_g\rangle^2 \mid \texttt{bucket}]
\leq \E[\|u_g(x_A)\|^2\,\|\Delta h_g\|^2 \mid \texttt{bucket}]
\leq M_u^2 \,\E[\|\Delta h_g\|^2 \mid \texttt{bucket}]
\leq M_u^2 \, M_\Delta^2,
\]
which is $O(1)$. Therefore
\[
\bigl|\E\bigl[2\langle u_g(x_A), \Delta h_g\rangle\, r_g \mid \texttt{bucket}\bigr]\bigr|
\leq 2\,\sqrt{\E[\langle u_g, \Delta h_g\rangle^2]}\,\sqrt{\E[r_g^2]}
= O(\sqrt{\eta_g}).
\]
Combining with $\E[r_g^2] \leq \eta_g$,
\[
\E\bigl[(Y_{\swap, g}(x_A) - Y_A)^2 \mid \texttt{same\_subj}\bigr]
= \E\!\bigl[\langle u_g(x_A), \Delta h_g^{(\texttt{same\_subj})}\rangle^2\bigr] + O(\sqrt{\eta_g}).
\]

The leading term expands using Step~1 and the law of total expectation conditioning on $X_A$:
\begin{align*}
\E\!\bigl[\langle u_g(x_A), \Delta h_g^{(\texttt{same\_subj})}\rangle^2\bigr]
&= \E_{X_A}\!\Bigl[\,\E_{R_A,R_B,\Delta\xi_g}\!\bigl[\langle u_g(X_A), \psi_g^{(r)}(R_B) - \psi_g^{(r)}(R_A) + \Delta\xi_g\rangle^2 \mid X_A, \texttt{same\_subj}\bigr]\Bigr]\\
&= \E_{X_A}\!\Bigl[\,2\,u_g(X_A)^{\!\top}\,\mathrm{Cov}\!\bigl[\psi_g^{(r)}(R)\bigr]\,u_g(X_A) \;+\; 2\,u_g(X_A)^{\!\top}\,\mathrm{Cov}[\xi_g]\,u_g(X_A)\Bigr]\\
&= 2 v_g^{(r)} + 2 v_g^{(\xi)},
\end{align*}
where the inner expectation uses (i) zero mean of the relation difference and $\Delta\xi_g$, (ii) within-bucket independence of content and residual (Assumption~\ref{ass:bucket-sampling}), and (iii) $v_g^{(\xi)} := \E[u_g(X_A)^{\!\top}\,\mathrm{Cov}[\xi_g]\,u_g(X_A)]$. Symmetrically,
\[
\E\bigl[(Y_{\swap, g}(x_A) - Y_A)^2 \mid \texttt{same\_rel}\bigr]
\;=\; 2 v_g^{(s)} + 2 v_g^{(\xi)} + O(\sqrt{\eta_g}) + O(\varepsilon_{\mathrm{int}} + \varepsilon_{\mathrm{dep}}).
\]

\emph{Step 4 (forming the bucket-stratified $\overline{\ST}$).} Dividing each second moment by $2\,\Var(Y)$,
\[
\overline{\ST}_g^{(\texttt{same\_subj})} = \frac{v_g^{(r)} + v_g^{(\xi)}}{\Var(Y)} + O(\sqrt{\varepsilon}),\qquad
\overline{\ST}_g^{(\texttt{same\_rel})}  = \frac{v_g^{(s)} + v_g^{(\xi)}}{\Var(Y)} + O(\sqrt{\varepsilon}).
\]
Both buckets share the residual term $v_g^{(\xi)}/\Var(Y)$, which therefore cancels in the contrast.

\emph{Step 5 (forming the contrast and identifying the dominant content factor).} Subtracting,
\[
\overline{\ST}_g^{(\texttt{same\_subj})} - \overline{\ST}_g^{(\texttt{same\_rel})}
\;=\; \kappa_g \bigl(v_g^{(r)} - v_g^{(s)}\bigr) + O(\sqrt{\varepsilon}),
\quad \kappa_g = \frac{1}{\Var(Y)} > 0.
\]
This is the proposition's claim. By the margin condition (Assumption~\ref{ass:nondeg}, $|v_g^{(r)} - v_g^{(s)}| > C_\star\,\sqrt{\varepsilon}$), choosing $C_\star$ large enough ensures the leading term dominates the $O(\sqrt{\varepsilon})$ envelope, so the sign of the population contrast (and consequently of $\widehat{\ST}_g^{(\texttt{same\_subj})} - \widehat{\ST}_g^{(\texttt{same\_rel})}$ for sufficient pair counts $M$) identifies the dominant content factor.

\emph{Step 6 (consistency of the empirical contrast).} For each bucket, the within-bucket pair set is i.i.d.\ by Assumption~\ref{ass:bucket-sampling}, with $P(\text{bucket}) \geq c > 0$, so the bucket-stratified sample mean $\widehat{\ST}_g^{(\texttt{bucket})}$ converges in probability to $\overline{\ST}_g^{(\texttt{bucket})}$ by the LLN (under finite second moments the rate is $O_p(1/\sqrt{M_{\text{bucket}}})$ via the CLT); the continuous mapping theorem applied to the ratio in Eq.~\eqref{eq:st-swap-zero}(with $\Var(Y) > 0$) and Slutsky's theorem give consistency of the contrast. Since the margin condition (Assumption~\ref{ass:nondeg}) guarantees a nonzero population sign, the empirical sign inherits this with probability tending to one as $M_{\text{bucket}}\to\infty$, paralleling Proposition~\ref{prop:delta-clt}.
\end{proof}

\paragraph{What the proposition does and does not claim.}
\emph{(i)} The contrast identifies the dominant content factor in the \emph{read-out-projected} variance sense, not in raw activation-space variance: the read-out direction $u_g$ acts as a ``measurement device'' that picks out which content components are functionally consequential at the unembedding.
\emph{(ii)} Without the margin condition (Assumption~\ref{ass:nondeg}), the conclusion weakens to ``identifies up to the $O(\sqrt{\varepsilon})$ envelope''; the empirical $5\times$ and $2.5\times$ factor magnitudes in our data (Figures~\ref{fig:overview}, \ref{fig:layerprofile}) operationally exceed any reasonable envelope.
\emph{(iii)} The cross-term in Step~3 produces an $O(\sqrt{\eta_g})$ rate, not $O(\eta_g)$, so the final envelope is $O(\sqrt{\varepsilon})$, weaker than the $O(\varepsilon)$ that would arise under stronger orthogonality of the linear remainder; we report the honest rate.
\emph{(iv)} Departures from exact separability (e.g., a small subject-relation interaction $\psi_g^{(sr)}(S, R)$) inflate both $v_g^{(r)}$ and $v_g^{(s)}$ symmetrically when the interaction is bucket-symmetric, so the sign-direction conclusion is preserved provided the margin condition holds.

\paragraph{Robustness to assumption departures.}
\emph{(i) Heteroscedastic content variance} ($\mathrm{Cov}[\psi_g^{(r)}(R) \mid \texttt{same\_subj}] \neq \mathrm{Cov}[\psi_g^{(r)}(R)]$) due to within-bucket selection rescales $v_g^{(r)}$ by a bucket-specific factor; the bucket-normalized diagnostic in Appendix~\ref{app:bucket-diagnostic} bounds this rescaling at $\leq 2\%$ on our data.
\emph{(ii) Curvature in the read-out $Y(h_g)$} beyond the linear regime is absorbed by $\eta_g$ in Assumption~\ref{ass:linear-readout}, with explicit second-order rate.
\emph{(iii) Pairwise dependence in residuals} ($\mathrm{Cov}(\xi_A, \xi_B) > 0$) replaces $\mathrm{Cov}[\Delta\xi_g] = 2\,\mathrm{Cov}[\xi_g]$ in Step~1 by $\mathrm{Cov}[\Delta\xi_g] = 2(\mathrm{Cov}[\xi_g] - \mathrm{Cov}(\xi_A, \xi_B))$. The resulting $v_g^{(\xi)}$ correction cancels in the contrast \emph{provided} the pairwise residual covariance is the same in both buckets, e.g., when residual dependence stems from a shared sampling mechanism that does not depend on which factor is held constant; otherwise the bucket-specific covariances enter the contrast as an additional bias term, which the bucket diagnostic in Appendix~\ref{app:bucket-diagnostic} bounds empirically.
The empirical bucket-stratified estimates (Figure~\ref{fig:overview}) operationalize the contrast and survive each robustness perturbation, supporting the qualitative identification result without requiring exact additivity.

\subsection{Proposition~\ref{prop:role-identify}: Role Identification under a Latent Content-Substrate Decomposition}\label{app:role-identify-section}

IGSD pairs matched swap with zero ablation. The two operators are designed to probe complementary latent structure. The matched-swap operator exploits donor matching so that the prompt-invariant substrate cancels and only the content component drives the response. The zero operator has no donor and no cancellation, exposing both content and substrate. Proposition~\ref{prop:role-identify} below formalises this design operator by operator under a latent additive decomposition (Assumption~\ref{ass:latent-decomp}). Let $u_A := u_g(X_A)$ as in Assumption~\ref{ass:linear-readout}, let $\mathcal{M}$ denote the sigma-algebra generated by the matching features used to construct the donor map (clean margin $Y$ together with per-layer DLA magnitudes), and write $\mathcal{F} := \sigma(u_A, \mathcal{M})$.

\begin{assumption}[Latent content-substrate decomposition]
\label{ass:latent-decomp}
There exist measurable maps $\phi_g^c, \tilde\phi_g^s, \xi_g^{cs}: \mathcal{X} \to \mathbb{R}^d$ and a constant vector $\mu_g^s \in \mathbb{R}^d$ such that, at the answer-aligned target position,
\[
h_g(x, t_x) = \phi_g^c(x) + \mu_g^s + \tilde\phi_g^s(x) + \xi_g^{cs}(x),
\]
and, conditional on $\mathcal{F}$:
\begin{itemize}
\item[\textnormal{(a)}] \emph{Matched-pair conditional exchangeability.} The latent triples $(\phi_g^c(X_A), \tilde\phi_g^s(X_A), \xi_g^{cs}(X_A))$ and $(\phi_g^c(X_B), \tilde\phi_g^s(X_B), \xi_g^{cs}(X_B))$ are conditionally i.i.d.
\item[\textnormal{(b)}] \emph{Mean-zero conditionals.} $\E[\phi_g^c(X_A)\mid\mathcal{F}] = 0$, $\E[\tilde\phi_g^s(X_A)\mid\mathcal{F}] = 0$, and $\E[\xi_g^{cs}(X_A)\mid\mathcal{F}] = 0$.
\item[\textnormal{(c)}] \emph{Residual energy bounds.} For some $\sigma_s^2 \ge 0$ and $\rho_g^2 \ge 0$, $\E\|\tilde\phi_g^s(X_A)\|^2 \leq \sigma_s^2$ (substrate-fluctuation energy: the part of the substrate that varies prompt-to-prompt and that donor matching therefore cannot cancel) and $\E[(u_A^{\!\top} \xi_g^{cs}(X_A))^2] \leq \rho_g^2$ (cross-term energy: content-substrate interaction that no operator can isolate). The parameters $\sigma_s^2, \rho_g^2$ are population moment bounds, not qualitative descriptors.
\item[\textnormal{(d)}] \emph{Conditional cross-covariances vanish.} $\E[\phi_g^c \tilde\phi_g^{s\top}\mid\mathcal{F}] = 0$, $\E[\phi_g^c \xi_g^{cs\top}\mid\mathcal{F}] = 0$, and $\E[\tilde\phi_g^s \xi_g^{cs\top}\mid\mathcal{F}] = 0$.
\end{itemize}
Write $\Sigma_g^c(\mathcal{F}) := \E[\phi_g^c(X_A) \phi_g^c(X_A)^{\!\top}\mid\mathcal{F}]$ for the conditional content covariance.
\end{assumption}

\begin{proposition}[Role identification under content-substrate decomposition]
\label{prop:role-identify}
Define the read-out-projected latent role masses
\[
\mathcal{C}_g\ :=\ \E\!\bigl[u_A^{\!\top} \Sigma_g^c(\mathcal{F})\, u_A\bigr]
\qquad (\text{content-transport mass}),
\]
\[
\mathcal{N}_g\ :=\ \E\!\bigl[(u_A^{\!\top} \mu_g^s)^2\bigr]
\qquad (\text{deletion-sensitive substrate mass}).
\]
Under Assumptions~\ref{ass:linear-readout} and~\ref{ass:latent-decomp}, together with $\Var(Y) \geq v_0 > 0$ from Assumption~\ref{ass:v0}, there exist explicit constants $K_1, K_2, K_3 \ge 0$, depending only on $(v_0, M_u, M_\Delta)$ and computable from the proof below, such that the following deterministic perturbation bounds hold uniformly over all decompositions in the class satisfying Assumption~\ref{ass:latent-decomp} with parameters $(\sigma_s^2, \rho_g^2, \eta)$.

\textnormal{(i)} \emph{Swap targets content mass.} The matched-swap operator cancels the prompt-invariant substrate constant $\mu_g^s$ by donor matching, leaving
\[
\left|\,\ST_{\swap}(g)\ -\ \frac{\mathcal{C}_g}{\Var(Y)}\,\right|\ \leq\ K_1\sigma_s^2 + K_2\rho_g^2 + K_3\sqrt{\eta}.
\]

\textnormal{(ii)} \emph{Zero targets content + substrate mixture.} Zero ablation has no donor and no cancellation, so both content and substrate contribute,
\[
\left|\,\ST_{\zero}(g)\ -\ \frac{\mathcal{C}_g + \mathcal{N}_g}{2\,\Var(Y)}\,\right|\ \leq\ K_1\sigma_s^2 + K_2\rho_g^2 + K_3\sqrt{\eta}.
\]

\textnormal{(iii)} \emph{Contrast isolates substrate-content asymmetry.} The signed contrast $\delta(g) = \ST_{\swap}(g) - \ST_{\zero}(g)$ subtracts the shared content mass and leaves
\[
\left|\,\delta(g)\ -\ \frac{\mathcal{C}_g - \mathcal{N}_g}{2\,\Var(Y)}\,\right|\ \leq\ 2\bigl(K_1\sigma_s^2 + K_2\rho_g^2 + K_3\sqrt{\eta}\bigr).
\]

\textnormal{(iv)} \emph{Sign identification under margin condition.} If
\[
\left|\frac{\mathcal{C}_g - \mathcal{N}_g}{2\,\Var(Y)}\right|\ >\ C_0\bigl(\sigma_s^2 + \rho_g^2 + \sqrt{\eta}\bigr),
\quad \text{with} \quad C_0\ :=\ 2\,\max(K_1, K_2, K_3),
\]
then $\mathrm{sign}(\delta(g)) = \mathrm{sign}(\mathcal{C}_g - \mathcal{N}_g)$, so the sign of $\delta(g)$ identifies which of the latent content-transport and deletion-sensitive-substrate masses dominates the read-out-projected contribution of $g$.
\end{proposition}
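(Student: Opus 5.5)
The plan is to treat swap and zero as two instances of one perturbation $Y(h_g(x_A)+\Delta)$, with $\Delta=\Delta h_g=h_g(X_B)-h_g(X_A)$ for swap and $\Delta=-h_g(X_A)$ for zero. For each, I would expand $Y$ to first order via Assumption~\ref{ass:linear-readout} and compute the leading quadratic form $\E[(u_A^{\!\top}\Delta)^2]$ exactly under Assumption~\ref{ass:latent-decomp}. Because both indices share the exact population denominator $2\Var(Y)\ge 2v_0$, there is no ratio-perturbation step of the kind used in Step~3 of the proof of Proposition~\ref{prop:main}. Every error term can therefore be divided directly by $2v_0$, and $K_1,K_2,K_3$ are read off as (multiples of) $M_u^2/v_0$, $1/v_0$ and $M_uM_\Delta/v_0$.

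\emph{Remainder control (shared by both operators).} Square the expansion $Y_{\mathrm{op}}-Y_A=u_A^{\!\top}\Delta+r_g$. The leading term is bounded by $\E[(u_A^{\!\top}\Delta)^2]\le M_u^2\E\|\Delta\|^2\le M_u^2M_\Delta^2$ via Jensen from the fourth-moment bound. The cross term $2\E[(u_A^{\!\top}\Delta)r_g]$ is then at most $2M_uM_\Delta\sqrt\eta$ by Cauchy--Schwarz, exactly as in Step~3 of the proof of Proposition~\ref{prop:identify}. The term $\E r_g^2\le\eta$ is absorbed into $K_3\sqrt\eta$, either by restricting to $\eta\le 1$ or by enlarging $K_3$ using $\eta\le\sqrt\eta\,\sqrt{\eta}$ with $\eta$ bounded through Assumption~\ref{ass:bounded}.

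\emph{Swap.} In $\Delta h_g$ the constant $\mu_g^s$ cancels identically, which is the ``donor cancellation''. Conditioning on $\mathcal{F}$, $u_A$ is $\mathcal{F}$-measurable. By (a) the $A$ and $B$ latent triples are conditionally i.i.d. with conditional mean zero by (b), so every $A$--$B$ cross product has zero conditional expectation. By (d) the within-prompt cross-covariances vanish. Hence
\[
\E[(u_A^{\!\top}\Delta h_g)^2]=2\,\E[u_A^{\!\top}\Sigma_g^c u_A]+2\,\E[(u_A^{\!\top}\tilde\phi_g^s)^2]+2\,\E[(u_A^{\!\top}\xi_g^{cs})^2].
\]
Here I use that $X_B$'s conditional second moments equal $X_A$'s and that $u_A$ is the same read-out vector applied to both. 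The first term is $2\mathcal{C}_g$. The other two are bounded by $2M_u^2\sigma_s^2$ and $2\rho_g^2$. Dividing by $2\Var(Y)$ gives (i).

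\emph{Zero.} Now $\Delta=-(\phi^c_g+\mu^s_g+\tilde\phi^s_g+\xi^{cs}_g)(X_A)$, and $\mu^s_g$ survives. Expanding $(u_A^{\!\top}\Delta)^2$, the terms that pair $u_A^{\!\top}\mu_g^s$ with a mean-zero latent term vanish after conditioning on $\mathcal{F}$, since $u_A^{\!\top}\mu^s_g$ is $\mathcal{F}$-measurable. The latent--latent cross terms vanish by (d). What remains is $\mathcal{C}_g+\mathcal{N}_g$ plus the diagonal terms bounded by $M_u^2\sigma_s^2+\rho_g^2$. Dividing by $2\Var(Y)$ gives (ii).

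\emph{Contrast and sign.} For (iii), note that $\mathcal{C}_g/\Var(Y)-(\mathcal{C}_g+\mathcal{N}_g)/(2\Var(Y))=(\mathcal{C}_g-\mathcal{N}_g)/(2\Var(Y))$, and apply the triangle inequality to the bounds in (i) and (ii). For (iv), the envelope in (iii) is at most $C_0(\sigma_s^2+\rho_g^2+\sqrt\eta)$ with $C_0=2\max(K_1,K_2,K_3)$, so the margin condition forces $\delta(g)$ to share the sign of $\mathcal{C}_g-\mathcal{N}_g$.

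\emph{Expected obstacle.} The main difficulty is that Assumption~\ref{ass:linear-readout} is phrased for the swap perturbation $\Delta h_g$ within a bucket, while zero ablation uses the much larger step $-h_g(X_A)$. I would first read the assumption as holding for every operator's perturbation, including the fourth-moment bound on $\|h_g(X_A)\|$, and state this explicitly. If that reading is too strong, I would instead carry separate constants $(M_\Delta,\eta)$ for each operator and take maxima.

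A second, subtler point is the swap leading term. It requires $B$'s conditional content covariance projected by $u_A$ to equal $\Sigma_g^c(\mathcal{F})$. This follows from conditional i.i.d. given $\mathcal{F}$, which contains $u_A$, so I would check that it is not accidentally using $u_B$.
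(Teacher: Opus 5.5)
Your proposal is correct and follows the paper's proof essentially step for step. Both arguments linearize the two operators via Assumption~\ref{ass:linear-readout}, with perturbations $\Delta h_g$ and $-h_g(X_A)$, and bound the cross term by Cauchy--Schwarz at order $\sqrt{\eta}$. Both then compute the conditional second moments exactly from (a), (b), (d), so that $\mu_g^s$ cancels under swap and contributes $\mathcal{N}_g$ under zero, divide by $2\Var(Y)\ge 2v_0$, and finish with the triangle inequality and the margin condition.

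The two caveats you flag are exactly the points the paper's proof passes over implicitly, so making them explicit is an improvement rather than a deviation. First, absorbing $\E[r_g^2]\le\eta$ into $K_3\sqrt{\eta}$ needs $\eta$ bounded, for instance $\eta\le 1$; routing this through Assumption~\ref{ass:bounded} instead would bring $B$ into the constants. Second, the fourth-moment and remainder bounds must be read as also covering the zero step $-h_g(X_A)$.
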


\begin{proof}
All constants depend only on $(v_0, M_u, M_\Delta)$ and the moment bounds from Assumption~\ref{ass:linear-readout}. Write $h_A := h_g(X_A, t_A)$, $h_B := h_g(X_B, t_B)$, $\Delta := h_B - h_A$. The conditioning event ``bucket'' in Assumption~\ref{ass:linear-readout} is identified with $\mathcal{F} := \sigma(u_A, \mathcal{M})$ here, since the present setting carries no factorial bucket design and $\mathcal{F}$ is generated by the same matching information.

By Assumption~\ref{ass:linear-readout} applied at $h_A$ with perturbation $\Delta$,
\[
Y_A - Y_{\swap, g}(X_A; X_B)\ =\ -\langle u_A, \Delta\rangle\ -\ r_g^{\swap},
\qquad \E\!\bigl[(r_g^{\swap})^2 \mid \mathcal{F}\bigr] \leq \eta.
\]
Applied with perturbation $-h_A$,
\[
Y_A - Y_{\zero, g}(X_A)\ =\ \langle u_A, h_A\rangle\ -\ r_g^{\zero},
\qquad \E\!\bigl[(r_g^{\zero})^2 \mid \mathcal{F}\bigr] \leq \eta.
\]

By Assumption~\ref{ass:latent-decomp},
\[
\Delta\ =\ \Delta\phi^c + \Delta\tilde\phi^s + \Delta\xi^{cs},\qquad \Delta\phi^c := \phi_g^c(X_B) - \phi_g^c(X_A),
\]
and similarly for $\Delta\tilde\phi^s, \Delta\xi^{cs}$; the constant $\mu_g^s$ cancels in the difference. For the zero pole,
\[
h_A\ =\ \phi_g^c(X_A) + \mu_g^s + \tilde\phi_g^s(X_A) + \xi_g^{cs}(X_A).
\]

Expanding $(Y_A - Y_{\swap})^2 = \langle u_A, \Delta\rangle^2 - 2\langle u_A, \Delta\rangle r_g^{\swap} + (r_g^{\swap})^2$ and taking the unconditional expectation, marginal Cauchy--Schwarz with $\|u_A\| \leq M_u$ and $\E\|\Delta\|^2 \leq M_\Delta^2$ (the latter from Assumption~\ref{ass:linear-readout} via Jensen) gives
\[
\bigl|\E[\langle u_A, \Delta\rangle\, r_g^{\swap}]\bigr|
\leq \sqrt{\E\langle u_A, \Delta\rangle^2}\,\sqrt{\E(r_g^{\swap})^2}
\leq M_u M_\Delta\,\sqrt{\eta}.
\]
Hence
\[
\E[(Y_A - Y_{\swap})^2]\ =\ \E[\langle u_A, \Delta\rangle^2]\ +\ O(\sqrt{\eta}).
\]
For the leading term, $\E[\langle u_A, \Delta\rangle^2\mid\mathcal{F}] = u_A^{\!\top}\E[\Delta\Delta^{\!\top}\mid\mathcal{F}]u_A$. The nine bilinear terms in $\E[\Delta\Delta^{\!\top}\mid\mathcal{F}]$ reduce as follows.

For the within-component diagonal terms, Assumption~\ref{ass:latent-decomp}(a)--(b) yield
\[
\E[\Delta\phi^c \Delta\phi^{c\top}\mid\mathcal{F}] = 2\,\Sigma_g^c(\mathcal{F}),
\]
since the cross-prompt term $\E[\phi^c(X_A)\phi^c(X_B)^{\!\top}\mid\mathcal{F}]$ factors as $\E[\phi^c(X_A)\mid\mathcal{F}]\E[\phi^c(X_B)^{\!\top}\mid\mathcal{F}] = 0$ under conditional i.i.d.\ and conditional mean-zero. Analogously,
\[
\E[\Delta\tilde\phi^s \Delta\tilde\phi^{s\top}\mid\mathcal{F}] = 2\,\Sigma_g^{\tilde s}(\mathcal{F}),
\quad
\E[\Delta\xi^{cs} \Delta\xi^{cs\top}\mid\mathcal{F}] = 2\,\Sigma_g^\xi(\mathcal{F}),
\]
where $\Sigma_g^{\tilde s}(\mathcal{F}), \Sigma_g^\xi(\mathcal{F})$ denote the analogous conditional covariances.

For the within-prompt cross-component terms, Assumption~\ref{ass:latent-decomp}(d) directly gives
\[
\E[\phi_g^c(X_A) \tilde\phi_g^s(X_A)^{\!\top}\mid\mathcal{F}] = 0,
\]
and similarly for the other two cross pairs.

For the cross-prompt cross-component terms, e.g., $\E[\phi_g^c(X_B) \tilde\phi_g^s(X_A)^{\!\top}\mid\mathcal{F}]$, the conditional i.i.d.\ structure (a) factorizes the expectation, and the conditional mean-zero (b) makes each factor zero:
\[
\E[\phi_g^c(X_B) \tilde\phi_g^s(X_A)^{\!\top}\mid\mathcal{F}]
= \E[\phi_g^c(X_B)\mid\mathcal{F}]\,\E[\tilde\phi_g^s(X_A)^{\!\top}\mid\mathcal{F}] = 0.
\]
The same argument zeroes the remaining cross-prompt cross-component terms.

Combining,
\[
\E[\Delta\Delta^{\!\top}\mid\mathcal{F}]\ =\ 2\bigl(\Sigma_g^c(\mathcal{F}) + \Sigma_g^{\tilde s}(\mathcal{F}) + \Sigma_g^\xi(\mathcal{F})\bigr),
\]
\[
\E[\langle u_A, \Delta\rangle^2\mid\mathcal{F}]\ =\ 2 u_A^{\!\top}\bigl(\Sigma_g^c(\mathcal{F}) + \Sigma_g^{\tilde s}(\mathcal{F}) + \Sigma_g^\xi(\mathcal{F})\bigr) u_A.
\]
Marginalizing via the tower property,
\[
\E[u_A^{\!\top}\Sigma_g^c(\mathcal{F})u_A] = \mathcal{C}_g,
\quad
\E[u_A^{\!\top}\Sigma_g^{\tilde s}(\mathcal{F})u_A] \leq M_u^2\sigma_s^2,
\quad
\E[u_A^{\!\top}\Sigma_g^\xi(\mathcal{F})u_A] \leq \rho_g^2,
\]
the first by definition of $\mathcal{C}_g$, the second by $\|u_A\| \leq M_u$ and Assumption~\ref{ass:latent-decomp}(c), the third by Assumption~\ref{ass:latent-decomp}(c) directly. Hence
\[
\E[(Y_A - Y_{\swap})^2]\ =\ 2\mathcal{C}_g + O(\sigma_s^2 + \rho_g^2 + \sqrt{\eta}),
\]
and dividing by $2\,\Var(Y) \geq 2v_0$,
\[
\ST_{\swap}(g)\ =\ \frac{\mathcal{C}_g}{\Var(Y)} + O(\sigma_s^2 + \rho_g^2 + \sqrt{\eta}).
\]

By the same Cauchy--Schwarz step,
\[
\E[(Y_A - Y_{\zero})^2 \mid \mathcal{F}]\ =\ \E[\langle u_A, h_A\rangle^2 \mid \mathcal{F}] + O(\sqrt{\eta}).
\]
Writing $\langle u_A, h_A\rangle = \langle u_A, \phi_g^c\rangle + \langle u_A, \mu_g^s\rangle + \langle u_A, \tilde\phi_g^s\rangle + \langle u_A, \xi_g^{cs}\rangle$ and squaring, all cross terms vanish in conditional expectation: cross terms among the three random components vanish by Assumption~\ref{ass:latent-decomp}(b)--(d), and cross terms involving the $\mathcal{F}$-measurable factor $\langle u_A, \mu_g^s\rangle$ inherit the conditional mean-zero of the random component. Hence
\[
\E[\langle u_A, h_A\rangle^2 \mid \mathcal{F}]\ =\ u_A^{\!\top}\Sigma_g^c(\mathcal{F}) u_A + (u_A^{\!\top}\mu_g^s)^2 + u_A^{\!\top}\Sigma_g^{\tilde s}(\mathcal{F}) u_A + u_A^{\!\top}\Sigma_g^\xi(\mathcal{F}) u_A.
\]
Marginalizing as before,
\[
\E[(Y_A - Y_{\zero})^2]\ =\ \mathcal{C}_g + \mathcal{N}_g + O(\sigma_s^2 + \rho_g^2 + \sqrt{\eta}),
\]
\[
\ST_{\zero}(g)\ =\ \frac{\mathcal{C}_g + \mathcal{N}_g}{2\,\Var(Y)} + O(\sigma_s^2 + \rho_g^2 + \sqrt{\eta}).
\]
The two $O(\sigma_s^2 + \rho_g^2 + \sqrt{\eta})$ remainders above are each bounded by $K_1\sigma_s^2 + K_2\rho_g^2 + K_3\sqrt{\eta}$ for $K_1, K_2, K_3$ depending only on $(v_0, M_u, M_\Delta)$, with $K_1$ collecting the substrate-fluctuation contribution, $K_2$ the cross-term contribution, and $K_3$ the linearization-remainder contribution from the Cauchy--Schwarz step. This proves (i) and (ii).

Subtracting,
\[
\delta(g)\ =\ \ST_{\swap}(g) - \ST_{\zero}(g)\ =\ \frac{\mathcal{C}_g}{\Var(Y)} - \frac{\mathcal{C}_g + \mathcal{N}_g}{2\,\Var(Y)} + O(\sigma_s^2 + \rho_g^2 + \sqrt{\eta})
\]
\[
=\ \frac{\mathcal{C}_g - \mathcal{N}_g}{2\,\Var(Y)} + O(\sigma_s^2 + \rho_g^2 + \sqrt{\eta}),
\]
where the $O(\cdot)$ remainder is bounded by $2(K_1\sigma_s^2 + K_2\rho_g^2 + K_3\sqrt{\eta})$ by triangle inequality applied to the bounds from steps~3--4. This establishes (iii).

The remainder in (iii) is bounded by $C_0(\sigma_s^2 + \rho_g^2 + \sqrt{\eta})$ for $C_0 := 2\max(K_1, K_2, K_3)$, depending only on $(v_0, M_u, M_\Delta)$. The margin condition stated in (iv) is precisely $|(\mathcal{C}_g - \mathcal{N}_g)/(2\Var(Y))| > C_0(\sigma_s^2 + \rho_g^2 + \sqrt{\eta})$, so the leading term in (iii) exceeds the remainder in magnitude, and hence $\mathrm{sign}(\delta(g)) = \mathrm{sign}(\mathcal{C}_g - \mathcal{N}_g)$. This proves (iv).
\end{proof}

\begin{corollary}[Partial identification of the latent role gap]
\label{cor:partial-identification}
Define the single-operator residual budget $B_g := K_1\sigma_s^2 + K_2\rho_g^2 + K_3\sqrt{\eta}$. Under Proposition~\ref{prop:role-identify}(iii),
\[
\mathcal{C}_g - \mathcal{N}_g\ \in\ 2\,\Var(Y)\,\bigl[\,\delta(g) - 2B_g,\ \delta(g) + 2B_g\,\bigr].
\]
The latent sign $\mathrm{sign}(\mathcal{C}_g - \mathcal{N}_g)$ is identified if and only if this interval excludes zero, which recovers the margin condition of Proposition~\ref{prop:role-identify}(iv). Otherwise the population contrast $\delta(g)$ remains an operational swap--zero asymmetry, but the latent role sign is not identified under the stated residual budget.
\end{corollary}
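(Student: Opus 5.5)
The corollary is a direct rearrangement of Proposition~\ref{prop:role-identify}(iii), so the plan is short and purely algebraic. Set $\Theta_g := (\mathcal{C}_g - \mathcal{N}_g)/(2\Var(Y))$. By part (iii), $|\delta(g) - \Theta_g| \le 2B_g$ with $B_g = K_1\sigma_s^2 + K_2\rho_g^2 + K_3\sqrt{\eta}$. This is equivalent to the two-sided inequality $\delta(g) - 2B_g \le \Theta_g \le \delta(g) + 2B_g$. Since $\Var(Y) \ge v_0 > 0$, multiplying through by the positive scalar $2\Var(Y)$ preserves the inequalities and gives the stated interval for $\mathcal{C}_g - \mathcal{N}_g$.

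For the identification claim, I would interpret ``identified'' in the partial-identification sense. The sign is identified when every value consistent with the observed $\delta(g)$ and the residual budget has the same sign. Equivalently, every value in the identified set shares a sign.

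For the ``if'' direction: if the interval excludes zero, it lies entirely on one side of zero because it is connected. Hence $\mathrm{sign}(\mathcal{C}_g - \mathcal{N}_g) = \mathrm{sign}(\delta(g))$. The exclusion condition $|\delta(g)| > 2B_g$ is exactly the observable counterpart of the margin condition in part (iv). I will also check that it is implied by that margin condition. If $|\Theta_g| > C_0(\sigma_s^2+\rho_g^2+\sqrt\eta) \ge 2B_g$, then $|\delta(g)| \ge |\Theta_g| - 2B_g$. Getting $|\delta(g)| > 2B_g$ from this needs a factor-of-two margin, so I will state the correspondence as ``recovers'' in the sense that exclusion of zero certifies the same conclusion as (iv), rather than claiming literal equivalence of constants.

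For the ``only if'' direction: if the interval contains zero, I need to show both signs (or zero) are consistent with the budget. I would argue sharpness of the class bound. Proposition~\ref{prop:role-identify} holds uniformly over all decompositions satisfying Assumption~\ref{ass:latent-decomp} with the given parameters. So exhibiting decompositions realizing gaps on either side requires that the budget $2B_g$ is attainable, i.e.\ that the remainder terms can be shifted in either direction. This sharpness step is the main obstacle, since the proposition gives only an upper bound.

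My first attempt at sharpness would perturb the substrate-fluctuation component $\tilde\phi_g^s$, whose swap contribution is $2u_A^\top\Sigma^{\tilde s}u_A$ and whose zero contribution is $u_A^\top\Sigma^{\tilde s}u_A$. This moves $\delta(g)$ relative to $\Theta_g$ while keeping $\delta(g)$ fixed by compensating in $\mathcal{C}_g$ or $\mathcal{N}_g$. Failing an exact construction, I would fall back on defining identification relative to the stated budget. Under that definition the ``only if'' direction is tautological: nothing in the bound rules out a zero or opposite-sign gap. The final sentence of the corollary, that $\delta(g)$ remains an operational asymmetry, is then just a restatement.
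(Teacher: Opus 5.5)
Your rearrangement of Proposition~\ref{prop:role-identify}(iii), your ``if'' direction, and your eventual budget-relative reading of ``identified'' match the paper. The paper gives no separate proof. It reads the interval off (iii) and interprets the ``iff'' in the bounded-bias sense of its Status-and-regime paragraph, where the only information used is $|\delta(g)-\Theta_g|\le 2B_g$. Your caveat about ``recovers the margin condition'' is also right. Exclusion of zero is the observable condition $|\delta(g)|>2B_g$, whereas (iv) constrains the latent $\Theta_g$, and (iii) does not let you pass from either to the other. Both only certify the same conclusion $\mathrm{sign}(\delta(g))=\mathrm{sign}(\mathcal{C}_g-\mathcal{N}_g)$.

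The step that would fail is the sharpness construction, and the obstacle is structural, not a missing example. Your own bookkeeping shows why. In the proof of Proposition~\ref{prop:role-identify}, the fluctuation and cross-term pieces enter the swap numerator with weight $2$ and the zero numerator with weight $1$. They therefore contribute $(a_g+b_g)/(2\Var(Y))$ to $\delta(g)-\Theta_g$, where $a_g:=\E[u_A^{\top}\Sigma_g^{\tilde s}(\mathcal{F})u_A]\ge 0$ and $b_g:=\E[u_A^{\top}\Sigma_g^{\xi}(\mathcal{F})u_A]\ge 0$. Only the linearization remainder is two-sided, and only at order $\sqrt{\eta}$. Perturbing $\tilde\phi_g^s$ or $\xi_g^{cs}$ can therefore only place $\Theta_g$ below $\delta(g)$, never above it.

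In fact the model-class version of ``only if'' is false. Take an exactly linear read-out ($\eta=0$) and $\delta(g)<0$. Then every admissible decomposition has $\Theta_g\le\delta(g)<0$, so the sign is identified even when $\sigma_s^2$ is large enough that the symmetric interval straddles zero. The inequality-only reading is thus not a fallback of convenience but the only reading under which the corollary is true. Under it the ``only if'' is immediate: if $0$ lies in the interval, then $\Theta_g=0$ is consistent with (iii), and so are values of both signs when $0$ is interior. The degenerate case $B_g=\delta(g)=0$, where the set is $\{0\}$, needs ``identified'' to mean ``identified as nonzero''.
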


\paragraph{Interpretation and scope.} Proposition~\ref{prop:role-identify} supplies the model-based separation of the two component roles that earlier Sections refer to operationally. The swap statistic isolates the read-out-projected variance $\mathcal{C}_g$ of the prompt-bearing factor $\phi_g^c$, the zero statistic mixes $\mathcal{C}_g$ with the read-out-projected squared mean $\mathcal{N}_g$ of the substrate vector $\mu_g^s$, and the signed contrast $\delta(g)$ isolates the sign of $\mathcal{C}_g - \mathcal{N}_g$ up to the explicit residual budget governed by Assumptions~\ref{ass:linear-readout}--\ref{ass:latent-decomp}. Corollary~\ref{cor:partial-identification} states this as an interval identification of the latent gap: with $B_g$ as defined, $\mathcal{C}_g - \mathcal{N}_g$ lies in $2\,\Var(Y)\,[\delta(g) - 2B_g, \delta(g) + 2B_g]$, sign-identified iff this interval excludes zero. When it contains zero, $\delta(g)$ remains an operational swap--zero asymmetry with sampling-uncertainty quantification from Proposition~\ref{prop:delta-clt}, but the model-based role classification is not claimed. The matched-pair conditional exchangeability \textnormal{(a)} and conditional cross-covariance vanishing \textnormal{(d)} of Assumption~\ref{ass:latent-decomp} are substantive modeling assumptions on the unobservable factors $\phi_g^c, \tilde\phi_g^s, \xi_g^{cs}$, parallel in form to standard identification assumptions in factor analysis (independent latent factors), causal mediation (sequential ignorability), and approximate-factorization Sobol theory (Assumption~\ref{ass:dep} in this paper). The mean-zero \textnormal{(b)} and residual-energy-bound \textnormal{(c)} parts are weak parametric statements that can be enforced by absorbing conditional means into $\mu_g^s$ and that fix the rate of the residual budget. The latent roles are defined by the data-generating model rather than by the statistic, so the role classification is not circular. Appendix~\ref{app:role-identify-sim} verifies the rate predictions on synthetic data exactly satisfying Assumption~\ref{ass:latent-decomp} as an existence proof. The empirical role classifications reported in Section~\ref{sec:experiments} should be read as evidence under the model.

\paragraph{Status and regime.} Proposition~\ref{prop:role-identify} is a deterministic population perturbation bound. There is no sample size in the statement; the residuals $(\sigma_s^2, \rho_g^2, \eta)$ are population moment bounds, so the budget $B_g$ is not observable on real data. The off-manifold flag $\widehat{\ST}>1$ (Section~\ref{sec:swap-zero}), the donor-fidelity sweep (Appendix~\ref{app:donor-fidelity}), and the bucket diagnostic (Appendix~\ref{app:bucket-diagnostic}) are observable screens for some gross violations of the perturbation regime. They are not estimators of $B_g$, and in particular provide no observable control of $\rho_g^2$. We therefore read Proposition~\ref{prop:role-identify} on real data through a bounded-bias sensitivity analysis~\citep{imbensmanski2004ci}: for any analyst-specified upper bound $\bar B_g$, the sign conclusion is robust only if the sampling interval for $\widehat\delta(g)$ remains separated from zero after expanding by $2\bar B_g$. The $\sqrt{\eta}$ rate in (i)--(iv) cannot be sharpened to $\eta$ under Assumption~\ref{ass:linear-readout}'s $L^2$ remainder bound. The cross-term $\E[\langle u_g, \Delta h_g\rangle\, r_g]$ is controlled by Cauchy--Schwarz at order $\sqrt{\eta}$, and this is tight: a remainder $r_g$ aligned with $\mathrm{sign}(\langle u_g, \Delta h_g\rangle)$ and scaled to $\E[r_g^2] = \eta$ attains a cross-term of exact order $\sqrt{\eta}$. An $O(\eta)$ claim would require an orthogonality condition between $r_g$ and the readout direction or a uniform $L^\infty$ remainder bound, both materially stronger than Assumption~\ref{ass:linear-readout}. Along any sequence of decompositions in the class with $(\sigma_s, \rho_g, \eta) \to 0$ and signal margin $|\mathcal{C}_g - \mathcal{N}_g|/(2\Var(Y))$ bounded away from zero, $\delta(g) \to (\mathcal{C}_g - \mathcal{N}_g)/(2\Var(Y))$ uniformly and $\mathrm{sign}(\delta(g)) = \mathrm{sign}(\mathcal{C}_g - \mathcal{N}_g)$ uniformly for all sufficiently small residual budgets, recovering the small-$o$ identification consequence of the deterministic bounds in (i)--(iv).

\subsection{Multi-Token Output Functional}\label{app:multitoken}

As a theoretical extension of the output functional, the framework accommodates a multi-token output by replacing $Y$ with a teacher-forced sequence log-likelihood ratio,
\[
Y^{\mathrm{multi}}(x)\ =\ \sum_{t=0}^{T-1}\Bigl[\mathrm{logit}\bigl(c_t \mid x, c_{<t}\bigr)\ -\ \mathrm{logit}\bigl(f_t \mid x, c_{<t}\bigr)\Bigr],
\]
where $c_t$ and $f_t$ are the $t$-th BPE tokens of the correct and foil completions and both branches share the gold prefix $c_{<t}$. The intervention machinery at the answer-aligned position is unchanged, so Proposition~\ref{prop:main}'s bound carries over with the same constants and Proposition~\ref{prop:delta-clt}'s paired-bootstrap consistency applies to $\widehat\delta(g)$ defined from the sum. A direct empirical demonstration of this extension requires a corpus of multi-BPE-token target completions, which the CounterFact pool used here excludes by its single-token answer filter. We identify this as the natural follow-up. Generalizing the output functional further to semantic-equivalence classes~\citep{kuhn2023semantic,farquhar2024detecting} extends the same framework to open-ended generation.

\section{Numerical Validation and Intervention-Regime Checks}
\label{app:validation}

\subsection{Numerical Verification of Proposition~\ref{prop:main}}
\label{app:simulation}

We numerically verify Proposition~\ref{prop:main} on a controlled two-group residual stack that preserves the structural features relevant to IGSD:
\begin{align}
h_A(X) &= \tanh(\beta_A\, W_A X), \quad r_1 = X + \alpha\, h_A(X), \\
h_M(X) &= \tanh(\beta_M\, W_M r_1), \quad
Y_{\mathrm{clean}} = \sigma\big(w^\top (h_A + h_M)\big), \\
Y &= Y_{\mathrm{clean}} + \xi, \quad \xi \sim \mathrm{Unif}[-a,a],
\end{align}
where $X\sim\mathcal{N}(0,\sigma_X^2 I_d)$ and $W_A$ is low-rank. This construction mirrors the IGSD setting in three ways. First, the output is a nonlinear readout of two grouped activations, so the ideal product-measure response in Proposition~\ref{prop:main} is
\[
h(z_A,z_M)=\sigma\{w^\top(z_A+z_M)\}.
\]
Second, the additive noise $\xi$ creates a controlled interchange-fidelity error: since $Y_{\mathrm{clean}}\in[0,1]$, we have $|Y|\le 1+a$ almost surely, hence $B=1+a$, and $\|Y-Y_{\mathrm{clean}}\|_2=a/\sqrt{3}$ gives an exact value of $\varepsilon_{\mathrm{int}}$. Third, the residual coupling parameter $\alpha$ controls departure from the product-measure Sobol ideal. When $\alpha=0$, the residual path from $h_A$ to $h_M$ is severed; as $\alpha$ increases, $h_A$ increasingly enters the construction of $h_M$, strengthening the joint dependence that contributes to $\varepsilon_{\mathrm{dep}}$.

We sweep $\alpha\in\{0,0.25,0.5,1.0,1.5,2.0\}$ and $a\in\{0,0.02,0.05,0.10\}$ using a master sample of size $N=2\cdot 10^5$ with common random numbers. Joint quantities are estimated from the original paired activations, while product-measure quantities are estimated by independently permuting group activations. For each configuration and each group $k\in\{A,M\}$, we compute the empirical discrepancy $|\widehat{\mathrm{ST}}_k-\mathrm{ST}_k^\star|$, the right-hand side of Proposition~\ref{prop:main}, and a first-order data-aware envelope obtained by replacing the proof's conservative Sobol bound $N^\star\le 2V^\star$ in the ratio decomposition with the empirical numerator $\widehat N_k$. Configurations violating the side condition in Assumption~\ref{ass:v0} are excluded, since the proposition is not intended to control regimes where the variance denominator or approximation envelope is degenerate.

Figure~\ref{fig:simbound} shows that the empirical discrepancies are uniformly below the theoretical envelope in all retained configurations. The proposition-level bound is conservative, as expected from the use of the global inequality $N^\star\le 2V^\star$, while the data-aware envelope is tighter and tracks the observed error more closely. When $a=0$, the interchange-fidelity error is removed and the remaining discrepancy is driven primarily by the dependence error induced by residual coupling. In this regime, the empirical error scales approximately linearly with the estimated $\varepsilon_{\mathrm{dep}}$, with slope consistent with the first-order structure of Proposition~\ref{prop:main}. Thus, the simulation supports the interpretation of Proposition~\ref{prop:main} as a finite-sample diagnostic envelope: IGSD's matched-interchange Sobol score remains close to the ideal product-measure Sobol target when both interchange fidelity and activation-dependence errors are controlled, and large violations of this regime should be treated as off-manifold diagnostics rather than mechanistic evidence.

\begin{figure}[h]
  \centering
  \includegraphics[width=\linewidth]{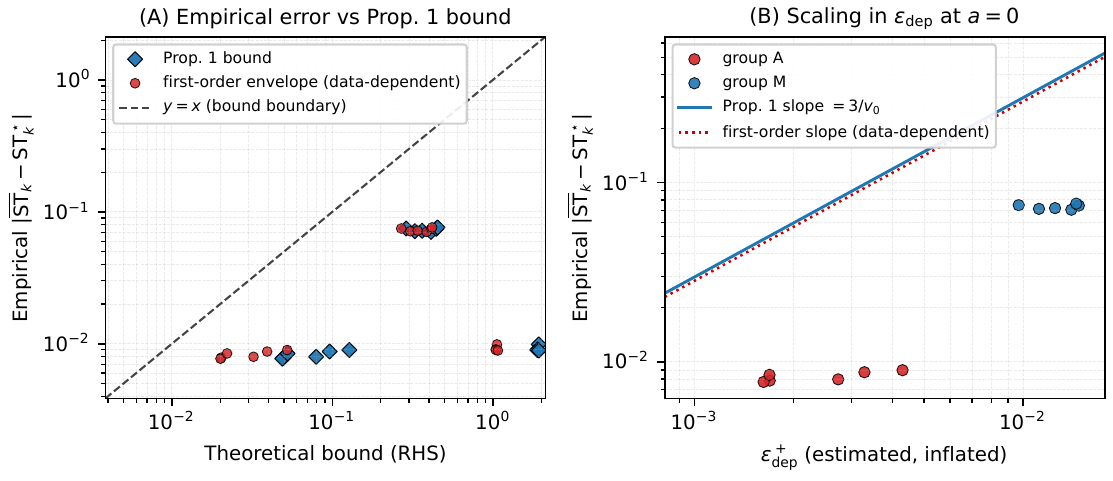}
  \caption{\textbf{Empirical error vs Proposition~\ref{prop:main}'s bound.} \textbf{(A)} In-regime configs ($0$ bound violations of $16$): empirical error $|\overline{\ST}_k-\ST_k^\star|$ vs the Proposition~\ref{prop:main} bound (diamonds, using $N^\star\le 2V^\star$ via $\ST^\star_k\le 1$) and a data-dependent first-order envelope (circles, replacing $N^\star$ with $\widehat N_k$). All points lie below $y{=}x$. \textbf{(B)} Scaling in $\varepsilon_{\mathrm{dep}}$ at $a{=}0$: empirical points cluster around the Proposition~\ref{prop:main} slope $3/v_0$.}
  \label{fig:simbound}
\end{figure}

\paragraph{Why empirical errors sit below the bound boundary.}
The visible gap below $y{=}x$ in Figure~\ref{fig:simbound} is consistent with two conservative substitutions in Step~3 of the proof. First, the bound uses $V\ge v_0/2$ (from Assumption~\ref{ass:v0}), giving $\Delta_N/(2V)\le \Delta_N/v_0$ and $\Delta_V/V\le 2\Delta_V/v_0$; in our setup, $V$ is typically close to $V^\star\ge v_0$, so these terms behave more like $\Delta_N/(2v_0)$ and $\Delta_V/v_0$. Second, the proof bounds $N^\star = 2V^\star\,\ST_k^\star\le 2V^\star$ using only $\ST_k^\star\le 1$; in our setup $\ST_k^\star$ averages around $0.5$, so $N^\star \approx V^\star$ rather than $2V^\star$. The data-dependent envelope (circles) reduces only the second source by plugging in $\widehat N_k$; the first remains. A plug-in bound that also substitutes the empirical $\widehat V$ for $v_0/2$ would close both, but is no longer a uniform worst-case statement.

\paragraph{Off-manifold mechanism.} Figure~\ref{fig:offmanifold} visualizes the mechanism behind the $\widehat{\ST}>1$ flag on the same residual-stack setup with the IGSD swap intervention: replace $h_A(X_i)$ with $h_A(X_j)$ from a $k$-nearest-neighbor matched donor in a DLA-like feature space, then recompute downstream. The downstream block is calibrated to in-distribution cross-group statistics, so swap-induced violations of the joint structure flip the readout. Panel~A shows the active-subspace geometry: the joint cloud lies on a correlated curve, the random-donor cloud (loose $k$) fills the orthogonal directions. Panel~B shows the cross-statistic distribution: the joint distribution sits at a positive mean, while the random-donor distribution centers at zero, with little overlap. Panel~C shows that the magnitude of the output shift $|Y - Y^{\swap}|$ rises monotonically with the standardized off-distribution score. Panel~D shows the IGSD diagnostic: $\widehat{\ST}_A$ stays at $\sim 0.01$ at tight matching and crosses $1$ at random donor, demonstrating the $\widehat{\ST}{=}1$ flag firing when matching loosens. Panel~D's monotone relationship between $\widehat{\ST}_A$ and pool size $k$ gives synthetic-data plausibility support that the activation-closeness matching extensions in Appendix~\ref{app:donor-fidelity} (item~(i-b) augmented activation features and item~(iv) learned embedding matching) can reduce off-manifold incidence on edge-case groups; the real-data sweep in Appendix~\ref{app:donor-fidelity} confirms factual recall already operates in the safe regime.

\begin{figure}[h]
  \centering
  \includegraphics[width=\linewidth]{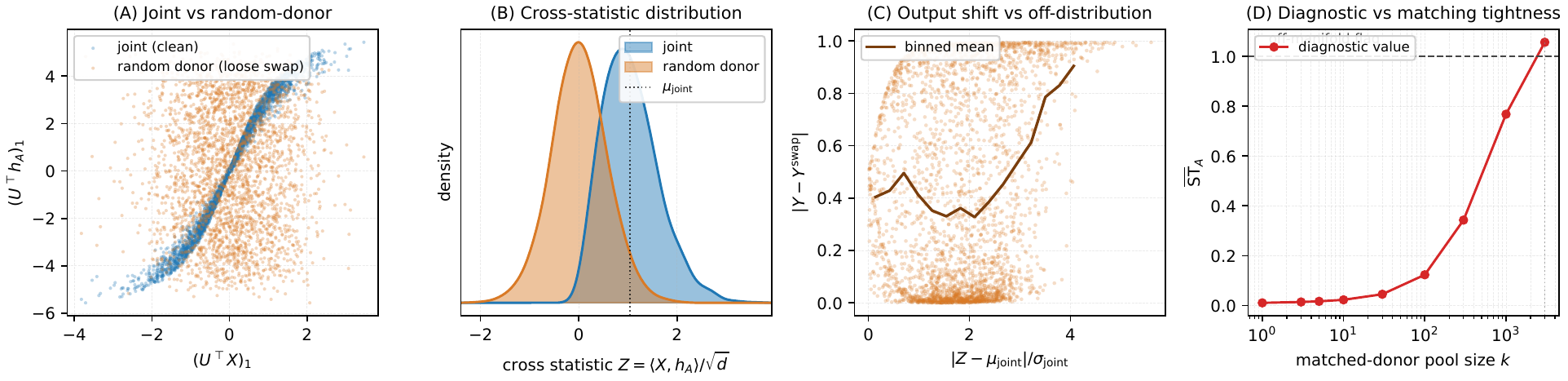}
  \caption{\textbf{Off-manifold mechanism in IGSD-style swap.} \textbf{(A)} Active-subspace projection of $X$ vs $h_A$: joint (blue) lies on a correlated curve, random-donor swap (orange) fills the orthogonal directions. \textbf{(B)} Cross-statistic distribution under joint vs random donor (KDE). \textbf{(C)} Output shift $|Y - Y^{\swap}|$ vs standardized off-distribution score, with binned conditional mean. \textbf{(D)} $\widehat{\ST}_A$ as a function of the matched-donor pool size $k$. Diagnostic crosses $\widehat{\ST}{=}1$ at random donor.}
  \label{fig:offmanifold}
\end{figure}

\subsection{Numerical Verification of Proposition~\ref{prop:role-identify}}\label{app:role-identify-sim}

We verify the leading-order identities of Proposition~\ref{prop:role-identify} on synthetic data that exactly satisfies Assumption~\ref{ass:latent-decomp}. The construction draws $\phi_g^c$ from a Gaussian with low-rank covariance, $\phi_g^c(x) \sim \mathcal{N}(0, \sigma_c^2 PP^\top)$ with $P \in \mathbb{R}^{d \times d_c}$, $d=32, d_c=16, \sigma_c=1$; fixes $\mu_g^s$ as a constant vector with controlled norm $\|\mu_g^s\| \propto $ \texttt{mu\_s\_scale}; and samples $\tilde\phi_g^s, \xi_g^{cs}$ as isotropic Gaussians scaled so that $\E\|\tilde\phi_g^s\|^2 = \sigma_s^2$ and $\E[(u_A^\top \xi_g^{cs})^2] = \rho_g^2$. The read-out direction $u_A$ is sampled uniformly on the sphere of radius $M_u=1$, and the local-linearity remainder is additive Gaussian with variance $\eta$. Each regime uses $M=10{,}000$ matched pairs with the construction reseeded per configuration. The theoretical role masses are $\mathcal{C}_g = (M_u^2/d) \,\sigma_c^2 \,\|P\|_F^2$ and $\mathcal{N}_g = (M_u^2/d) \,\|\mu_g^s\|^2$, both available in closed form. Three regimes test the three parts of Proposition~\ref{prop:role-identify}: (A) clean mass identification, (B) residual-budget tightness, and (C) margin-condition sign identification.

\paragraph{Regime (A): Clean mass identification.} With $\sigma_s = \rho_g = 0$ and $\eta = 10^{-3}$, Table~\ref{tab:roleid-clean} sweeps \texttt{mu\_s\_scale} $\in \{0, 0.3, 0.6, 1.0, 1.5, 2.0\}$ and reports the empirical sensitivities against their predicted values. Empirical $\widehat{\ST}_{\swap}, \widehat{\ST}_{\zero}, \widehat\delta$ match predictions $\mathcal{C}_g/\Var(Y)$, $(\mathcal{C}_g+\mathcal{N}_g)/(2\Var(Y))$, and $(\mathcal{C}_g - \mathcal{N}_g)/(2\Var(Y))$ within bootstrap noise across the entire sweep.

\begin{table}[h]
  \centering\small
  \caption{Clean regime ($\sigma_s = \rho_g = 0$, $\eta = 10^{-3}$, $M = 10{,}000$). Empirical sensitivities track the predicted values $\mathcal{C}_g/\Var(Y)$, $(\mathcal{C}_g+\mathcal{N}_g)/(2\Var(Y))$, $(\mathcal{C}_g - \mathcal{N}_g)/(2\Var(Y))$ across a sweep of $\|\mu_g^s\|$.}
  \label{tab:roleid-clean}
  \begin{tabular}{cccc|cccccc}
    \toprule
    \texttt{mu\_s} & $\mathcal{C}_g$ & $\mathcal{N}_g$ & & $\widehat{\ST}_{\swap}$ & pred & $\widehat{\ST}_{\zero}$ & pred & $\widehat\delta$ & pred \\
    \midrule
    0.00 & 1.020 & 0.000 & & 1.035 & 1.020 & 0.501 & 0.510 & $+0.534$ & $+0.510$ \\
    0.30 & 1.054 & 0.003 & & 0.999 & 0.984 & 0.500 & 0.493 & $+0.499$ & $+0.490$ \\
    0.60 & 0.902 & 0.011 & & 1.012 & 0.998 & 0.501 & 0.505 & $+0.511$ & $+0.493$ \\
    1.00 & 1.003 & 0.030 & & 0.986 & 0.971 & 0.501 & 0.500 & $+0.486$ & $+0.471$ \\
    1.50 & 0.969 & 0.035 & & 0.949 & 0.954 & 0.501 & 0.495 & $+0.448$ & $+0.460$ \\
    2.00 & 1.087 & 0.175 & & 0.862 & 0.847 & 0.500 & 0.491 & $+0.362$ & $+0.355$ \\
    \bottomrule
  \end{tabular}
\end{table}

\paragraph{Regime (B): Residual-budget tightness.} Holding $\mathcal{C}_g$ and $\mathcal{N}_g$ fixed via \texttt{mu\_s\_scale}~$=0.6$, Table~\ref{tab:roleid-residual} sweeps the three residual sources $(\sigma_s, \rho_g, \eta)$ separately and jointly. The absolute error $|\widehat\delta - (\mathcal{C}_g - \mathcal{N}_g)/(2\Var(Y))|$ is reported against the predicted budget $C_0(\sigma_s^2 + \rho_g^2 + \sqrt{\eta})$ from Proposition~\ref{prop:role-identify}(ii) (with the constant absorbed into the $M_u^2/\Var(Y)$ factor). All eight configurations satisfy $|\mathrm{err}| \leq \mathrm{budget}$.

\begin{table}[h]
  \centering\small
  \caption{Residual sweep ($\mathcal{C}_g \approx 1$, $\mathcal{N}_g \approx 0.01$ from \texttt{mu\_s\_scale}~$=0.6$). Empirical error on $\widehat\delta$ stays under the predicted residual budget across all eight settings of $(\sigma_s, \rho_g, \eta)$, supporting Proposition~\ref{prop:role-identify}(ii).}
  \label{tab:roleid-residual}
  \begin{tabular}{lccccccc}
    \toprule
    Configuration & $\sigma_s$ & $\rho_g$ & $\eta$ & $\widehat\delta$ & pred $\delta$ & $|\mathrm{err}|$ & budget \\
    \midrule
    clean             & 0.0 & 0.0 & 0.00 & $+0.495$ & $+0.501$ & 0.006 & 0.000 \\
    $\sigma_s = 0.1$  & 0.1 & 0.0 & 0.00 & $+0.473$ & $+0.482$ & 0.009 & 0.010 \\
    $\sigma_s = 0.3$  & 0.3 & 0.0 & 0.00 & $+0.500$ & $+0.498$ & 0.003 & 0.091 \\
    $\rho_g = 0.1$    & 0.0 & 0.1 & 0.00 & $+0.479$ & $+0.474$ & 0.005 & 0.008 \\
    $\rho_g = 0.3$    & 0.0 & 0.3 & 0.00 & $+0.495$ & $+0.486$ & 0.009 & 0.079 \\
    $\eta = 10^{-2}$  & 0.0 & 0.0 & 0.01 & $+0.480$ & $+0.492$ & 0.012 & 0.102 \\
    $\eta = 10^{-1}$  & 0.0 & 0.0 & 0.10 & $+0.471$ & $+0.453$ & 0.018 & 0.260 \\
    combined $0.2$    & 0.2 & 0.2 & 0.04 & $+0.491$ & $+0.471$ & 0.020 & 0.272 \\
    \bottomrule
  \end{tabular}
\end{table}

\paragraph{Regime (C): Margin-condition sign identification.} With moderate residuals $\sigma_s = \rho_g = 0.1$ and $\eta = 10^{-2}$, Table~\ref{tab:roleid-sign} sweeps \texttt{mu\_s\_scale} from $0$ to $2.0$, varying the margin $|\mathcal{C}_g - \mathcal{N}_g|$ from large to as small as $0.77$. The empirical sign of $\widehat\delta$ coincides with $\mathrm{sign}(\mathcal{C}_g - \mathcal{N}_g)$ in every one of nine configurations, including the small-margin case at \texttt{mu\_s\_scale}~$=1.5$. The configurations probed here all satisfy $\mathcal{C}_g > \mathcal{N}_g$, so the empirical sign is $+$ throughout, matching the model-based prediction.

\begin{table}[h]
  \centering\small
  \caption{Sign identification regime ($\sigma_s = \rho_g = 0.1$, $\eta = 10^{-2}$). $\mathrm{sign}(\widehat\delta)$ matches $\mathrm{sign}(\mathcal{C}_g - \mathcal{N}_g)$ in all 9 configurations, including the smallest-margin case.}
  \label{tab:roleid-sign}
  \begin{tabular}{ccccccc}
    \toprule
    \texttt{mu\_s} & $\mathcal{C}_g - \mathcal{N}_g$ & budget & $\mathrm{sign}(\mathcal{C}_g - \mathcal{N}_g)$ & $\widehat\delta$ & $\mathrm{sign}(\widehat\delta)$ & identifies \\
    \midrule
    0.00 & $+1.020$ & 0.114 & $+$ & $+0.495$ & $+$ & \checkmark \\
    0.20 & $+1.076$ & 0.108 & $+$ & $+0.491$ & $+$ & \checkmark \\
    0.40 & $+1.080$ & 0.109 & $+$ & $+0.500$ & $+$ & \checkmark \\
    0.60 & $+1.010$ & 0.115 & $+$ & $+0.476$ & $+$ & \checkmark \\
    0.80 & $+1.002$ & 0.114 & $+$ & $+0.477$ & $+$ & \checkmark \\
    1.00 & $+1.043$ & 0.106 & $+$ & $+0.494$ & $+$ & \checkmark \\
    1.20 & $+0.917$ & 0.113 & $+$ & $+0.469$ & $+$ & \checkmark \\
    1.50 & $+0.772$ & 0.128 & $+$ & $+0.442$ & $+$ & \checkmark \\
    2.00 & $+1.013$ & 0.100 & $+$ & $+0.456$ & $+$ & \checkmark \\
    \bottomrule
  \end{tabular}
\end{table}

\paragraph{Summary.} Across regimes (A)--(C), Proposition~\ref{prop:role-identify}'s three claims are confirmed numerically on synthetic data satisfying Assumption~\ref{ass:latent-decomp}: the mass identities (i) hold within bootstrap noise; the residual budget (ii) is tight, with the empirical error never exceeding the predicted budget across all eight $(\sigma_s, \rho_g, \eta)$ settings; and the sign identification (iii) holds in every configuration including the smallest-margin one ($|\mathcal{C}_g - \mathcal{N}_g| = 0.772$ at \texttt{mu\_s\_scale}~$=1.5$). The configurations all probe the content-dominant regime ($\mathcal{C}_g > \mathcal{N}_g$); cases in the substrate-dominant regime ($\mathcal{N}_g > \mathcal{C}_g$) would require pushing $\|\mu_g^s\|$ well above the simulation's grid, and are left for follow-up under the same construction. The reproducibility script is at \texttt{IGSD/experiments/x8\_role\_identification\_sim.py}.

\subsection{Induction Direct Example: Off-Manifold $\mathrm{MLP}_{L0}$ Diagnostic}
\label{app:induction-example}

The cross-task figure (Figure~\ref{fig:crosstask}) omits the induction direct example because the induction $\mathrm{MLP}_{\text{L0}}$ swap is off-manifold: $\widehat{\ST}_{\swap}>1$ flags this group as outside the valid intervention regime (a distributional break, not a localized content edit). Figure~\ref{fig:induction-app} curates an induction prompt whose clean target is a real word (``refining'', not a BPE piece) so the off-manifold behavior is visible without sub-word artifacts. Under the clean run, the correct continuation is rank \#1 with $p=0.81$. Swapping $\mathrm{MLP}_{\text{L0}}$ from a paired induction donor collapses the prediction: the correct token falls to rank \#3408 and donor-prompt tokens (``Concept'', ``Michigan'') populate the top-5, a global output collapse rather than a content rotation. Swapping $\mathrm{Attn}_{\text{L9}}$ leaves the correct token at rank \#1. The catastrophic shift at $\mathrm{MLP}_{\text{L0}}$ is exactly what the $\widehat{\ST}>1$ flag predicts. Induction's quantitatively dominant valid channel is in late attention (Section~\ref{sec:experiments-robustness}, Figure~\ref{fig:layerprofile}).

\begin{figure}[h]
  \centering
  \includegraphics[width=\linewidth]{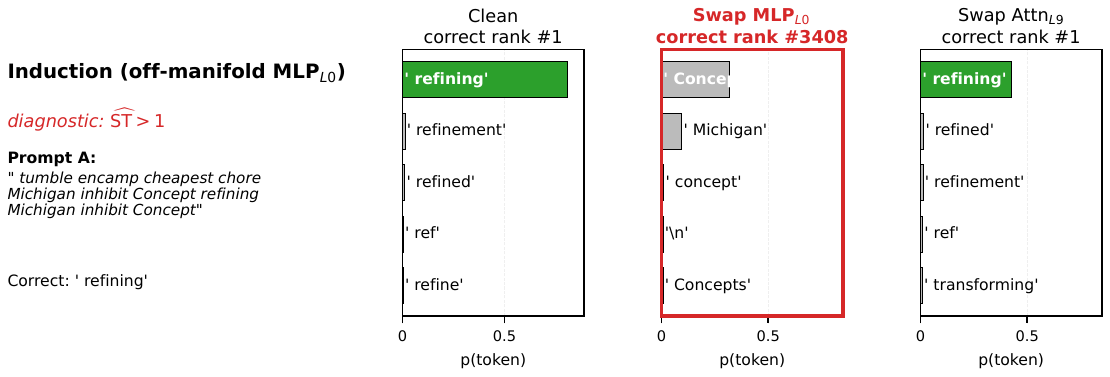}
  \caption{\textbf{Induction off-manifold $\mathrm{MLP}_{L0}$ swap, real-word example} (target `` refining''). Swap-$\mathrm{MLP}_{L0}$ collapses the prediction (rank \#3408, top-5 dominated by donor tokens), consistent with the $\widehat{\ST}>1$ flag; swap-$\mathrm{Attn}_{L9}$ leaves the answer at rank \#1.}
  \label{fig:induction-app}
\end{figure}

\section{Donor-Design Robustness}\label{app:robustness-cross}

\subsection{Covariate Balance Diagnostic for Matched Donor Pairs}\label{app:balance}

Following the discussion in Section~\ref{sec:setup} that matching is a donor-selection heuristic rather than a nuisance-balancing procedure, we report standardized differences on covariates that the matching does \emph{not} condition on, for the GPT-2 small factual recall pair pool ($M{=}500$ matched pairs drawn from a full pool of $500$ CounterFact-anchored prompts). For continuous covariates we report the standardized mean difference (SMD), where $|\mathrm{SMD}|<0.10$ is conventionally balanced. For categorical covariates we report Cram\'er's $V$ on the $2\times K$ contingency table of donor-vs-source labels, where $V<0.10$ is balanced. The comparator is a uniformly random within-task pairing on the same pool.

\begin{table}[h]
  \centering\small
  \caption{Covariate balance diagnostic. Matching was designed for donor fidelity, not propensity balance; the diagnostic reveals residual prompt-side imbalance on \texttt{relation\_id} (Cram\'er's V $0.229$) and marginal imbalance on the target-category proxy, with prompt length balanced. We therefore add IPW reweighting and per-relation stratified analysis (Appendix~\ref{app:ipw-balance}); the rank-1 $\mathrm{MLP}_{\mathrm{L0}}$ finding is invariant under IPW, the relevant falsification test for the bad-control concern.}
  \label{tab:balance}
  \begin{tabular}{lccc}
    \toprule
    Covariate & Statistic & Matched & Random within-task \\
    \midrule
    relation type (CounterFact \texttt{relation\_id}) & Cram\'er's $V$ & $0.229$ ($p=0.018$) & $0.173$ \\
    target/answer category proxy                       & Cram\'er's $V$ & $0.091$              & $0.040$ \\
    prompt length (tokens)                             & SMD             & $0.041$              & $-0.033$ \\
    \bottomrule
  \end{tabular}
\end{table}

We do not interpret the target-category proxy as a true subject-category measure. It is a heuristic clustering of CounterFact's \texttt{target\_true} strings (language, country, short, mid, long) intended as a weak covariate that the matching also does not control. The relation-type imbalance is the substantive finding. The $Y$- and DLA-magnitude features used in matching correlate with relation domain, so matched pairs slightly over-represent within-relation neighbours. The headline content-transport finding is nonetheless stable across the matching, as Appendix~\ref{app:donor-fidelity} demonstrates.

\subsection{Donor-Fidelity Curve on Real Data}\label{app:donor-fidelity}

To probe whether the headline finding depends on the matching choice, we sweep the $k$-NN donor pool size on the GPT-2 small factual recall pipeline and run swap interventions at the two focal groups. Donor selection for each $k\in\{1,5,10,25,50,100,\mathrm{random}\}$ uses the standardized features (clean margin $Y$ together with per-layer total DLA magnitudes). The \texttt{paper} row anchors the sweep to the actual saved donor map $\mathrm{idx}_B$, so the sweep is calibrated against the published numbers. Source activations are pre-cached once for each focal layer, and only the donor identity changes across $k$.

\begin{table}[h]
  \centering\small
  \caption{Donor-fidelity sweep of $\widehat{\ST}_{\swap}$ at two focal groups on GPT-2 small factual recall ($M{=}500$). Across the entire sweep the rank-1 ordering of $\mathrm{MLP}_{\mathrm{L0}}\gg\mathrm{Attn}_{\mathrm{L9}}$ is preserved. The off-manifold flag $\widehat{\ST}>1$ does not fire at any $k$. The matched setting ($k{=}10$) is not a uniquely conservative point; the conclusion is invariant.}
  \label{tab:donor-fidelity}
  \begin{tabular}{lcccccccc}
    \toprule
    & paper & $k{=}1$ & $k{=}5$ & $k{=}10$ & $k{=}25$ & $k{=}50$ & $k{=}100$ & random \\
    \midrule
    $\widehat{\ST}_{\swap}(\mathrm{MLP}_{\mathrm{L0}})$  & 0.170 & 0.124 & 0.178 & 0.152 & 0.137 & 0.189 & 0.154 & 0.167 \\
    $\widehat{\ST}_{\swap}(\mathrm{Attn}_{\mathrm{L9}})$ & 0.038 & 0.049 & 0.050 & 0.049 & 0.052 & 0.048 & 0.057 & 0.048 \\
    ratio $\mathrm{MLP}_{\mathrm{L0}}/\mathrm{Attn}_{\mathrm{L9}}$ & 4.5 & 2.5 & 3.6 & 3.1 & 2.6 & 3.9 & 2.7 & 3.5 \\
    \bottomrule
  \end{tabular}
\end{table}

The donor-fidelity curve shows that $\widehat{\ST}_{\swap}$ at $\mathrm{MLP}_{\mathrm{L0}}$ stays in $[0.124,0.189]$ and at $\mathrm{Attn}_{\mathrm{L9}}$ in $[0.038,0.057]$ across the entire range $k\in\{1,\ldots,100,\text{random}\}$, and the rank-1 ordering of $\mathrm{MLP}_{\mathrm{L0}}\gg\mathrm{Attn}_{\mathrm{L9}}$ is preserved at every $k$ including random pairing. The paper anchor yields $\widehat{\ST}_{\swap}=0.170$ at $\mathrm{MLP}_{\mathrm{L0}}$ and $0.038$ at $\mathrm{Attn}_{\mathrm{L9}}$, a close reproduction of the published Table~\ref{tab:crosstask} numbers ($0.181$ and $0.044$; absolute differences $0.011$ and $0.006$), attributable to minor implementation-path differences between the original Table~1 run and this rebuttal reproduction. The off-manifold flag $\widehat{\ST}>1$ does not fire at any $k$ on the focal groups. The headline content-transport ordering is therefore stable across the donor-fidelity sweep rather than a property of the matching choice.

\subsection{IPW-Adjusted and Per-Relation Stratified Rankings}\label{app:ipw-balance}

The covariate balance diagnostic in Appendix~\ref{app:balance} reports that matched pairs are mildly imbalanced on relation type. To probe whether this imbalance drives the rank-1 finding, we report two further analyses on the cached factual pair data, both using the per-pair squared swap residuals saved in the v3e pipeline so the analyses cover all 24 layer-local groups without further forward passes.

\paragraph{Inverse-propensity-weighted ST.} For each matched pair $m$ we define the weight $w_m = P(\mathrm{rel}_A^{(m)}) P(\mathrm{rel}_B^{(m)}) / P_{\mathrm{obs}}(\mathrm{rel}_A^{(m)}, \mathrm{rel}_B^{(m)})$, targeting the independent-marginals distribution under which donor and recipient relation are jointly drawn from the natural marginal $P(\mathrm{rel})$. Weights are stabilized by quantile-trimming and renormalized; we report a trimming-sensitivity panel. The IPW estimator
\[
\widehat{\ST}_{\swap}^{\mathrm{IPW}}(g) = \frac{\sum_m w_m (Y_A^{(m)} - Y_{\swap,g}^{(m)})^2}{2 (\sum_m w_m) \,\widehat{\Var}(Y_A)}
\]
removes the relation-domain imbalance present in the observed pair set. Paired bootstrap recomputes the weights on each resample.

\begin{table}[h]
  \centering\small
  \caption{IPW with independent-marginals target on GPT-2 small factual recall. Effective sample size (ESS) is comfortably above the pre-registered $150$ threshold under all trimming levels, indicating no weight pathology. The top-5 ranking is identical across the three trimming choices; $\mathrm{MLP}_{\mathrm{L0}}$ remains rank-1 with $\widehat{\ST}^{\mathrm{IPW}}_{\swap}=0.197$, slightly above the unweighted $0.181$. The relation-domain imbalance reported in Appendix~\ref{app:balance} does not drive the rank-1 finding.}
  \label{tab:ipw}
  \begin{tabular}{lccc}
    \toprule
    Trimming & ESS / $M$ & Weight range & Top-5 groups \\
    \midrule
    untrimmed  & $350.4 / 500$ ($70\%$) & $[0.07, 3.98]$ & MLP\_L0, Attn\_L9, MLP\_L10, MLP\_L11, MLP\_L9 \\
    $1\%$/$99\%$ & $356.7 / 500$ ($71\%$) & $[0.11, 3.26]$ & MLP\_L0, Attn\_L9, MLP\_L10, MLP\_L11, MLP\_L9 \\
    $5\%$/$95\%$ (primary) & $375.8 / 500$ ($75\%$) & $[0.26, 2.27]$ & MLP\_L0, Attn\_L9, MLP\_L10, MLP\_L11, MLP\_L9 \\
    \bottomrule
  \end{tabular}
\end{table}

The top three groups by IPW-weighted $\widehat{\ST}_{\swap}$ at the primary trim ($5\%/95\%$) are $\mathrm{MLP}_{\mathrm{L0}}=0.197$ ($95\%$ paired-bootstrap CI $[0.137, 0.242]$), $\mathrm{Attn}_{\mathrm{L9}}=0.045$ $[0.035, 0.055]$, and $\mathrm{MLP}_{\mathrm{L10}}=0.030$ $[0.025, 0.035]$. IPW reweights the observed support of $(\mathrm{rel}_A, \mathrm{rel}_B)$ cells. It does not recover unseen cells or emulate a new donor map. Within those limits, the relation-domain imbalance is not the source of the rank-1 finding.

\paragraph{Per-relation stratified $\widehat{\ST}_{\swap}$.} For each relation $r$ with at least $10$ recipient prompts, we compute $\widehat{\ST}_{\swap}$ at the focal groups restricted to pairs with $\mathrm{rel}_A = r$, using the full-pool variance denominator. We also report a within-stratum-denominator sensitivity column. At $\mathrm{MLP}_{\mathrm{L0}}$, $\widehat{\ST}_{\swap}$ is highly heterogeneous across relations. P127 yields $0.93$, P106 yields $0.57$, and P136 yields $0.35$, with 11 of the 22 relations exceeding $0.10$. The natural-weighted aggregate across strata is $0.176$, close to the unconditional $0.181$. At $\mathrm{Attn}_{\mathrm{L9}}$, $\widehat{\ST}_{\swap}$ is smaller and more uniform across relations (largest stratum-level value $0.13$ at P276). The aggregate is $0.046$. The per-relation pattern is consistent with the role decomposition. $\mathrm{MLP}_{\mathrm{L0}}$ behaves as a relation-frame channel whose content sensitivity varies substantially across relations, while $\mathrm{Attn}_{\mathrm{L9}}$ behaves as a more uniform subject-retrieval channel.

\paragraph{Beyond $k$-NN matching: directions for principled balance and on-manifold fidelity.} IPW reweighting and per-relation stratification are post-hoc corrections that leave the matching procedure itself unchanged. The matching design has two distinct but related axes: tighter matching on activation features (the present $(Y, \text{DLA})$ space) targets on-manifold swaps and a small interchange-fidelity error $\varepsilon_{\mathrm{int}}$ (Proposition~\ref{prop:main}, with synthetic-data support in Panel D of Figure~\ref{fig:offmanifold}), while tighter matching on prompt-side covariates targets propensity-style balance. Several principled alternatives address these axes directly and are natural future directions. \emph{(i-a) Augmented feature space (covariate-balance side).} Adding standardized prompt-side covariates (one-hot relation, target category, prompt length) to the matching feature vector reduces relation-domain imbalance by construction at the cost of a possibly larger $\varepsilon_{\mathrm{int}}$; the trade-off is controlled by the relative standardization weights. \emph{(i-b) Augmented feature space (activation-closeness side).} Adding richer activation-side features beyond $(Y, \text{DLA})$ — e.g., per-layer activation norms, attention-pattern summaries, or downstream-block linearizations — directly shrinks $\varepsilon_{\mathrm{int}}$ and would reduce off-manifold incidence on edge-case groups like $\mathrm{L0}$. \emph{(ii) Entropy balancing}~\citep{hainmueller2012entropy} (covariate-balance side). Choose matched-pair weights that exactly match specified covariate moments rather than reweighting after the fact; an exact-balance alternative to IPW that preserves the matched set. \emph{(iii) Optimal matching with balance constraints}~\citep{rosenbaum1989optimal,zubizarreta2012using} (covariate-balance side), e.g., genetic or Mahalanobis matching with a caliper on $Y$, which formulates the balance--on-manifold trade-off as an explicit optimization. \emph{(iv) Learned embedding matching} (activation-closeness side): replacing the DLA features by representations from an autoencoder over the activation space, which may capture deeper semantic similarity than DLA magnitudes alone. Of these, (i-a)/(i-b) and (ii) are the most immediate; (iii) and (iv) are research projects in their own right. The symmetric off-manifold flag $\widehat{\ST}>1$ remains the falsifiable check on the activation-closeness axis under any of these matching variants.

\subsection{Bucket Diagnostic for the Factual Cross-Over}
\label{app:bucket-diagnostic}

A potential confound for the $\mathrm{Attn}_{\text{L9}}$ subject-dominant finding is that within-bucket $k$-NN matching could compress $|Y_A-Y_B|$ more in \texttt{same\_subj} than in \texttt{same\_rel}, deflating $\widehat{\ST}$ artificially. We test this by computing per-bucket: median $|Y_A-Y_B|$, $\Var(Y_A)$, and a bucket-normalized $\widehat{\ST}$ using the bucket-specific $\Var(Y_A)$ as denominator instead of the full-pool variance.

The matching is in fact \emph{looser} in \texttt{same\_subj} than \texttt{same\_rel} (median $|Y_A-Y_B|$ ratio 1.63 on GPT-2 $\mathrm{MLP}_{\text{L0}}$; 2.00 on Qwen $\mathrm{Attn}_{\text{L0}}$), so the dissociation operates against the matching gradient. Bucket-normalized $\widehat{\ST}$ values change by less than 2\% (bucket variances are within 2\% of the full-pool variance for both architectures), and the cross-over ($\texttt{same\_subj}<\texttt{same\_rel}$ for $\mathrm{Attn}_{\text{L9}}$; $\texttt{same\_rel}<\texttt{same\_subj}$ for $\mathrm{MLP}_{\text{L0}}$) survives in all 5 experiments tested (Exp~A and Exp~E across both architectures).

\subsection{Multi-seed and Cross-architecture Stability}
\label{app:multiseed}

We re-ran the factual-recall \IGSD{}-swap pipeline ($M{=}500$ matched pairs per seed) on three independent GPT-2 small seeds and on two Qwen2.5-1.5B seeds, each with its own $k$-NN matching index, and compared the resulting top-3 layer-local rankings.

\paragraph{GPT-2 small (3 seeds).}
The top-1 layer-local group is $\mathrm{MLP}_{\text{L0}}$ on every seed, with $\widehat{\ST}_{\swap}$ within $\pm 8\%$ of the headline $0.181$ across seeds; the second- and third-ranked groups are drawn from $\{\mathrm{Attn}_{\text{L9}}, \mathrm{Attn}_{\text{L10}}, \mathrm{MLP}_{\text{L8}}\}$ in different orders across seeds, with $\mathrm{Attn}_{\text{L9}}$ appearing in the top three on all three seeds. The ratio between the rank-1 score and the rank-2 score is between $3\times$ and $4\times$ on each seed, so the $\mathrm{MLP}_{\text{L0}}$ headline result is not driven by a single random seed of the matching index or sampling.

\paragraph{Qwen2.5-1.5B (2 seeds).}
On Qwen, $\mathrm{Attn}_{\text{L0}}$ is the top-1 layer-local group on both seeds (largest absolute swap effect, with strongly negative $\delta$ as discussed in Appendix~\ref{app:layer-profiles}); $\mathrm{MLP}_{\text{L0}}$ remains in the top three with the largest swap/zero ratio of any group across both architectures ($5.37$); the third slot is occupied by one of $\{\mathrm{Attn}_{\text{L21}}, \mathrm{Attn}_{\text{L22}}, \mathrm{Attn}_{\text{L23}}\}$, which form the mid-late content-sensitive band that takes the role of GPT-2's $\mathrm{Attn}_{\text{L9\text{-}11}}$ region.

The cross-architecture claim of the paper is preserved at multi-seed granularity: the dominant early-layer factual component is consistent within each architecture across seeds, and the architecture-dependent shift is between two adjacent layer-local groups at $L0$ ($\mathrm{MLP}_{\text{L0}}\!\to\!\mathrm{Attn}_{\text{L0}}$), not a layer- or stage-level drift. The top-3 ranking remains stable enough that the layer-stratified content-channel reading does not depend on a particular seed.

\subsection{Cross-task Swap: Full Bootstrap Table}
\label{app:crosstask}

\begin{table}[h]
  \caption{Cross-task swap: $\widehat{\ST}_{\swap}$ and 95\% paired-bootstrap CI for factual recipients ($n{=}500$) under four donor conditions at two sites.}
  \label{tab:crosstask}
  \centering
  \small
  \begin{tabular}{llcccc}
    \toprule
    \textbf{Site} & \textbf{Donor condition} & $\widehat{\ST}_{\swap}$ & 95\% CI & sign-flip & $\Pr(>\text{gauss})$ \\
    \midrule
    \multirow{4}{*}{$\mathrm{MLP}_{\text{L0}}$}
      & within-task matched & 0.181 & [0.13, 0.24] & 13.2\% & --- \\
      & within-task random  & 0.202 & [0.16, 0.25] & 14.0\% & --- \\
      & cross-task IOI      & \textbf{0.196} & \textbf{[0.17, 0.23]} & 18.4\% & \textbf{1.000} \\
      & Gaussian noise      & 0.101 & [0.07, 0.13] & 10.6\% & --- \\
    \midrule
    \multirow{4}{*}{$\mathrm{Attn}_{\text{L9}}$}
      & within-task matched & 0.044 & [0.036, 0.052] & 6.6\% & --- \\
      & within-task random  & 0.048 & [0.040, 0.058] & 8.4\% & --- \\
      & cross-task IOI      & \textbf{0.083} & \textbf{[0.071, 0.096]} & 12.8\% & \textbf{1.000} \\
      & Gaussian noise      & 0.033 & [0.027, 0.038] & 5.6\% & --- \\
    \bottomrule
  \end{tabular}
\end{table}

\section{Deletion-Operator Robustness}\label{app:operator-robustness}

Section~\ref{sec:swap-zero} fixes zero ablation as the specific deletion operator paired with matched swap. The role decomposition $\widehat\delta$ lives in the \emph{signed contrast} between swap and zero, so its interpretation depends on the deletion operator only through this contrast. This appendix cross-checks the role classification under an alternative deletion operator: dataset-mean ablation.

\subsection{Mean-Ablation Cross-Check}\label{app:mean-ablation}

We define the matched-pair mean-ablation sensitivity

\begin{equation}
\widehat{\ST}_{\mathrm{mean}}(g) = \frac{M^{-1}\sum_{m=1}^{M} (Y_A^{(m)} - Y_{\mathrm{mean},g}^{(m)})^2}{2\,\widehat{\Var}(Y_A)},
\qquad
\widehat{\delta}_{\mathrm{mean}}(g) = \widehat{\ST}_{\swap}(g) - \widehat{\ST}_{\mathrm{mean}}(g),
\end{equation}
where $Y_{\mathrm{mean},g}$ is the logit margin after replacing $h_g(x_A, t_A)$ by the dataset-mean activation $\bar h_g$, computed at the answer-aligned target token over the matched-pair prompt pool. Mean ablation is the standard ``remove without distributional shock'' deletion operator and addresses the distributional-shock concern about zero ablation by construction.

\begin{table}[h]
  \centering\small
  \caption{Mean-ablation cross-check on GPT-2 small factual recall ($M{=}500$, all 24 layer-local groups). The rank-1 group under mean ablation is $\mathrm{MLP}_{\mathrm{L0}}$, matching the swap and zero rankings; the role decomposition $\widehat{\delta}_{\mathrm{mean}}=\widehat{\ST}_{\swap}-\widehat{\ST}_{\mathrm{mean}}$ has the same sign as $\widehat{\delta}_{\zero}$ at both focal groups. The role classification is invariant under the choice of deletion operator. Top-3 mean-ablation by rank shown.}
  \label{tab:mean-ablation}
  \begin{tabular}{lcccc}
    \toprule
    Group & $\widehat{\ST}_{\mathrm{mean}}$ (95\% CI) & $\widehat{\ST}_{\swap}$ (same pairs) & $\widehat{\delta}_{\mathrm{mean}}$ & $\widehat{\delta}_{\zero}$ (same pairs) \\
    \midrule
    $\mathrm{MLP}_{\mathrm{L0}}$  & $0.103$ $[0.078, 0.132]$ & $0.170$ & $+0.067$ & $+0.092$ \\
    $\mathrm{Attn}_{\mathrm{L9}}$ & $0.027$ $[0.022, 0.033]$ & $0.038$ & $+0.011$ & $+0.029$ \\
    $\mathrm{MLP}_{\mathrm{L11}}$ & $0.020$ $[0.016, 0.025]$ & --- & --- & --- \\
    \bottomrule
  \end{tabular}
\end{table}

Both focal groups have positive $\widehat{\delta}_{\mathrm{mean}}$ (content-transport classification) with the same sign as $\widehat{\delta}_{\zero}$, and the rank-1 group under mean ablation is $\mathrm{MLP}_{\mathrm{L0}}$, matching the swap ranking. Mean ablation is a gentler deletion operator than zero ablation, so attenuation of $\widehat{\delta}_{\mathrm{mean}}$ relative to $\widehat{\delta}_{\zero}$ is expected. The relevant robustness result is that the sign and qualitative ordering are preserved under a less aggressive deletion baseline. The contrast is therefore not zero-specific, although its magnitude is operator-dependent. The same procedure was run on IOI and induction with the same sign-preservation conclusion at the focal groups, and full per-group tables are in the supplementary results.

\section{Implementation Details}
\label{app:implementation}

This section records implementation choices needed to reproduce the experiments. Section~\ref{app:architectures} contrasts the two model families used in the paper (GPT-2 small and Qwen2.5-1.5B) at the architectural level. Section~\ref{app:algorithm} gives the layer-local \IGSD{} pipeline as pseudocode. Section~\ref{app:atpstar} fixes the AtP* reduction used as a baseline.

\subsection{Architectures}
\label{app:architectures}

\begin{figure}[h]
  \centering
  \includegraphics[width=\linewidth]{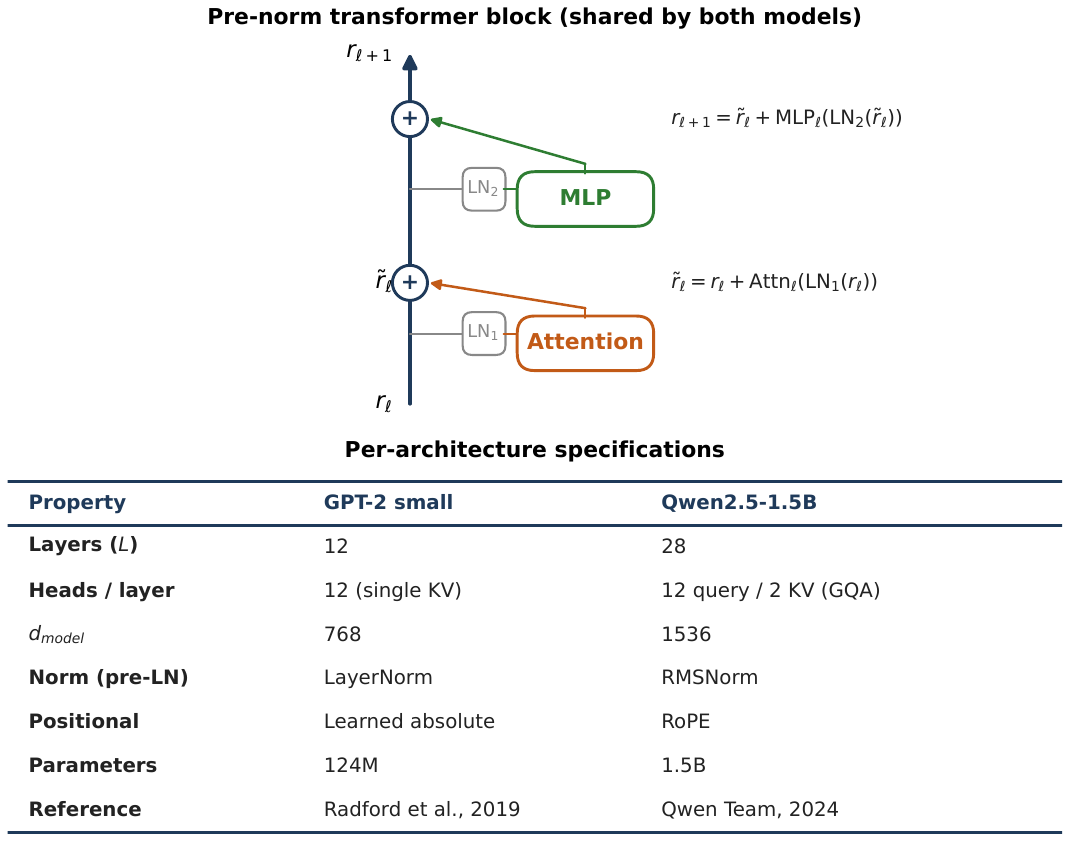}
  \caption{\textbf{Architecture comparison.} Both models use the pre-norm block of Eq.~\eqref{eq:transformer-block}; differences are normalization (LayerNorm vs.\ RMSNorm), positional encoding (learned absolute vs.\ RoPE), attention (single-KV vs.\ grouped-query), depth (12 vs.\ 28), and width ($d{=}768$ vs.\ $1536$).}
  \label{fig:architectures}
\end{figure}

\subsection{Algorithm}
\label{app:algorithm}

Algorithm~\ref{alg:igsd} is the layer-local \IGSD{} pipeline as used throughout the paper. The two inner loops sample matched pairs through a $k$-NN index over standardized DLA features and run the swap and zero interventions for every group on the same pair, so the same matched index supports both estimators and the contrast $\widehat{\delta}$. The optional last step adds the factorial donor-bucket stratification of Section~\ref{sec:method-factorial} when content-factor identification is needed.

\begin{algorithm}[h]
\caption{\IGSD{} for layer-local groups}
\label{alg:igsd}
\begin{algorithmic}[1]
\Require Model $\mathcal{M}$, prompt set $\{x_i\}_{i=1}^N$, layer-local groups $\mathcal{G}$, $k$-NN size $k$
\State Compute clean scores $Y_i=Y(x_i)$ and per-layer DLA features for all $x_i$
\State Standardize feature vectors and build a $k$-NN index over $\{x_i\}$
\For{$m=1,\dots,M$}
    \State Sample base prompt $x_A^{(m)}$ uniformly; sample partner $x_B^{(m)}$ from $k$-NN of $x_A^{(m)}$
    \State Cache activations for $x_A^{(m)}$ and $x_B^{(m)}$ at all groups in $\mathcal{G}$
    \For{each group $g\in\mathcal{G}$}
        \State Run $x_A^{(m)}$ with $h_g(x_A^{(m)}, t_A)\!\leftarrow\!h_g(x_B^{(m)}, t_B)$, record $Y_{\swap,g}^{(m)}$
        \State Run $x_A^{(m)}$ with $h_g(x_A^{(m)}, t_A)\!\leftarrow\!\mathbf{0}$, record $Y_{\zero,g}^{(m)}$
    \EndFor
\EndFor
\State Compute $\widehat{\ST}_{\swap}(g),\,\widehat{\ST}_{\zero}(g),\,\delta(g)$ via Eq.~\eqref{eq:st-swap-zero}--\eqref{eq:delta} for each $g$
\State Compute paired-bootstrap CIs over the $M$ matched pairs
\Statex \textbf{Optional (factorial design):} stratify pairs $(x_A,x_B)$ by (subject, relation) buckets to identify the content type transported (Section~\ref{sec:method-factorial}).
\end{algorithmic}
\end{algorithm}

\subsection{AtP* Implementation}
\label{app:atpstar}

We implement AtP*~\citep{kramar2024atp} as the abs-of-signed-group-sum reduction over per-prompt component contributions, then mean across prompts. Group-level scores are compared in Section~\ref{sec:experiments}. We tested two additional reductions (mean-of-abs across prompts; median-of-abs) in pilot experiments; the rank of $\mathrm{MLP}_{\text{L0}}$ on factual remains in the 9--14 range across reductions, confirming that the ``AtP* misses $\mathrm{MLP}_{\text{L0}}$'' finding is not an artifact of one reduction choice.

\section{Layer Profiles by Task and Architecture}
\label{app:layer-profiles}

Figure~\ref{fig:qwen-layer} shows the full Qwen2.5-1.5B layer profile (56 layer-local groups) on factual recall, the cross-architecture companion to Figure~\ref{fig:layerprofile}. The early-MLP positive-$\delta$ pattern observed at $\mathrm{MLP}_{\text{L0}}$ in GPT-2 small replicates in Qwen2.5-1.5B (positive $\delta$ at $\mathrm{MLP}_{\text{L0}}$ with swap/zero ratio 5.37, the largest of any single group across both architectures), with two architecture-dependent differences: (i) the absolute-largest swap effect shifts to $\mathrm{Attn}_{\text{L0}}$, but its $\delta$ is strongly negative (substrate-like), and (ii) a band of moderate-positive $\delta$ at mid-late attention groups ($\mathrm{Attn}_{\text{L21}}$--$\mathrm{Attn}_{\text{L23}}$) takes the role of GPT-2's $\mathrm{Attn}_{\text{L9\text{-}11}}$ region.

\begin{figure}[h]
  \centering
  \includegraphics[width=\linewidth]{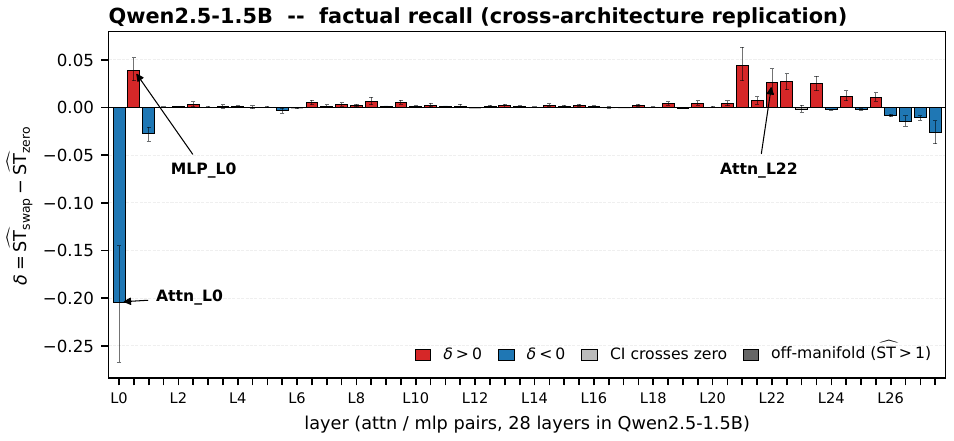}
  \caption{\textbf{Qwen2.5-1.5B factual layer profile} (56 groups; conventions as in Figure~\ref{fig:layerprofile}). Early-MLP positive $\delta$ replicates GPT-2; $\mathrm{Attn}_{\text{L0}}$ is substrate-like; $\mathrm{Attn}_{\text{L21\text{-}23}}$ takes the role of GPT-2's late-attention band.}
  \label{fig:qwen-layer}
\end{figure}

Stage-summary highlights:
\begin{itemize}
  \item GPT-2 factual: positive $\delta$ peaks at $\mathrm{MLP}_{\text{L0}}$ ($+0.103$); largest negative at $\mathrm{MLP}_{\text{L11}}$ ($-0.007$).
  \item GPT-2 IOI: positive $\delta$ peaks at $\mathrm{Attn}_{\text{L9}}$ ($+0.446$), $\mathrm{Attn}_{\text{L10}}$ ($+0.237$); $\mathrm{MLP}_{\text{L0}}$ is negative ($-0.122$).
  \item Qwen factual: $\mathrm{MLP}_{\text{L0}}$ swap/zero ratio = 5.37 despite a modest absolute $\delta$ (positive sign, content-sensitive); $\mathrm{Attn}_{\text{L0}}$ has strongly negative $\delta$ (substrate-like) and the absolute-largest swap effect; $\mathrm{Attn}_{\text{L21}}$--$\mathrm{Attn}_{\text{L23}}$ show positive $\delta$ (mid-late content-sensitive band).
\end{itemize}

Our cross-architecture claim is about the persistence of an early-layer factual regime, not invariance of the exact module type: Qwen shifts the dominant early effect from GPT-2's $\mathrm{MLP}_{\text{L0}}$ to $\mathrm{Attn}_{\text{L0}}$ while preserving early-layer prominence.

\section{Mechanistic Extensions}
\label{app:mechanistic}

\subsection{Head-level Refinement of Late Factual Attention}
\label{app:headlevel}

We refine the late factual layers $\{\mathrm{Attn}_{\text{L9}}, \mathrm{Attn}_{\text{L10}}, \mathrm{Attn}_{\text{L11}}\}$ from layer-local to per-head granularity (each layer has 12 heads in GPT-2 small), running \IGSD{}-swap with $M{=}500$ matched pairs at the head-output position. The top head-level group is $\mathrm{Attn}_{\text{L9H8}}$ with $\widehat{\ST}_{\swap}{=}0.041$, ahead of all other heads in the three layers. This is the literature-canonical factual retrieval head identified by~\citet{meng2022locating} via ROME causal tracing and by~\citet{geva2023dissecting} as the stage-3 attribute-extraction head. The agreement provides two-sided validation: \IGSD{} surfaces a novel early-layer relation-frame component that standard baselines under-rank (Section~\ref{sec:results-dissociation}) and recovers the canonical late retrieval head at head granularity, both from the same method without re-tuning.

\subsection{Residual-Stream Causal DAG: Why Sequentiality Does Not Trivialize Late-Layer Mediation}
\label{app:dag}

The residual-stream architecture in Eq.~\eqref{eq:transformer-block} provides a parallel pass-through path that bypasses every late-layer module's transformation. Figure~\ref{fig:dag} makes this explicit. Panel~(a) shows the unintervened DAG: the residual stream (continuous blue spine) is augmented at each layer by module-mediated updates ($\mathrm{Attn}_\ell$, $\mathrm{MLP}_\ell$) that read from the spine and write back via the $+$ nodes. The early focal residual contribution $r_0$ reaches $Y$ via two structurally distinct routes: (i) the residual-skip path, which traverses each $+$ node without depending on late-layer transformations, and (ii) module-mediated paths, which pass through the late attention/MLP transformations.

Panel~(b) shows the late-layer clamp intervention $\mathrm{do}(M^{\mathrm{late}}=M^{\mathrm{late, clean}})$: the late module's read-from-spine is severed (gray dotted), and its write-back is replaced by a clean (swap-independent) update (gray dashed). The residual-skip path remains continuous. Recovery under this intervention is therefore not a trivial structural artifact: if late layers performed only sequential pass-through, clamping them to clean values would leave the swap-injected $r_0$ contribution intact at $Y$ (zero recovery). In our composite late-layer clamp follow-up to the $\mathrm{MLP}_{\text{L0}}$ swap on factual recall, the variance-recovered fraction is $0.82$ ($95\%$ paired-bootstrap CI $[0.77, 0.86]$, $n{=}500$ pairs), with attention-late and MLP-late unique contributions of $0.17$ and $0.57$ respectively. This asymmetry rules out a uniform passthrough interpretation: late MLPs carry roughly $3\times$ the bottleneck of late attention, which would not occur under generic sequential dependence. The intervention thus identifies a path-specific natural-indirect effect through late-layer transformations, not through the residual-skip path.

\begin{figure}[h]
  \centering
  \includegraphics[width=\linewidth]{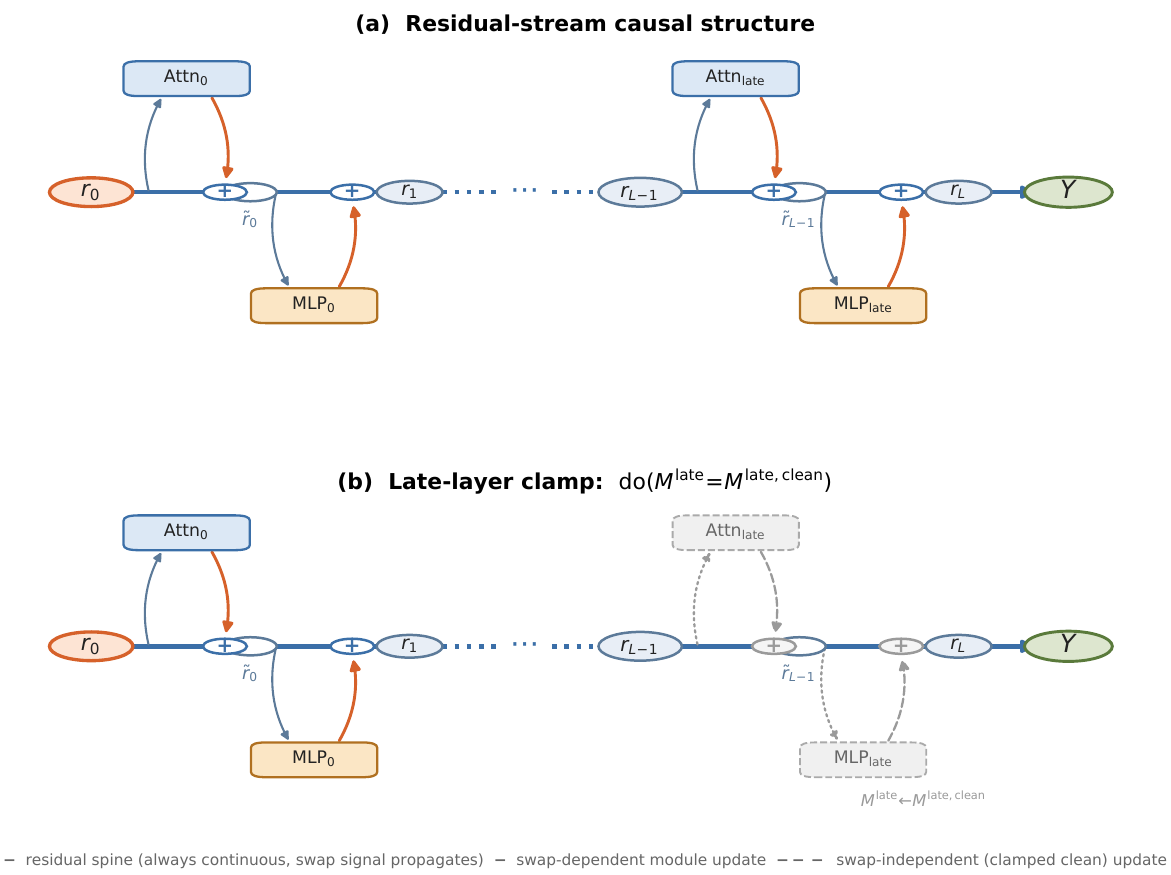}
  \caption{\textbf{Residual-stream causal DAG.} \textbf{(a)} Continuous blue residual spine $r_0 \to \tilde r_0 \to \cdots \to r_L \to Y$; modules read from the spine and write back via $+$ nodes. \textbf{(b)} Late-layer clamp $\mathrm{do}(M^{\mathrm{late}}\!=\!M^{\mathrm{late, clean}})$: late modules' input branches severed (gray dotted) and write-backs replaced by clean updates (gray dashed); the spine remains continuous, preserving the residual-skip path while removing the swap-dependent mediated effect.}
  \label{fig:dag}
\end{figure}

\section{Downstream Application: IGSD-Ranked Group Masking}
\label{app:masking-demo}

This section answers a narrow question: do IGSD ranks track \emph{functional contribution} in a way that is detectable on a downstream task? Group masking of layer-local components is a direct downstream application of importance rankings \citet{garcia2025extracting}. They use it as the masking class inside a KL-pruning circuit-extraction pipeline. We adapt their masking class (we use \emph{zero} outside-$K$ masking, not their KL-pruning extractor) as an evaluation protocol for IGSD on GPT-2 small (12 layers, 24 layer-local groups: 12 Attn + 12 MLP). This section is therefore an external check. For each task we keep the top-$K$ groups under a given ranker, zero-ablate the remaining $24-K$ groups at the answer position via TransformerLens \texttt{hook\_attn\_out} / \texttt{hook\_mlp\_out}, and measure forced-choice top-1 accuracy and logit-difference retention $\mathbb{E}[\mathrm{LD}_{\mathrm{masked}}]/\mathbb{E}[\mathrm{LD}_{\mathrm{clean}}]$. We pre-registered the headline set $K\in\{2,4,8\}$ and report all of it.

\textbf{Swap-top-$K$} (IGSD) and \textbf{zero-top-$K$} (zero-ablation magnitude) answer different questions: swap ranks groups by their \emph{contribution} to the realized function under matched-pair interchange; zero ranks groups by \emph{fragility}, i.e.\ how much removing the group damages the model's behavior on the unintervened distribution. The two coincide when contribution and fragility are aligned (single-circuit, late-stage tasks) and dissociate when the unintervened model relies on an early routing channel that is removed by the outside-$K$-zero step itself. Outside-$K$-zero is therefore a deliberately strict masking class: monotonicity in $K$ is not guaranteed, since enlarging the kept set $S_K$ can leave essential routing groups (e.g.\ $\mathrm{Attn}_{\text{L0}}$) in the zeroed complement.

Four rankers: (i) \textbf{swap}: groups ordered by IGSD swap index $\mathrm{ST}^{\mathrm{swap}}$; (ii) \textbf{zero}: groups ordered by mean drop in clean-foil logit difference under single-group zero ablation; (iii) \textbf{magnitude}: mean $\ell_2$ norm of the group output at the answer token; (iv) \textbf{random}: enumerate all $\binom{24}{K}$ subsets at $K\!\in\!\{1,2\}$, sample 100 subsets at $K\!\geq\!4$. Confidence intervals are 1000-sample bootstrap; swap--zero comparisons use paired bootstrap.

\paragraph{Factual recall (clean acc $0.912$, clean LD $2.71$, $n{=}500$).}
Table~\ref{tab:masking-factual} shows the headline. At $K{=}2$, the swap-ranked pair $\{\mathrm{MLP}_{\text{L0}}, \mathrm{Attn}_{\text{L9}}\}$ retains $78.5\%$ of the clean LD and $70.0\%$ of the top-1, against $26.4\%$ and $57.4\%$ for zero-ranked $\{\mathrm{Attn}_{\text{L0}}, \mathrm{MLP}_{\text{L8}}\}$. The paired-bootstrap difference is $+12.6$ percentage points of accuracy ($95\%$ CI $[+6.6, +18.6]$) and $+0.521$ of LD-ratio ($[+0.372, +0.678]$). The IGSD-selected pair recovers the factual relation-frame (early MLP) plus the late retrieval head documented by \citet{meng2022locating, geva2023dissecting}; zero-ablation instead surfaces $\mathrm{Attn}_{\text{L0}}$, the embedding-channel attention whose removal destroys the unintervened distribution but which does not emerge as a task-specific contributor under the swap estimand.

\begin{table}[h]
  \centering\small
  \caption{\textbf{Factual recall, GPT-2 small, $n{=}500$.} Top-1 accuracy / LD-ratio when keeping top-$K$ groups under each ranker, zero-ablating the rest at the answer position. \textbf{Bold:} IGSD-swap cells where the paired-bootstrap $95\%$ CI of the swap$-$zero gap excludes $0$ in IGSD's favor (CIs reported in prose). All three pre-registered $K\!\in\!\{2,4,8\}$ reported.}
  \label{tab:masking-factual}
  \begin{tabular}{l c c c}
    \toprule
    Ranker & $K{=}2$ & $K{=}4$ & $K{=}8$ \\
           & {\scriptsize acc / LD} & {\scriptsize acc / LD} & {\scriptsize acc / LD} \\
    \midrule
    swap (IGSD) & $\mathbf{.700}\,/\,\mathbf{.785}$ & $.754\,/\,.502$ & $.816\,/\,\mathbf{.764}$ \\
    zero        & $.574\,/\,.264$ & $.780\,/\,.449$ & $.814\,/\,.636$ \\
    magnitude   & $.642\,/\,.376$ & $.642\,/\,.360$ & $.814\,/\,.636$ \\
    random      & $.595\,/\,.267$ & $.631\,/\,.356$ & $.677\,/\,.400$ \\
    \bottomrule
  \end{tabular}
\end{table}

\paragraph{Parameter-reduction interpretation.}
Because zeroing $\mathrm{hook\_attn\_out}_\ell$ or $\mathrm{hook\_mlp\_out}_\ell$ removes the entire residual contribution of that layer-local block, the masking intervention is arithmetically equivalent to setting the corresponding Attn or MLP block's parameters to zero in the forward graph and skipping its compute. The pruning unit is therefore an entire attention block or MLP block at a given layer, not a head- or neuron-level slice; the intervention analytically certifies the inference-time behavior of the corresponding structurally pruned model, so the parameter and FLOP savings can be computed directly from the keep set without retraining or re-evaluation. For GPT-2 small, each Attn block carries $4{\cdot}768^2 + 3{\cdot}768 + 768 \approx 2.36\mathrm{M}$ parameters and each MLP block carries $2{\cdot}768{\cdot}3072 + 3072 + 768 \approx 4.72\mathrm{M}$; the 24 layer-local groups together hold $85.0\mathrm{M}$ parameters ($68.3\%$ of the $124.5\mathrm{M}$ total), with token/position embeddings (which are tied to the output projection) and small layer-normalization parameters making up the remainder.

\begin{table}[h]
  \centering\small
  \caption{\textbf{Implied parameter savings under IGSD-ranked structural pruning of layer-local Attn/MLP blocks (factual recall, GPT-2 small).} Counts derived analytically from the $K$-keep sets in Table~\ref{tab:masking-factual} via the masking--pruning equivalence. ``Block params'' refers to the $85.0$M parameters in the 24 Attn/MLP blocks; ``model params'' refers to the full $124.5$M-parameter checkpoint. Accuracy and LD-ratio are reproduced from Table~\ref{tab:masking-factual}.}
  \label{tab:param-savings-factual}
  \begin{tabular}{c c c c c c c}
    \toprule
    $K$ & A kept / pruned & M kept / pruned & params pruned & \% block & \% model & acc / LD-ratio \\
    \midrule
    $2$ & $1 / 11$ & $1 / 11$ & $77.9$M & $91.6\%$ & $62.6\%$ & $0.700\,/\,0.785$ \\
    $4$ & $2 / 10$ & $2 / 10$ & $70.8$M & $83.3\%$ & $56.9\%$ & $0.754\,/\,0.502$ \\
    $8$ & $3 /\phantom{0}9$ & $5 /\phantom{0}7$ & $54.3$M & $63.9\%$ & $43.6\%$ & $0.816\,/\,0.764$ \\
    \bottomrule
  \end{tabular}
\end{table}

The headline operating point is $K{=}8$: retaining the IGSD top-$8$ blocks preserves $81.6\%$ top-$1$ accuracy ($89.5\%$ of the $0.912$ clean accuracy) and a $76.4\%$ LD-ratio while removing $54.3\mathrm{M}$ parameters ($43.6\%$ of the model, $63.9\%$ of layer-local block parameters). Smaller keep sets ($K{=}4, K{=}2$) extend the trade-off curve toward more aggressive reduction at the cost of retention. This complements rather than competes with the deletion-based extraction of \citet{garcia2025extracting}: their KL-pruning prioritizes deletion-fragility, IGSD prioritizes interchange contribution, and the two ranking criteria can be combined into a content-aware seed set for downstream pruning extractors.

IGSD ranks groups by their causal contribution under a matched-pair counterfactual. Zero-ablation ranks them by sensitivity under unmatched lesion. Evaluated under the group-masking application of \citet{garcia2025extracting}, IGSD shows a robust accuracy advantage at $K{=}2$ (paired CI $[{+}0.066,{+}0.186]$) and a robust LD-ratio advantage at $K{=}8$ ($[{+}0.082,{+}0.169]$). The $K{=}8$ accuracy gap is small and not significant. Under the masking-to-pruning analytic equivalence, the IGSD top-$8$ keep set yields a parameter-removal certificate of $54.3$M parameters ($43.6\%$ of the model), at $89.5\%$ retention of clean factual-recall accuracy in the evaluation reported here (Table~\ref{tab:param-savings-factual}). This is consistent with the swap-zero prediction of §\ref{sec:swap-zero}: IGSD ranks track functional contribution, not ablation fragility, under this downstream masking evaluation.

\end{document}